%% file: main.tex
\documentclass[10pt]{article} 
\PassOptionsToPackage{table}{xcolor}
\usepackage[preprint]{includes/conf}
\usepackage[normalem]{ulem}
\usepackage{wrapfig}
\usepackage{graphicx}
\usepackage{rotating}
\usepackage{empheq}
\usepackage{caption}

\newcommand{\zoomedImage}[4][]{
\begin{tikzpicture}[spy using outlines={circle,red,magnification=2,size=0.8cm, connect spies}]
  \node (img) {\includegraphics[#1]{#2}};
  \spy on (#3) in node [left] at (#4);
\end{tikzpicture}%
}

\input{includes/mathcommands}

\input{includes/includes}

\newcommand{\scaleY}{0.40\textwidth}
\newcommand{\sidecap}[1]{{\begin{sideways}\parbox{\scaleY}{\centering #1}\end{sideways}}}

\title{Consistency Regularised Gradient Flows for Inverse Problems}
\newcommand{\equalcontribstar}{\textsuperscript{*}}
\newcommand{\equalcontribdagger}{\textsuperscript{\ensuremath{\dagger}}}

\author{%
\normalfont
\begin{minipage}{0.98\textwidth}
\centering
\textbf{Alessio Spagnoletti}$^{1}$\equalcontribstar \quad
\textbf{Tim Y. J. Wang}$^{2}$\equalcontribstar \\ [0.35em]
\textbf{Marcelo Pereyra}$^{3}$\equalcontribdagger \quad
\textbf{O. Deniz Akyildiz}$^{2}$\equalcontribdagger
\\[0.35em]
$^{1}$Laboratoire MAP5, UMR 8145, Université Paris Cité, CNRS
\\
$^{2}$Department of Mathematics, Imperial College London
\\
$^{3}$Heriot-Watt University, MACS \& Maxwell Institute
\\[0.35em]
\small
\texttt{alessio.spagnoletti@etu.u-paris.fr \quad tw1320@ic.ac.uk}
\\
\texttt{M.Pereyra@hw.ac.uk \quad deniz.akyildiz@imperial.ac.uk}
\\[0.25em]
\footnotesize
\equalcontribstar Joint first authors \,\,\, \equalcontribdagger Equal contribution
\end{minipage}%
}

\begin{document}

\maketitle
\begingroup
\renewcommand{\thefootnote}{\ensuremath{*,\dagger}}
\endgroup
\input{sections/0-abstract.tex}
\input{sections/1-intro-background.tex}
\input{sections/2-gradflowintro.tex}
\input{sections/method.tex}
\input{sections/3-prior-method.tex}
\input{sections/4-likelihood-algo.tex}
\input{sections/5-experiments.tex}
\input{sections/6-conclusion.tex}

\input{sections/7-acknow}

\bibliographystyle{plainnat}  
\bibliography{bibliography} 

\newpage
\appendix
\input{appendix.tex}

\end{document}

%% file: includes/mathcommands.tex
\usepackage{amsmath,amsfonts,bm}

\def\dd{{\mathrm{d}}}

\def\sZ{{\mathsf{Z}}}
\def\sC{{\mathsf{C}}}

\def\1{\bm{1}}

\def\vc{{\bm{c}}}

\def\vn{{\bm{n}}}

\def\vu{{\bm{u}}}

\def\vw{{\bm{w}}}
\def\vx{{\bm{x}}}
\def\vy{{\bm{y}}}
\def\vz{{\bm{z}}}

\DeclareMathAlphabet{\mathsfit}{\encodingdefault}{\sfdefault}{m}{sl}
\SetMathAlphabet{\mathsfit}{bold}{\encodingdefault}{\sfdefault}{bx}{n}

\newcommand{\Id}{\mathrm{Id}}
\newcommand{\E}{\mathbb{E}}

\newcommand{\R}{\mathbb{R}}

\newcommand{\KL}{D_{\mathrm{KL}}}
\newcommand{\DF}{D_{\mathrm{F}}}

\DeclareMathOperator*{\argmax}{arg\,max}
\DeclareMathOperator*{\argmin}{arg\,min}

%% file: includes/includes.tex
\usepackage{framed, multido}
\usepackage[utf8]{inputenc}
\usepackage{lmodern}
\usepackage[T1]{fontenc}    
\usepackage{booktabs}       
\usepackage{nicefrac}       
\usepackage{microtype}      
\usepackage{lipsum}
\usepackage{adjustbox}
\usepackage[table]{xcolor}

\usepackage{amsmath,amsthm,amssymb, amsfonts}
\usepackage{pythonhighlight}
\usepackage{setspace}
\usepackage{svg} 
\usepackage{float}
\usepackage{enumerate, setspace}

\usepackage{tikz}
\usetikzlibrary{spy}
\usepackage{tikz-cd}
\usetikzlibrary{bayesnet,positioning}
\usepackage{wrapfig}
\usepackage{soul}
\sethlcolor{yellow}
\usepackage{subcaption}
\usepackage{algorithm}
\usepackage{algpseudocode}
\usepackage{natbib}
\usepackage[makeroom]{cancel}
\usepackage{pifont}
\definecolor{cmarkgreen}{RGB}{0, 150, 0}
\definecolor{xmarkred}{RGB}{200, 0, 0}
\newcommand{\cmark}{\textcolor{cmarkgreen}{\ding{51}}}
\newcommand{\xmark}{\textcolor{xmarkred}{\ding{55}}}

\definecolor{astral}        {RGB}{46,116,181}
\definecolor{cb-blue}       {RGB}{70, 130, 180}
\definecolor{orange}        {RGB}{214,150, 92}
\definecolor{darkorange}    {RGB}{171, 120, 74}
\definecolor{green}         {RGB}{136,196,136}
\definecolor{darkgreen}{RGB}{34, 95, 78}
\definecolor{lightgreen}{RGB}{248, 245, 238}
\definecolor{darkblue}{RGB}{31, 52, 122}

\usepackage{hyperref}
\hypersetup{
colorlinks=true,
allcolors=darkblue
}
\usepackage{cleveref}
\usepackage{bookmark}
\svgpath{{figures/}}
\usepackage{subcaption}

\makeatletter
\newcommand{\HEADER}[1]{\ALC@it\underline{\textsc{#1}}\begin{ALC@g}}
\newcommand{\ENDHEADER}{\end{ALC@g}}
\makeatother

\theoremstyle{plain}
\newtheorem{theorem}{Theorem}
\newtheorem{lemma}{Lemma}
\newtheorem{corollary}{Corollary}
\newtheorem{proposition}{Proposition}
\newtheorem{assumption}[theorem]{Assumption}

\Crefname{assumption_app}{\textbf{A}\hspace{-0.12cm}}{\textbf{H}\hspace{-3pt}}
\crefname{assumption_app}{\textbf{A}}{\textbf{A}}
\theoremstyle{definition}
\newtheorem{definition}{Definition}

\newtheorem{remark}{Remark}

\usepackage[acronym]{glossaries}

\newacronym{DM}{{{\textsc{\small DM}}}}{Diffusion Model}
\newacronym{GAN}{{{\textsc{\small GAN}}}}{Generative Adversarial Network}
\newacronym{VAE}{{{\textsc{\small VAE}}}}{Variational Autoencoder}
\newacronym{IWAE}{{{\textsc{\small IWAE}}}}{Importance Weighted Variational Autoencoder}
\newacronym{DiffusionVAE}{{{\textsc{\small DiffusionVAE}}}}{ VAE with a Diffusion Prior}
\newacronym{VI}{{{\textsc{\small VI}}}}{Variational Inference}
\newacronym{PSVI}{{{\textsc{\small PSVI}}}}{Particle Semi-implicit Variational Inference}
\newacronym{UIVI}{{{\textsc{\small UIVI}}}}{Unbiased Implicit Variational Inference}
\newacronym{SVI}{{{\textsc{\small SVI}}}}{Semi-implicit Variational Inference}
\newacronym{VQVAE}{{{\textsc{\small VQ-VAE}}}}{Vector-Quantized Variational Autoencoder}
\newacronym{LDM}{{{\textsc{\small LDM}}}}{Latent Diffusion Model}
\newacronym{EBM}{{{\textsc{\small EBM}}}}{Energy-based Model}
\newacronym{IPLD}{{{\textsc{\small IPLD}}}}{Interacting Particle Latent Diffusion}
\newacronym{DDPM}{{{\textsc{\small DDPM}}}}{Denoising Diffusion Probabilistic Model}
\newacronym{LSGM}{{{\textsc{\small LSGM}}}}{Latent Score-based Generative Models}
\newacronym{SVGD}{{{\textsc{\small SVGD}}}}{Stein Variational Gradient Descent}
\newacronym{MMD}{{{\textsc{\small MMD}}}}{Maximum Mean Discrepancy}
\newacronym{ELBO}{{{\textsc{\small ELBO}}}}{Evidence Lower BOund}
\newacronym{LDDM}{{{\textsc{\small LDDM}}}}{Latent Denoising Diffusion Model}
\newacronym{LVM}{{{\textsc{\small LVM}}}}{Latent Variable Model}
\newacronym{PGD}{{{\textsc{\small PGD}}}}{Particle Gradient Descent}
\newacronym{MPGD}{{{\textsc{\small MPGD}}}}{Momentum Particle Gradient Descent}
\newacronym{EM}{{{\textsc{\small EM}}}}{Expectation-Maximisation}
\newacronym{FID}{{{\textsc{\small FID}}}}{Fréchet Inception Distance}
\newacronym{GMM}{{\textsc{\small GMM}}}{Gaussian Mixture Model}
\newacronym{SWD}{{\textsc{\small SWD}}}{Sliced Wasserstein Distance}
\newacronym{LPIPS}{\textsc{\small LPIPS}}{Learned Perceptual Image Patch Similarity}
\newacronym{PSNR}{\textsc{\small PSNR}}{Peak Signal-to-Noise Ratio}
\newacronym{WGF}{\textsc{\small WGF}}{Wasserstein Gradient Flow}
\newacronym{DWGF}{\textsc{\small DWGF}}{Diffusion-regularized Wasserstein Gradient Flow}
\newacronym{PSLD}{\textsc{\small PSLD}}{Posterior Sampling with Latent Diffusion}
\newacronym{RLSD}{\textsc{\small RLSD}}{Repulsive Latent Score Distillation}

%% file: sections/0-abstract.tex
\begin{abstract}
Vision-Language Latent Diffusion Models (LDMs)~\citep{rombachhighresolutionimagesynthesis2022} provide powerful generative priors for inverse problems. However, existing LDM-based inverse solvers typically require a large number of neural function evaluations (NFEs) and backpropagation through large pretrained components, leading to substantial computational cost and, in some cases, degraded reconstruction quality. We propose a unified Euclidean-Wasserstein-2 gradient-flow framework that jointly performs posterior sampling and prompt optimization in the latent space through a single flow that aligns the prior and posterior with the observed data. Combined with few-step latent text-to-image models, this formulation enables low-NFE inference without backpropagation through autoencoders. Experiments across several canonical imaging inverse problems show that our method achieves state-of-the-art performance with significantly reduced computational cost.
\end{abstract}

%% file: sections/1-intro-background.tex
\section{Introduction}
\begin{wrapfigure}{r}{0.6\textwidth}
    \vspace{-1.2cm}
    \centering
    \includegraphics[width=0.6\textwidth]{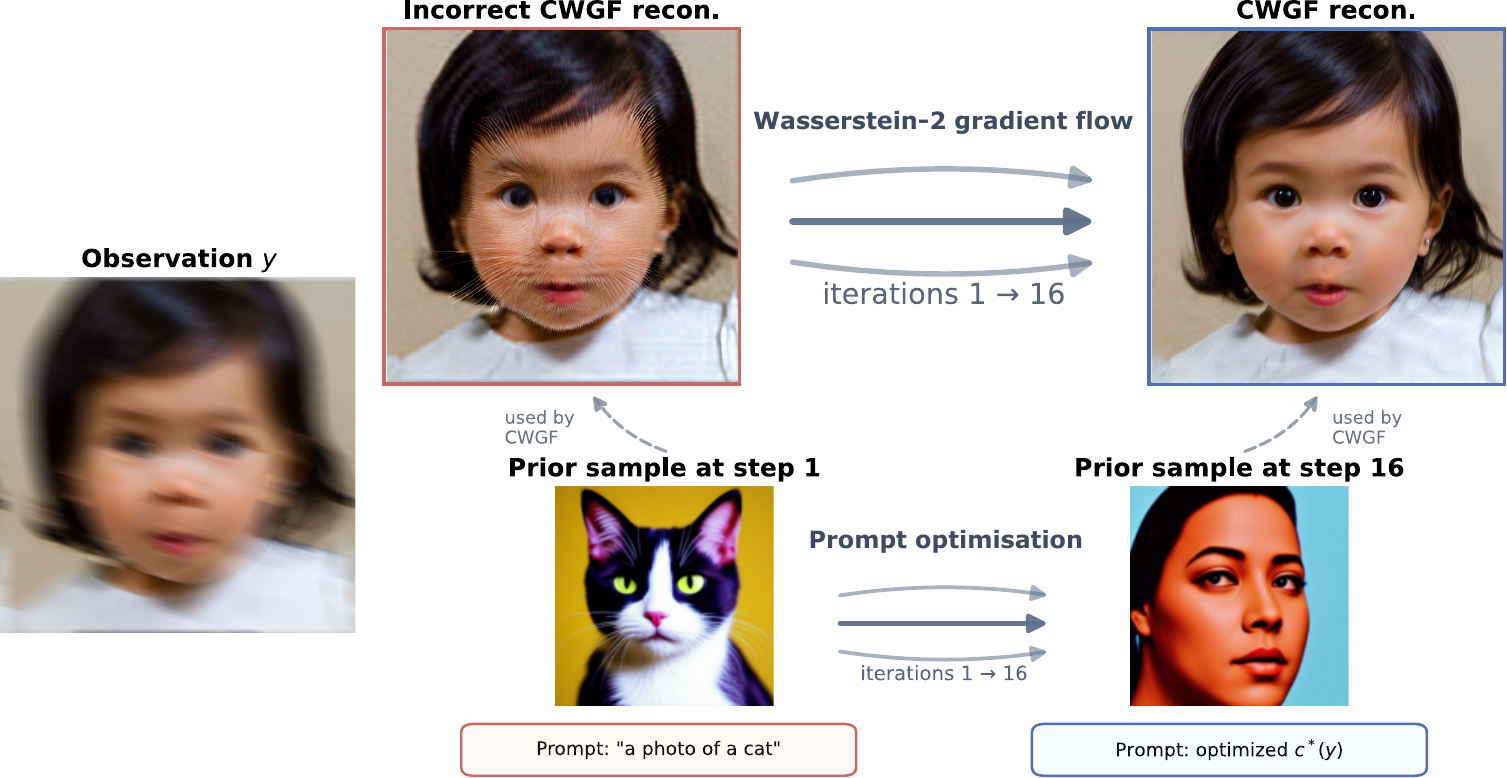}
    \caption{Our Euclidean-Wasserstein flow jointly optimizes the text prompt and yields accurate posterior samples in few NFEs. Zoom in for fine details.}
    \label{fig:intro_teaser}
\end{wrapfigure}
We consider methods for performing inference on a signal $\vx_0 \in \mathbb{R}^n$ from a measurement
\begin{equation}\label{eq:simple_inv}
    \vy=\mathcal{A}(\vx_0)+\vn, \quad \vn \sim \mathcal{N}(0, \sigma_\vy^2 I),
\end{equation}
where $\mathcal{A}: \mathbb{R}^n \to \mathbb{R}^m$ is a known forward operator and $\vn$ is Gaussian noise. Such inverse problems can be effectively addressed in a Bayesian statistical framework \citep{stuart2010inverse}, where inference about $\vx_0$ is based on the posterior distribution
$$p(\vx_0|\vy) \propto p(\vy|\vx_0) p(\vx_0).$$ 
Modern Bayesian imaging techniques increasingly rely on pretrained generative models as priors $p(\vx_0)$, with a likelihood $p(\vy \mid \vx)$ defined at test time. These so-called Plug \& Play (PnP) zero-shot inference methods are a rapidly growing research area, as they require no task-specific fine-tuning and provide effective, general-purpose inverse solvers. Methods based on Diffusion Models (DMs)~\citep{SohlDickstein2015DeepUL, Song2019, Song2020improved, Song2020SDE, Ho2020Denoising,ho2022imagenvideohighdefinition,blattmann2023stable,wan2025} are the current standard, with approaches such as DPS~\citep{chung2023diffusion}, DDRM~\citep{kawar2022denoising}, DiffPIR~\citep{zhudenoisingdiffusionmodels2023}, $\Pi$GDM~\citep{Song2023PseudoinverseGuidedDM}, and TMPD~\citep{boys2024tweedie} for image restoration and~\citep{kwon2025visionxlhighdefinitionvideo,Kwon2025VideoDP} for video. More recently, Latent Diffusion Models (LDMs)~\citep{Rombach2021,Podell2023SDXLIL} have improved efficiency by operating in the latent space of a VAE rather than in pixel space~\citep{Kingma2013AutoEncodingVB, Lopez2020AUTOENCODINGVB}. However, because the likelihood is defined in pixel space, naively applying DM-based solvers to LDMs fails. Instead, the specific properties of the latent space need to be considered, as proposed in PSLD~\citep{routsolvinglinearinverse2023}.

State-of-the-art results are achieved with vision-language LDMs such as Stable Diffusion~\citep{Rombach2021, Podell2023SDXLIL}, which condition generation on semantic information via an (embedded) text prompt $c \in \mathbb{R}^d$ that must be carefully adjusted to the scene. Early methods tuned prompts heuristically (see, e.g., P2L~\citep{Chung2023PrompttuningLD} and TReg~\citep{kim2025regularizationtextslatentdiffusion}), while LATINO-PRO~\citep{Spagnoletti_2025_ICCV} adopts an empirical Bayesian approach, estimating $c$ by maximum marginal likelihood estimation (MMLE) $\hat{c}(\vy) = \arg\max_{c\in\mathbb{R}^d} p_c(\vy)$.

However, leveraging vision-language LDMs for inverse problems is computationally demanding: naive solvers require hundreds of NFEs per posterior sample, and incorporating the likelihood $p(\vy \mid \vx)$ requires evaluating (and possibly differentiating) the latent-to-pixel decoder model, substantially increasing memory footprint. Recent efforts to speed up sampling rely on distillation techniques, including direct distillation~\citep{Luhman2021KnowledgeDI,Zheng2022FastSO}, adversarial distillation~\citep{Wang2022DiffusionGANTG,Sauer2023AdversarialDD}, and variational score distillation~\citep{wangprolificdreamerhighfidelitydiverse2023,yinonestepdiffusiondistribution2024,yin2024improveddistributionmatchingdistillation,Luo2023DiffInstructAU,Salimans2024MultistepDO}. In particular, Consistency Models (CMs)~\citep{songconsistencymodels2023,songdhariwal2023improved,kim2023consistency} have emerged as effective few-step generators, and their formulation enables both distillation from pre-trained DMs and training in isolation via consistency training (CT). CMs learn consistency functions that directly map noise to data, enabling accurate generation in few steps. Several recent works have explored CMs within a PnP sampling framework~\citep{Garber_2025_CVPR, xu2024consistencymodeleffectiveposterior, li2025decoupleddataconsistencydiffusion, Spagnoletti_2025_ICCV, Spagnoletti2025LVTINOLV}, as well as fine-tuning models to sample directly from $p(\vx | \vy)$, as done in CoSIGN \citep{Zhao2024CoSIGNFG} or UD2M~\citep{Mbakam2025LearningFP}.

The state of the art in this context is LATINO-PRO, a CM-based method with automatic prompt tuning that avoids backpropagation through autoencoders. However, its stochastic approximation structure, which alternates sampling and prompt optimization steps, converges slowly and requires over 50 NFEs per sample. Instead, we pursue a unified approach that jointly samples and optimizes prompts in a single flow. Our contributions are as follows:

\begin{itemize}
    \item We propose a novel Euclidean-Wasserstein gradient flow framework for inverse problems with LCMs. This leads to a fast, provably convergent method for joint posterior sampling and prompt optimization, requiring gradients only w.r.t. the prompt.
    
    \item Our method achieves SOTA performance in only $16$ NFEs, significantly outperforming competing strategies, even under severely misspecified prompts.
\end{itemize}

\section{Background}
\label{sec:background}

\paragraph{Notations.} We use $\mathsf{X} = \mathbb{R}^n$ to denote the ambient (pixel) space, $\mathsf{Y}\subseteq \R^m$ the observation space, and $\mathsf{Z} = \mathbb{R}^l$ the latent space in which the prior is defined. We denote the space of probability measures on $\mathbb{R}^d$ with finite $q$-th moments as $\mathcal{P}_q(\mathbb{R}^d)$; in this work, we focus on the case $q=2$. We use $\delta\mathcal{F}[\mu]/\delta\mu$ to denote the first $L^2$-variation of functional $\mathcal{F}$ at $\mu$.

\paragraph{Latent diffusion models.} LDMs operate in the latent space of VAE $(\mathcal E_{\phi^-},\mathcal D_{\phi^-})$~\citep{Kingma2013AutoEncodingVB,rombachhighresolutionimagesynthesis2022}. They are defined by a forward (noising) process on $\sZ$ that transports the clean embeddings $\vz_0\sim p_c(\vz_0)$ to a reference distribution (e.g., $\mathcal N(0,I)$), and a corresponding reverse-time process that transports back to $p_c(\vz_0)$, described by the SDEs~\citep{SohlDickstein2015DeepUL,Ho2020Denoising,Song2020SDE,rombachhighresolutionimagesynthesis2022}:
\begin{equation*}
    \dd\vz_t = -\tfrac{\beta(t)}{2} \vz_t \,dt + \sqrt{\beta(t)} \, \dd\vw \quad \quad \dd\vz_t = \left( -\tfrac{\beta(t)}{2} \vz_t - \beta(t) \nabla_{\vz_t} \log p_{c,t}(\vz_t) \right) \dd t + \sqrt{\beta(t)} \, \dd\bar{\vw},
\end{equation*}
where the (intractable) score $\nabla_{\vz_t}\log p_{c,t}(\vz_t)$ controls the drift and $\beta(t)$ the transitions:
\begin{equation}
    q(\vz_t \mid \vz_0)
    =\mathcal N\!\big(\vz_t;\sqrt{\alpha_t}\,\vz_0,\sigma_t^2 I\big),\label{eq:latent_forward_kernel}
    \qquad
    \alpha_t = \exp\!\left(-\int_0^t \beta(s)\,\dd s\right),
    \qquad
    \sigma_t^2 = 1-\alpha_t.
\end{equation}
In practice, the score is approximated by a neural network $s_{\theta,c}(\vz_t,t)\ \approx\ \nabla_{\vz_t}\log p_{c,t}(\vz_t)$ via score matching~\citep{Vincent2011}, and the reverse SDE is replaced by a \emph{probability flow ODE} (PF-ODE) equivalent in marginals and more efficient to compute~\citep{Song2020SDE}:
\begin{equation}
    \frac{\dd \vz_t}{\dd t}
    = f(\vz_t,t) - \frac{1}{2}g(t)^2\, s_{\theta,c}(\vz_t,t),
    \label{eq:pf_ode_generic}
\end{equation}
for suitable drift $f$ and diffusion coefficient $g$ (e.g., in the Variance Preserving (VP) case, $f(\vz_t,t)=-\tfrac12\beta(t) \vz_t$ and $g(t)=\sqrt{\beta(t)}$~\citep{Ho2020Denoising,Song2020SDE}).

\paragraph{Consistency models.}
CMs enable few-step generation by replacing iterative SDE/ODE sampling with a learned \emph{consistency function} that maps a noisy state at time $t$ directly to a clean sample by transporting along PF-ODE trajectories~\citep{songconsistencymodels2023,songdhariwal2023improved}.
LCMs use a prompt-conditioned consistency function $g_{\theta}:\ (\vz_t,t,c)\ \mapsto\ \hat{\vz}_0$ that predicts the clean embedding on the trajectory. Informally, for any $\{\vz_t\}_{t\in[\eta,T]}$, we have
\begin{equation}
    g_{\theta}(\vz_t,t,c)\;=\;g_{\theta}(\vz_{t'},t',c)
    \qquad \forall\, t,t'\in[\eta,T].
    \label{eq:self_consistency}
\end{equation}
CMs can be trained in isolation or by distilling a pretrained diffusion model, and can achieve excellent quality in as few as $4-8$ NFEs~\citep{songconsistencymodels2023,yinonestepdiffusiondistribution2024,yin2024improveddistributionmatchingdistillation}.

At inference, given a decreasing time sequence $T=t_0>t_1>\cdots>t_{K-1}>\eta$ and a purely noisy sample $\hat{\vz}_{t_0}\sim\mathcal N(0,I)$, multistep CM sampling alternates between (i) denoising with $g_{\theta}$ and (ii) re-noising to the next time point via the noising process~\eqref{eq:latent_forward_kernel}~\citep{songconsistencymodels2023}:
\begin{align}
    \hat{\vz}_0 &\leftarrow g_{\theta}(\hat{\vz}_{t_k},t_k,c), \\
    \hat{\vz}_{t_{k+1}} &\leftarrow \sqrt{\alpha_{t_{k+1}}}\,\hat{\vz}_0 + \sigma_{t_{k+1}}\,\varepsilon,
    \qquad \varepsilon\sim\mathcal N(0,I).
    \label{eq:cm_multistep}
\end{align}
This ``denoise--renoise'' recursion underlies Latent Consistency Models (LCMs) as fast priors for inverse problems. In practice, LCMs are typically distilled from Stable Diffusion LDMs in latent space~\citep{rombachhighresolutionimagesynthesis2022,Luo2023LatentCM}, often using lightweight adapters such as LoRA~\citep{Hu2021LoRALA,Luo2023LCMLoRAAU}.

%% file: sections/2-gradflowintro.tex
\section{A Euclidean-Wasserstein Gradient Flow Approach}\label{sec:Method}
Our proposed Consistency-regularised Wasserstein Gradient Flow (CWGF) simultaneously optimises the prompt embedding by maximum marginal likelihood estimation and computes the associated empirical Bayesian posterior via a Wasserstein-2 gradient flow in latent space. We formulate our method as a continuous-time gradient flow over probability distributions and text embeddings, and derive efficient discretisations through an LCM in VAE latent space.

\textbf{Bayesian model.} Recall that we seek to perform inference on $\vx \in \mathsf{X}$ from an observation $\vy = \mathcal A(\vx) + \epsilon$, where $\mathcal A:\mathsf{X}\to \mathsf{Y}$ is a known linear operator and $\epsilon\sim \mathcal{N}(0, \sigma_\vy^2 I)$ is measurement noise. We further assume that $\vx$ is generated from a latent variable $\vz\in \sZ$ via a pretrained VAE decoder $p_{\phi^-}(\vx|\vz)$. The prior on $\vz$ is parametrised by a text prompt embedding $c\in \sC$ encoded by a CM $p_c(\vz)$~\citep{Luo2023LatentCM,yinonestepdiffusiondistribution2024}. From Bayes' theorem:
\begin{equation}\label{eq:main_model}
    p_c(\vx,\vz|\vy)=\frac{p(\vy|\vx)p_{\phi^-}(\vx|\vz)p_c(\vz)}{p_c(\vy)},
\end{equation}
with data likelihood $p(\vy|\vx):=\mathcal{N}(\vy;\mathcal A(\vx),\sigma_\vy^2 I)$ and where $p_c(\vy)$ is the marginal likelihood of $c$.

\textbf{Optimisation objective in \texorpdfstring{$\sC\times \mathcal{P}_2(\sZ)$}{CxP2(Z)}}. Our method seeks to solve optimisation problems defined on the product space $\sC\times \mathcal{P}_2(\sZ)$, where $\sC$ is the prompt embedding space and $\mathcal{P}_2(\sZ)$ the space of probability measures on the latent space $\sZ$ with finite second moment. We consider optimisation problems of the form:
\begin{equation}\label{eq:main-joint-opt}
    (c_\star, \mu_\star) = \argmin_{(c,\mu) \in \sC \times \mathcal{P}_2(\sZ)} \mathcal{F}[c,\mu]:=\mathcal{R}[c,\mu]+\mathcal{L}[\mu],
\end{equation}
where $\mathcal{R}$ and $\mathcal{L}$ are functionals defined on $\sC\times \mathcal{P}_2(\sZ)$ and $\mathcal{P}_2(\sZ)$ respectively. While this formulation (hence our method) can be quite general, we focus on the case where $\mathcal{R}$ and $\mathcal{L}$ are the KL divergence and negative log-likelihood respectively:
\begin{equation}\label{eq:loss-choices}
    \mathcal{R}[c,\mu]=\KL(\mu\|p_c), \qquad \mathcal{L}[\mu]:=-\mathbb{E}_\mu[\log p(\vy|\vz)],
\end{equation} 
with $\mathcal{R}$ being a regularisation term encouraging the latent distribution $\mu$ to be close to the CM-induced prior $p_c$, and $\mathcal{L}[\mu]$ being the negative log-likelihood term that matches the observed data $\vy$ through $p(\vy|\vz):=\int p(\vy|\vx)p_{\phi^-}(\vx|\vz)\dd \vx$. This specific choice of $\mathcal{R}$ and $\mathcal{L}$ leads to empirical Bayesian inference with MMLE~\citep{dempster1977maximum} for the model in \eqref{eq:main_model}, i.e., to find $c_\star \in \argmax_{c\in \sC} \log p_c(\vy)$, where $p_c(\vy):=\int p_c(\vx,\vy,\vz)\dd \vx \dd \vz$~\citep{kuntz23a} and $\mu_\star = p_{c_{\star}}(\vz|\vy)$. However, our framework can be applied to other choices of $\mathcal{R}$ and $\mathcal{L}$, such as those arising from generalised Bayesian inference~\citep{bissiri2016general}.

\textbf{A Gradient Flow in \texorpdfstring{$\sC\times \mathcal{P}_2(\sZ)$}{CxP2(Z)}}. Similar to how gradient flow $\dot{x}_\tau=-\nabla f(x_\tau)$ in $\mathbb{R}^d$ minimises a function $f:\R^d\to\R$ in the Euclidean space, we seek an analogue of this flow applicable to $\mathcal{F}[c,\mu]$, which features gradients obtained by endowing $\sC$ with the Euclidean geometry and $\mathcal{P}_2(\sZ)$ with the Wasserstein-2 one (cf.~\citet{kuntz23a} and Appendix~\ref{app-sec:background-w2}):
\begin{empheq}[left=\empheqlbrace]{align}
    \dot{c}_\tau
    &= -\nabla_c \mathcal{R}[c_\tau,\mu_\tau]
    \label{eq:prompt-cont-flow}
    \\
    \dot{\mu}_\tau
    &=
    -\mathrm{grad}_{W_2}\mathcal{R}[c_\tau,\mu_\tau]
    -\mathrm{grad}_{W_2}\mathcal{L}[\mu_\tau]
    \label{eq:distr-cont-flow}.
\end{empheq}
where $\mathrm{grad}_{W_2}\mathcal{F}$ is the Wasserstein-2 ($W_2$) gradient of a functional $\mathcal{F}$~\citep[Chapter 4]{figalli2021invitation} defined via the first variation~\citep[Section 7.2]{santambrogio2015optimal}: $\mathrm{grad}_{W_2}\mathcal{F}[\mu](\vz) = -\nabla \cdot \left(\mu \nabla_\vz ({\delta \mathcal{F}}/{\delta \mu})(\vz)\right)$. Along the flow $(c_\tau, \mu_\tau)$, the value of $\mathcal{F}[c_\tau,\mu_\tau]$ is non-increasing and we recover the optimal solution $(c_\star, \mu_\star)$ at stationarity~\citep[Theorem 2]{kuntz23a} (cf. Appendix~\ref{app:proof-prop-opt-cond} for the proof.)
\begin{theorem}[Optimality Conditions]\label{thm:1st-opt-cond}
     Under the loss \eqref{eq:loss-choices}, the gradient $\nabla \mathcal{F}[c,\mu]:=(\nabla_c \mathcal{R}[c,\mu], \mathrm{grad}_{W_2}\mathcal{F}[c,\mu])$ vanishes if and only if we attain the posterior $\mu=p_{c_\star}(\vz|\vy)$ and the marginal likelihood stationary point $\nabla_c \log p_c(\vy)=0$, where $p_c(\vy)=\int p_c(\vx,\vy,\vz)\dd \vx \dd \vz$.
\end{theorem}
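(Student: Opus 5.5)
The plan is to rewrite $\mathcal{F}$ as a KL divergence to the unnormalised joint, in the spirit of \citet[Theorem 2]{kuntz23a}, and then read off both optimality conditions.

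\textbf{Step 1: rewrite the objective.} Writing $p_c(\vz,\vy):=p(\vy|\vz)p_c(\vz)$, the choices \eqref{eq:loss-choices} give
\begin{equation*}
\mathcal{F}[c,\mu]=\int \mu(\vz)\log\frac{\mu(\vz)}{p(\vy|\vz)p_c(\vz)}\,\dd\vz
=\KL\big(\mu\,\|\,p_c(\cdot|\vy)\big)-\log p_c(\vy).
\end{equation*}
Here $p_c(\vz|\vy)=p(\vy|\vz)p_c(\vz)/p_c(\vy)$, and $p_c(\vy)=\int p(\vy|\vz)p_c(\vz)\dd\vz$ coincides with the marginal likelihood in the statement once $\vx$ is integrated out.

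\textbf{Step 2: the $W_2$ component.} The first variation is
\begin{equation*}
\frac{\delta\mathcal{F}}{\delta\mu}(\vz)=\log\mu(\vz)-\log p_c(\vz)-\log p(\vy|\vz)+1,
\end{equation*}
so
\begin{equation*}
\mathrm{grad}_{W_2}\mathcal{F}[c,\mu]=-\nabla\cdot\Big(\mu\,\nabla_\vz\log\frac{\mu}{p_c(\cdot|\vy)}\Big).
\end{equation*}
This vanishes, with the identity taken in the weak sense in the tangent space, if and only if $\nabla_\vz\log\big(\mu/p_c(\cdot|\vy)\big)=0$ holds $\mu$-a.e. To see this, pair the equation with the test function $\log(\mu/p_c(\cdot|\vy))$. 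This yields the relative Fisher information
\begin{equation*}
\int\mu\,\big|\nabla\log\big(\mu/p_c(\cdot|\vy)\big)\big|^2=0.
\end{equation*}
Assuming $\mu$ has a positive density on the connected space $\sZ=\R^l$, the ratio $\mu/p_c(\cdot|\vy)$ is constant. Normalisation then forces $\mu=p_c(\cdot|\vy)$. The converse is immediate.

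\textbf{Step 3: the Euclidean component.} The $c$-gradient is
\begin{equation*}
\nabla_c\mathcal{R}[c,\mu]=\nabla_c\mathcal{F}[c,\mu]=-\E_\mu\big[\nabla_c\log p_c(\vz)\big],
\end{equation*}
where the $\mu\log\mu$ and likelihood terms do not depend on $c$. At $\mu=p_c(\cdot|\vy)$, Fisher's identity applies: differentiate $\log p_c(\vy)=\log\int p(\vy|\vz)p_c(\vz)\dd\vz$ under the integral sign. This gives
\begin{equation*}
\nabla_c\log p_c(\vy)=\E_{p_c(\vz|\vy)}\big[\nabla_c\log p_c(\vz)\big].
\end{equation*}
Hence, once Step 2 holds, $\nabla_c\mathcal{R}=-\nabla_c\log p_c(\vy)$. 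The joint gradient therefore vanishes if and only if $\mu=p_c(\vz|\vy)$ and $\nabla_c\log p_c(\vy)=0$. Calling such a $c$ the stationary point $c_\star$ gives the statement. For the converse direction, both conditions imply that each component vanishes by the same identities.

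\textbf{Main obstacle.} The delicate part is regularity rather than algebra. First, we need to differentiate under the integral in $c$, which requires domination of $\nabla_c p_c(\vz)$ for the CM-induced prior, assumed smooth with positive density. Second, the step from a vanishing Wasserstein gradient to $\mu$ being the posterior needs $\mu$ absolutely continuous with positive density and finite entropy, so that the Fisher information argument applies. I would state these as standing assumptions, as is done in \citet{kuntz23a}, and otherwise follow their Theorem 2.
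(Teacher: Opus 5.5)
Your proposal is correct and follows the same route as the paper's proof. The Wasserstein component vanishes exactly when $\mu=p_c(\vz|\vy)$, and at that $\mu$ Fisher's identity turns $\nabla_c\mathcal{R}[c,\mu]$ into the marginal-likelihood gradient, so the two conditions combine. Your version is more careful than the paper's: it gives the relative-Fisher-information argument for the first step and the correct sign $\nabla_c\mathcal{R}=-\nabla_c\log p_c(\vy)$, whereas the paper's display omits the minus (harmless for the vanishing condition).
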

This shows that the objective $\mathcal{F}$ in \eqref{eq:loss-choices} recovers the optimal prompt $c^\star$ and posterior $\mu^\star$ in latent space, matching the MMLE solution of \eqref{eq:main_model}. Under mild conditions (e.g., strong log-concavity of $p_c(\vy,\vz)$), the flow converges exponentially fast to $(c^\star,\mu^\star)$ (see Appendix~\ref{app:proof-exp-conv}).

We now discretize the flow in time and space, yielding a fast algorithm for computing $(c_\star,\mu_\star)$.

\textbf{Time Discretisation of the Gradient Flow \eqref{eq:prompt-cont-flow}-\eqref{eq:distr-cont-flow}}. As a first step to convert~\cref{eq:prompt-cont-flow,eq:distr-cont-flow} into an implementable algorithm, we write out the time-discretised version of the flow in this section. The main challenge in discretising the flow is the structure of the Wasserstein gradient in~\eqref{eq:distr-cont-flow}. To address this, we exploit the additive composition of the $W_2$ gradient in~\eqref{eq:distr-cont-flow} and consider a Lie-Trotter splitting of the gradient flow similar to those in~\citet{clement2011trotter,pmlr-v75-wibisono18a,pmlr-v75-bernton18a}. Together with the prompt update in~\eqref{eq:prompt-cont-flow}, we obtain the following splitting scheme for the flow where one-step updates are performed sequentially for the prompt and the distribution:
\begin{empheq}[left=\empheqlbrace]{align}
c_{\tau + h}
&= c_\tau - h \nabla_c \mathcal{R}[c_\tau, \mu_\tau]
\label{eq:prompt-split-flow}
\\
\bar\mu_{\tau+h}
&= S_{h}^{\mathcal{R},c_\tau} \mu_\tau
\label{eq:split-flow-R}
\\
\mu_{\tau+h}
&= S_{h}^\mathcal{L} \bar\mu_{\tau+h}
\label{eq:split-flow-L}.
\end{empheq}
where $S_t^{\mathcal{R}, c_\tau}$ and $S_t^\mathcal{L}$ are the semigroups associated with the Wasserstein gradient subflows of $\mathcal{R}$ and $\mathcal{L}$ respectively in~\eqref{eq:distr-cont-flow} with the prompt $c_\tau$ fixed. Under standard regularity assumptions, each subflow admits a Lagrangian representation via characteristics~\citep[Chapter~8]{ambrosio2008gradient}: $\dot{\vz}_\tau = -\nabla_\vz ({\delta \mathcal{G}}/{\delta \mu})(\vz_\tau)$, where $\vz_\tau \sim \mu_\tau$ for all $\tau\geq 0$ and $\mathcal{G}\in\{\mathcal{R}, \mathcal{L}\}$. However, updates \eqref{eq:prompt-split-flow}--\eqref{eq:split-flow-L} are still idealised. In order to implement these updates, we next look at efficient spatial discretisations for the two subflows.

%% file: sections/method.tex
\subsection{The prior subflow: kernelised entropy and CM-induced prior score}
\label{sec:prior-subflow}

We now make the $\mathcal R$-subflow in~\eqref{eq:split-flow-R} explicit. To spatially discretise the flow of $\mathcal{R}[c,\mu]=\KL(\mu\|p_c)$, we need to compute the velocity field $-\nabla \delta \mathcal{R}/\delta \mu$ (cf. Lemma~\ref{lemma:first_var_kl})
\begin{equation}
-\nabla_\vz \frac{\delta \mathcal R}{\delta \mu}(\vz)=
\nabla_\vz \log p_c(\vz)-\nabla_\vz \log \mu(\vz).
\label{eq:R-particle-drift}
\end{equation}
Direct simulation via~\eqref{eq:R-particle-drift} is often infeasible, as $\mu^N(\dd \vz):=N^{-1}\sum_{n} \delta_{\vz^{(n)}}(\dd \vz)$ is an empirical measure and $\nabla \log p_c(\vz)$ can be ill-defined in high dimensions~\citep{Song2019}. Instead, we consider approximating both $\nabla \log \mu$ and $\nabla \log p_c$ with a convex average of scores along the diffusion path. To this end, we first define the diffusion path for both $p_c$ and $\mu$ under the VP schedule $\dot\alpha_t=-\beta_t\alpha_t$~\eqref{eq:latent_forward_kernel}, with $\alpha_t\in(0,1]$ and $\sigma_t^2=1-\alpha_t$:
\begin{equation}
    p_{c,t}:=Q_t \ast p_c, \quad \mu_{t} := Q_t \ast \mu, \qquad 
    (Q_t \ast \mu)(\vz_t):= \int q(\vz_t|\vz_0) \mu(\vz_0) \dd\vz_0,
\end{equation}
where $q(\vz_t|\vz_0)=\mathcal N(\vz_t;\sqrt{\alpha_t}\vz_0,\sigma_t^2\mathbf I)$. For $r\in\{\mu,p_c\}$, the marginals $r_t:=Q_t\ast r$ are equal to the laws of the particles following the probability-flow ODEs $\dot{\vz}_s^r=-({\beta_s}/{2})(\vz_s^r+\nabla \log r_s(\vz_s^r;c))$, where $t\in [0,T]$ and $\vz_t^r=\vz_t\sim r_t$. Denoting $G_t^r(\vz_t):=\vz_0^r$ as the flow map obtained by integrating the PF-ODE~\eqref{eq:pf_ode_generic} from time $t$ to $0$, we have the following exact identity (proved in Appendix~\ref{app:proof-prop-exact-flow-map}).
\begin{proposition}[Exact flow-map identity]\label{prop:exact-flow-map-identity}
For every $t>0$ and $s\in[0,t]$, we have for any law $r\in \mathcal{P}_2(\sZ)$, and  $\vz_t\sim r_t(\vz_t)$
\begin{equation}
\frac{\sqrt{\alpha_t} G_t^r(\vz_t)-\vz_t}{1 - \sqrt{\alpha_t}}
=\int_0^t
\gamma_t(s) \nabla_{\vz_s^r} \log r_s(\vz_s^r)\,\dd s, \quad 
\gamma_t(s):=\frac{\sqrt{\alpha_t}}{1+\sqrt{\alpha_t}}
\frac{\beta_s}{2\sqrt{\alpha_s}},
\label{eq:canonical-averaged-score-identity}
\end{equation}
where $\gamma_t(s)\ge 0$ and $\int_0^t\gamma_t(s)\dd s=1$. In particular, this convex average of scores approximates $\nabla\log r(\vz_0^r)$ accurately at small $t$ (cf. Appendix~\ref{app:gaussian-cm-score-error} for a discussion).
\end{proposition}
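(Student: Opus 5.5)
The plan is to integrate the probability-flow ODE in closed form using an integrating factor, and then read off both the identity and the normalisation of the weights. Along the characteristic $s\mapsto \vz_s^r$, which solves $\dot{\vz}_s^r=-(\beta_s/2)\bigl(\vz_s^r+\nabla\log r_s(\vz_s^r)\bigr)$ with terminal value $\vz_t^r=\vz_t$, the linear part is removed by the factor $\alpha_s^{-1/2}$. Indeed, since $\dot\alpha_s=-\beta_s\alpha_s$, we have $\frac{\dd}{\dd s}\alpha_s^{-1/2}=\frac{\beta_s}{2}\alpha_s^{-1/2}$. Hence
\begin{equation*}
\frac{\dd}{\dd s}\Bigl(\frac{\vz_s^r}{\sqrt{\alpha_s}}\Bigr)=\frac{1}{\sqrt{\alpha_s}}\Bigl(\dot{\vz}_s^r+\frac{\beta_s}{2}\vz_s^r\Bigr)=-\frac{\beta_s}{2\sqrt{\alpha_s}}\nabla\log r_s(\vz_s^r).
\end{equation*}

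Integrating over $[0,t]$ and using $\alpha_0=1$ and $G_t^r(\vz_t)=\vz_0^r$ gives
\begin{equation*}
G_t^r(\vz_t)-\frac{\vz_t}{\sqrt{\alpha_t}}=\int_0^t\frac{\beta_s}{2\sqrt{\alpha_s}}\nabla\log r_s(\vz_s^r)\,\dd s .
\end{equation*}
Multiplying by $\sqrt{\alpha_t}$ yields $\sqrt{\alpha_t}G_t^r(\vz_t)-\vz_t=\sqrt{\alpha_t}\int_0^t \frac{\beta_s}{2\sqrt{\alpha_s}}\nabla\log r_s\,\dd s$. Dividing by $1-\sqrt{\alpha_t}>0$, which holds for $t>0$ provided $\beta>0$, gives the claimed representation as a weighted integral of scores.

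For the weights, nonnegativity is immediate from $\beta_s\ge0$. The total mass follows from the same integrating-factor computation applied to the constant $1$:
\begin{equation*}
\int_0^t\frac{\beta_s}{2\sqrt{\alpha_s}}\,\dd s=\alpha_t^{-1/2}-1,
\end{equation*}
so $\sqrt{\alpha_t}\int_0^t\frac{\beta_s}{2\sqrt{\alpha_s}}\,\dd s=1-\sqrt{\alpha_t}$. This shows that the weights that make the identity hold are $\gamma_t(s)=\frac{\sqrt{\alpha_t}}{1-\sqrt{\alpha_t}}\frac{\beta_s}{2\sqrt{\alpha_s}}$, and that these integrate to exactly $1$. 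The normalising constant therefore has to be $1-\sqrt{\alpha_t}$, matching the left-hand side of~\eqref{eq:canonical-averaged-score-identity}. The displayed formula for $\gamma_t(s)$ with $1+\sqrt{\alpha_t}$ in the denominator is not consistent with $\int_0^t\gamma_t(s)\,\dd s=1$, so I would read it as a typo and prove the statement with $1-\sqrt{\alpha_t}$.

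The main obstacle is not the algebra but justifying the flow map itself. We need the PF-ODE to have a unique solution on $[0,t]$ reaching $s=0$, and the marginal-preservation property $\vz_s^r\sim r_s$, for a general $r\in\mathcal P_2(\sZ)$, where $\nabla\log r_s$ may blow up as $s\to0$.

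For $s>0$, $r_s=Q_s\ast r$ is a Gaussian convolution, so it is smooth and positive. Its score is locally Lipschitz and has linear growth, by Tweedie's formula $\nabla\log r_s(\vz)=\bigl(\sqrt{\alpha_s}\,\E[\vz_0|\vz_s=\vz]-\vz\bigr)/\sigma_s^2$ together with the finite second moment. This gives well-posedness on $[\epsilon,t]$ for every $\epsilon>0$, and the identity holds on $[\epsilon,t]$ by the argument above. To reach $s=0$, I would define $G_t^r$ as the limit as $\epsilon\to0$, which is the standard convention for PF-ODE flow maps. The integral in the identity then converges whenever that limit exists, because the equation on $[\epsilon,t]$ expresses the integral over $[\epsilon,t]$ in terms of $\vz_\epsilon^r$. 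If needed, I would add a mild regularity assumption on $r$, such as a Lipschitz score, to guarantee that the limit exists.
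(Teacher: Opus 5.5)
Your proposal is correct and follows the paper's proof essentially step for step: the integrating factor $\alpha_s^{-1/2}$, integration over $[0,t]$ using $\alpha_0=1$, the rescaling, and the normalisation $\int_0^t \frac{\beta_s}{2\sqrt{\alpha_s}}\,\dd s=\alpha_t^{-1/2}-1$. For the rescaling, the paper divides by $\sigma_t^2$ and multiplies by $1+\sqrt{\alpha_t}$, which is the same as your division by $1-\sqrt{\alpha_t}$; you are also right that the $1+\sqrt{\alpha_t}$ in the displayed $\gamma_t$ is a typo, since the paper's own proof arrives at $\gamma_t(s)=\frac{\sqrt{\alpha_t}}{1-\sqrt{\alpha_t}}\frac{\beta_s}{2\sqrt{\alpha_s}}$, and your well-posedness remarks near $s=0$ add rigor the paper omits (it simply assumes the characteristic reaches $s=0$).
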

Proposition~\ref{prop:exact-flow-map-identity} implies that we can approximate the intractable drift~\eqref{eq:R-particle-drift} by a flow-map discrepancy that can be evaluated by a CM
\begin{equation}
    -\widehat{\nabla_\vz\frac{\delta \mathcal{R}}{\delta \mu}}(\vz):= \frac{\sqrt{\alpha_t}}{1 - \sqrt{\alpha_t}}
\left(G_t^{p_c}(\vz_t)-G_t^\mu(\vz_t)\right) \approx \nabla_\vz \log p_c(\vz)-\nabla_\vz \log \mu(\vz).
\label{eq:flow-map-discrepancy}
\end{equation}
The prior origin map $G_t^{p_c}$ is approximated by the consistency model
$g_\theta(\vz_t,t,c)$. For the current particle law $\mu^N$, we approximate the map $G_t^{\mu^N}$ by the denoiser under the VP forward kernel $\mathbb{E}_\mu[\vz_0|\vz_t]\approx\widehat m_t^{\mu,N}(\vz_t)=\sum_{m=1}^N \pi_m(\vz_t;t)\vz_0^{(m)}$ with $\pi_m(\vz_t;t)\propto q(\vz_t|\vz_0^{(m)})$. Such an approximation has a bias of $\mathcal{O}(t)$ for small $t$~\citep{kim2023consistency}. 
Absorbing the factor $\sqrt{\alpha_t}/(1 - \sqrt{\alpha_t})$ into the step size
$\eta_t$, we obtain the practical update
\begin{equation}
\vz_{0,+}^{(n)}=\vz_0^{(n)}
+
\eta_t \left(g_\theta(\vz_t^{(n)},t,c) - \vz_0^{(n)}\right)
+
\eta_t
\left(
\vz_0^{(n)}
-
\sum_{m=1}^N
\pi_{nm}(t)\vz_0^{(m)}
\right),\label{eq:denoiser-discrepancy-update}
\end{equation}
where $\pi_{nm}(t):=\pi_m(\vz_t^{(n)};t)$ is the weight of the $m$-th particle in the denoiser estimate. The first two terms form a relaxation toward the CM output, while the final term is a kernelized repulsion away from the local barycentre of neighboring particles.

%% file: sections/3-prior-method.tex
\subsection{A Score-Based Prompt Flow}
To jointly optimise the prompt $c$, we need to compute the gradient $\nabla_c \KL(\mu\|p_c)$ to implement~\eqref{eq:prompt-split-flow}. Directly differentiating this KL divergence through the prior density $p_c(\vz)$ would require expensive simulation of the reverse SDE/ODE~\citep{songscorebasedgenerativemodeling2021,skreta2025the}. Instead, under the same VP noising path as in the prior subflow and applying \citet[Theorem 2]{songMaximumLikelihoodTraining2021} gives the equivalent KL representation
\begin{equation}\label{eq:score_kl}
    \KL(\mu\|p_c) = \frac{1}{2} \int_{0}^{T} \beta(t) \DF(\mu_t\|p_{c,t}) \dd t + \KL(\mu_T\|p_{c,T}),
\end{equation}
where $T$ is the terminal diffusion time of the schedule in~\eqref{eq:latent_forward_kernel} and we denote the Fisher divergence as $\DF(\mu\|\nu):=\mathbb{E}_{\mu} [\left\| \nabla \log \mu({\vz}) - \nabla\log \nu(\vz) \right\|_{2}^{2}]$ with $\mu_t=Q_t\ast\mu$.
Moreover, using denoising score matching~\citep{Vincent2011,Song2019} and assuming the terminal mismatch $\KL(\mu_T\|p_{c,T})$ is negligible, we can estimate the gradient with
\begin{equation}\label{eq:prompt_gradient}
    \nabla_c \KL(\mu\|p_c) = \frac12 \nabla_c \int_0^T \beta(t)
    \mathbb{E}_{\vz_0\sim\mu}\mathbb{E}_{\vz_t\sim q_t(\cdot\mid\vz_0)} [\|\nabla_{\vz_t} \log q(\vz_t\mid\vz_0) - s_{\theta}(\vz_t,t;c)\|^2_2]\dd t,
\end{equation}
which is approximated using Monte-Carlo in practice. To evaluate the scores $s_{\theta}$, we use the score output from the LCM as a proxy by reparametrising $s_{\theta}(\vz,t;c) = -\sigma_t^{-1}\hat{\epsilon}_\theta(\vz,t;c)$. Since the LCM is initialised from the diffusion model teacher~\citep{songconsistencymodels2023,Luo2023LatentCM}, it can often provide good surrogates for score estimates (we refer to Appendix~\ref{sec:ablate-prompt-score} for a more detailed discussion). Our method is also directly applicable to Consistency Trajectory Models (CTM)~\citep{kim2023consistency}, which directly output score predictions.

\begin{wrapfigure}{r}{0.6\textwidth}
\vspace{-\intextsep}
\vspace{-0.3cm}
    \centering
    \includegraphics[width=1\linewidth]{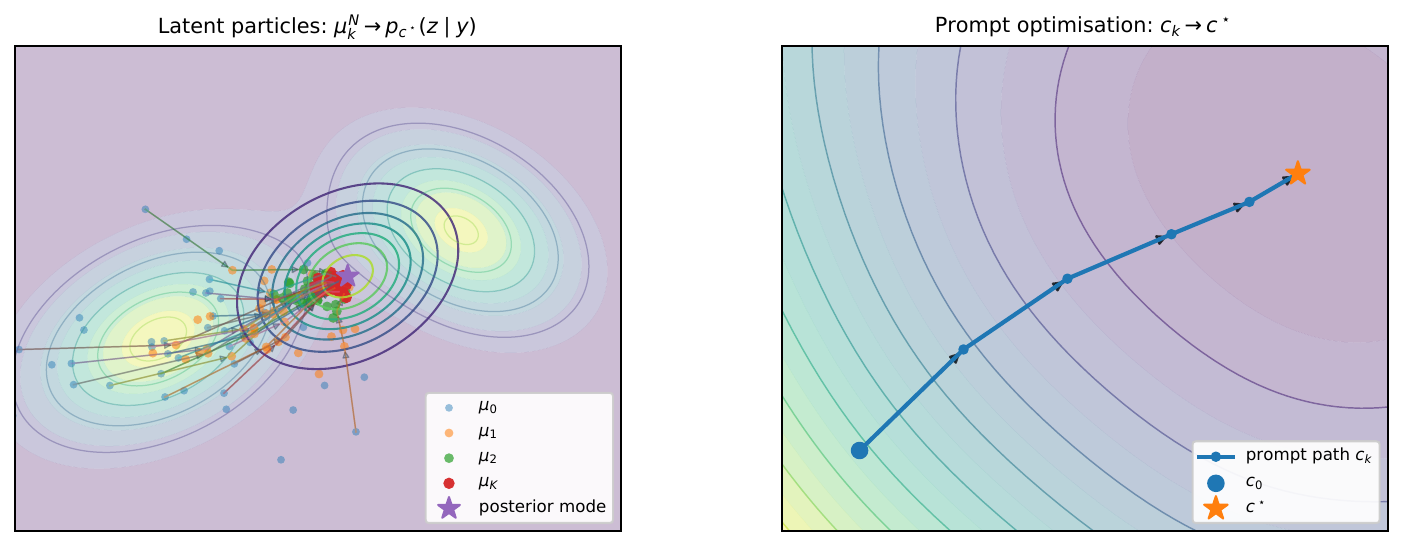}
    \caption{Schematic view of CWGF. The particle distribution $\mu_k^N$ is transported toward the posterior $p_{c^\star}(\vz\mid \vy)$, while the prompt embedding $c_k$ is updated toward the marginal-likelihood optimum $c^\star$.}
    \label{fig:schematic}
\vspace{-\intextsep}
\end{wrapfigure}

%% file: sections/4-likelihood-algo.tex
\subsection{An Encoder-based Likelihood Flow}\label{sec:method-likelihood}
We now consider the spatial discretisation of the flow of the likelihood part $\mathcal{L}[\mu]=-\mathbb{E}_\mu[\log p_{\phi^-}(\vy|\vz)]$ in~\eqref{eq:split-flow-L}. A standard calculation yields ${\delta \mathcal{L}}/{\delta \mu}(\vz) = -\log p(\vy|\vz)$~(cf. Lemma~\ref{lemma:semi-implicit}). Thus, the drift of the likelihood descent subflow is 
\(-\nabla_\vz \delta\mathcal{L}/\delta\mu (\vz) = \nabla_\vz\log p(\vy|\vz)\).
Using Fisher's identity to exchange the order of differentiation and integration, we can express this drift as
\begin{equation}
    g(\vz;\vy):=\nabla_{\vz} \log p(\vy|\vz) = \mathbb{E}_{p_c(\vx|\vz,\vy)}[\nabla_\vz \log p_{\phi^-}(\vx|\vz)].\label{eq:gzy-exact}
\end{equation}
However, computing~\eqref{eq:gzy-exact} would require backpropagating through the decoder, incurring high memory usage and potential artefacts~\citep{raphaeliSILOSolvingInverse2025}. Instead, we exploit the encoder $q_{\phi^-}(\vz|\vx)$ in the pretrained latent CM to obtain an approximation for $\nabla_\vz \log p(\vx|\vz)$ as illustrated in the following proposition (proved in Appendix~\ref{app:proof-prop-approx-delta-L}).
\begin{proposition}\label{prop:approx-delta-L}
    For a VAE trained with the objective $\mathcal{L}_{\mathrm{VAE}}(\phi) = \mathbb{E}_{q_\phi}\left[-\log p_\phi(\vx|\vz)\right] + \lambda \KL(q_\phi(\vz|\vx)\|p_0(\vz))$ with $\lambda>0$ and $p_0(\vz)$ being a standard Gaussian, we have the approximation to~\eqref{eq:gzy-exact} as:
    \begin{align}\label{eq:approx-encoder-score}
        \hat{g}(\vz;\vy):= \lambda \int [\nabla_\vz \log q_{\phi}(\vz|\vx)] p_c(\vx|\vz,\vy) \dd \vx - \lambda \nabla_\vz \log p_0(\vz).
    \end{align}
    Furthermore, the error of approximation $\varepsilon(\hat{g}):=\|g(\vz;\vy) - \hat{g}(\vz;\vy)\|_2^2$ satisfies $\varepsilon(\hat{g}) \leq \lambda^2\mathbb{E}\left[\|\nabla_\vz \log q^\star(\vz|\vx) - \nabla_\vz \log q_{\phi}(\vz|\vx)\|_2^2\right]$, where $q^\star(\vz|\vx) \propto p_0(\vz) \exp\left({\lambda}^{-1} \log p_{\phi}(\vx|\vz)\right)$ is the optimal encoder distribution for the VAE objective and the expectation is over $p_c(\vx|\vz,\vy)$.
\end{proposition}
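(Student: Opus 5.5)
The plan is to identify the optimal encoder of the VAE objective in closed form. Then we rewrite the decoder score $\nabla_\vz \log p_\phi(\vx|\vz)$ in terms of that optimal encoder's score. Finally, we replace the optimal encoder by the learned one and control the error with Jensen's inequality.

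\emph{Optimal encoder.} Fix $\vx$ and minimise $\mathcal{L}_{\mathrm{VAE}}$ over $q$ alone:
\begin{equation*}
\mathbb{E}_q[-\log p_\phi(\vx|\vz)] + \lambda \KL(q\|p_0) = \lambda \KL\big(q \,\|\, p_0 e^{\lambda^{-1}\log p_\phi(\vx|\cdot)}/Z_\lambda(\vx)\big) - \lambda \log Z_\lambda(\vx).
\end{equation*}
This is a standard Gibbs variational identity, and it shows that the minimiser is $q^\star(\vz|\vx)\propto p_0(\vz)\exp(\lambda^{-1}\log p_\phi(\vx|\vz))$, as claimed in the statement. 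Taking logs and differentiating in $\vz$, the normaliser $Z_\lambda(\vx)$ does not depend on $\vz$, so
\begin{equation*}
\nabla_\vz \log p_\phi(\vx|\vz) = \lambda \nabla_\vz \log q^\star(\vz|\vx) - \lambda \nabla_\vz \log p_0(\vz).
\end{equation*}

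\emph{Exact drift and its approximation.} Substituting this identity into Fisher's identity \eqref{eq:gzy-exact} gives
\begin{equation*}
g(\vz;\vy)=\lambda\,\mathbb{E}_{p_c(\vx|\vz,\vy)}[\nabla_\vz\log q^\star(\vz|\vx)] - \lambda\nabla_\vz\log p_0(\vz).
\end{equation*}
Here the $p_0$ term comes out of the expectation because it does not depend on $\vx$ and $p_c(\vx|\vz,\vy)$ is a probability measure. Replacing $q^\star$ by the trained encoder $q_\phi$ defines $\hat g$ exactly as in \eqref{eq:approx-encoder-score}, which justifies the approximation.

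\emph{Error bound.} Subtracting the two expressions, the $p_0$ terms cancel:
\begin{equation*}
g-\hat g=\lambda\,\mathbb{E}_{p_c(\vx|\vz,\vy)}\big[\nabla_\vz\log q^\star(\vz|\vx)-\nabla_\vz\log q_\phi(\vz|\vx)\big].
\end{equation*}
Applying Jensen's inequality to the convex map $\|\cdot\|_2^2$ (equivalently, Cauchy--Schwarz) bounds $\|g-\hat g\|_2^2$ by $\lambda^2$ times the expected squared score difference, which is the stated bound on $\varepsilon(\hat g)$.

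\emph{Main obstacle.} The main difficulty is regularity rather than algebra. We need the following:
\begin{itemize}
\item enough smoothness and integrability to apply Fisher's identity, i.e.\ to differentiate under the integral in $p(\vy|\vz)=\int p(\vy|\vx)p_\phi(\vx|\vz)\dd\vx$;
\item $Z_\lambda(\vx)<\infty$, so that $q^\star$ is well defined;
\item finiteness of the expectation in the bound.
\end{itemize}
I would impose these as standing assumptions: $p_\phi(\vx|\vz)$ is $C^1$ in $\vz$ with integrable gradient, and the score difference is square-integrable under $p_c(\vx|\vz,\vy)$. Under these assumptions the remaining steps are direct computations.
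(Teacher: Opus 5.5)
Your proposal is correct and follows the paper's proof essentially step for step: express the decoder score through the optimal encoder $q^\star$, substitute into Fisher's identity, replace $q^\star$ by $q_\phi$, and bound the difference with Jensen's inequality. The one addition is that you derive the form of $q^\star$ via the Gibbs variational identity and make the regularity assumptions explicit, whereas the paper simply asserts the optimal encoder and leaves the regularity implicit.
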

We point out that VAEs in the latent CMs are often trained with a small $\lambda\approx10^{-4}$~\citep[Appendix G]{rombachhighresolutionimagesynthesis2022} and the encoder approximation error is often small for a well-trained VAE. 
To form a Monte Carlo/maximum a posteriori (MAP) estimate of the integral in~\eqref{eq:approx-encoder-score}, we require the pixel space posterior $p_c(\vx|\vz,\vy)$. In the specific case of linear inverse problems $p(\vy|\vx) = \mathcal{N}(\vy; A\vx, \sigma_\vy^2 I)$ and Gaussian decoder $p_{\phi^-}(\vx|\vz) = \mathcal{N}(\vx; \mathcal{D}_{\phi^-}(\vz), \sigma_{dec}^2 I)$, the posterior $p_c(\vx|\vz,\vy) = \mathcal{N}(\vx; m(\vz), \Sigma)$ is also Gaussian (cf. Appendix~\ref{app:proof-prop-posterior-gauss}) with $m(\vz)$ and $\Sigma$ given by:
\begin{align}\label{eq:mean-text}
    m(\vz) = \Sigma \left(\sigma_{dec}^{-2}\mathcal{D}_{\phi^-}(\vz) + \sigma_\vy^{-2}A^\top \vy\right), \qquad \Sigma = \left({\sigma_{dec}^{-2}}I + {\sigma_\vy^{-2}} A^\top A
    \right)^{-1}.
\end{align}
Therefore, under the MAP approximation, the flow of $\mathcal{L}$ in~\eqref{eq:split-flow-L} admits the following approximate particle drift:
\begin{align}\label{eq:approx-first-var-lik}
    \dot{\vz}_\tau=-\widehat{\nabla_\vz\frac{\delta \mathcal{L}}{\delta \mu}}(\vz_\tau) := \lambda\Sigma_{\phi^-}^{-1} (\mathcal{E}_{\phi^-}(m(\vz_\tau)) - \vz_\tau) + \lambda\vz_\tau,
\end{align}
where we have used that the prior and encoder have diagonal Gaussian forms, \textit{i.e.} $p_0(\vz)=\mathcal{N}(\vz;0,I)$ and $q_{\phi^-}(\vz|\vx)=\mathcal{N}(\vz;\mathcal{E}_{\phi^-}(\vx),\Sigma_{\phi^-}(\vx))$. We point out that~\eqref{eq:approx-first-var-lik} does not require backpropagation through either the CM or the VAE components.
Concurrent to our work, \citet{shtanchaev2026gradientfree} also derived a similar objective for the likelihood gradient, albeit using heuristic arguments. We summarise the updates in~\Cref{algo:dwgf-prompt-simple}.
\vspace{0.5cm}

%% file: sections/5-experiments.tex
\section{Experiments}
\label{sec:experiments}

\begin{wrapfigure}{r}{0.55\textwidth}
\vspace{-\intextsep}
\begin{minipage}{0.55\textwidth}
\vspace{-1.0em}
\begin{algorithm}[H]
\caption{Consistency-regularised Wasserstein Gradient Flow (CWGF)}
\label{algo:dwgf-prompt-simple}
\begin{algorithmic}[1]
\small
\State \textbf{Inputs:} $\vy$, particles $\{\vz_0^{(n)}\}_{n=1}^N$, prompt $c_0$
\For{$k=0,\ldots,K-1$}
    \State Sample $t=t(k)$ and $\varepsilon^{(n)}\sim\mathcal N(0,I)$
    \State $\vz_t^{(n)}\gets \sqrt{\alpha_t}\vz_k^{(n)}+\sigma_t\varepsilon^{(n)}$

    \State $c_{k+1}\gets c_k-\eta_c\,\widehat{\nabla_c\mathcal R}$ \hfill\eqref{eq:prompt_gradient}

    \State $\pi_{nm}(t)\propto q(\vz_t^{(n)}\mid \vz_k^{(m)})$,
    \State  $\widehat m_t^{\mu,N}(\vz_t^{(n)})\gets\sum_m\pi_{nm}(t)\vz_k^{(m)}$

    \State $\bar\vz_k^{(n)}\gets
    \vz_k^{(n)}
    +\eta_R \left(g_\theta(\vz_t^{(n)},t,c_k) - \vz_k^{(n)}\right)
    +\eta_{R}\bigl(\vz_k^{(n)}-\widehat m_t^{\mu,N}(\vz_t^{(n)})\bigr)$
    \hfill\eqref{eq:denoiser-discrepancy-update}

    \State Compute $m(\bar\vz_k^{(n)})$ as in~\eqref{eq:mean-text}

    \State $\widehat g(\bar\vz_k^{(n)};\vy)
    \gets
    \lambda\Sigma_{\phi^-}^{-1}
    \bigl(\mathcal E_{\phi^-}(m(\bar\vz_k^{(n)}))-\bar\vz_k^{(n)}\bigr)
    +\lambda\bar\vz_k^{(n)}$
    \hfill\eqref{eq:approx-first-var-lik}

    \State $\vz_{k+1}^{(n)}\gets \bar\vz_k^{(n)}+\eta_L\,\widehat g(\bar\vz_k^{(n)};\vy)$
\EndFor
\State \textbf{Outputs:} $\{\mathcal D_{\phi^-}(\vz_K^{(n)})\}_{n=1}^N$, $c_K$
\end{algorithmic}
\end{algorithm}
\vspace{-2.0em}
\end{minipage}
\vspace{-\intextsep}
\end{wrapfigure}

\paragraph{Datasets and prior model.}
We evaluate our method on two high-quality datasets at resolution $512\times512$: FFHQ~\citep{karras2019stylebasedgeneratorarchitecturegenerative} and ImageNet~\citep{Deng2009ImageNetAL}. For FFHQ, we use the first $1$k test images, as in~\citet{chung2023diffusion}; for ImageNet, we use the \emph{cval1k} validation set introduced by~\citet{Larsson2016LearningRF}. Our prior is the LCM-LoRA~\citep{Luo2023LCMLoRAAU} distilled from Stable Diffusion~1.5. This is the same latent diffusion backbone used to evaluate diffusion-based baselines such as P2L~\citep{Chung2023PrompttuningLD}, PSLD~\citep{routsolvinglinearinverse2023}, LDPS, and LATINO/LATINO-PRO~\citep{Spagnoletti_2025_ICCV}.

\paragraph{Inverse problems.}
We consider the standard linear degradations used in~\citet{chung2023diffusion}. For Gaussian deblurring, we use a $61\times61$ kernel with $\sigma=3.0$. For motion deblurring, we use a $61\times61$ kernel randomly sampled with intensity $0.5$\footnote{\url{https://github.com/LeviBorodenko/motionblur}}. For super-resolution, we evaluate $\times8$ upscaling with bicubic interpolation. In all experiments, the measurements are corrupted with additive white Gaussian noise of standard deviation $\sigma_\vy=0.01$. Additional results on nonlinear inverse problems are reported in Appendix~\ref{app:non-linear}.

\paragraph{Implementation details.}
We discretise the proposed gradient-flow dynamics with explicit Euler steps. Following the coarse-to-fine nature of diffusion sampling~\citep{wangprolificdreamerhighfidelitydiverse2023}, we use a two-stage timestep schedule for the prior flow: during the first half of the algorithm we traverse the full LCM timestep set $\{\texttt{999,879,759,639,499,379,259,139}\}$, while during the second half we cycle through the four lowest-noise timesteps to refine high-frequency details. We run CWGF for $16$ iterations. Since prompt and particle variables evolve on different scales~\citep{pavliotis2008,akyildiz2024multiscaleperspectivemaximummarginal}, we use separate stepsizes for the prompt and particle updates, and precondition the likelihood gradient by $\lambda^{-1}\Sigma_{\phi^-}$; see~\cref{eq:approx-first-var-lik}. Full pseudo-code is given in Appendix~\ref{sec:algorithm}.

\begin{table*}[h]
\centering
\resizebox{\textwidth}{!}{%
\footnotesize
\setlength{\tabcolsep}{4pt}
\renewcommand{\arraystretch}{1.08}
\rowcolors{5}{gray!10}{white}
\begin{tabular}{l c ccc ccc ccc}
\toprule
 &  & \multicolumn{3}{c}{\textbf{Gaussian Deblur}} & \multicolumn{3}{c}{\textbf{Motion Deblur}} & \multicolumn{3}{c}{\textbf{SR $\times 8$}} \\
\cmidrule(lr){3-5}\cmidrule(lr){6-8}\cmidrule(lr){9-11}
\textbf{Method} & \textbf{NFE$\downarrow$}
& \textbf{FID$\downarrow$} & \textbf{PSNR$\uparrow$} & \textbf{LPIPS$\downarrow$}
& \textbf{FID$\downarrow$} & \textbf{PSNR$\uparrow$} & \textbf{LPIPS$\downarrow$}
& \textbf{FID$\downarrow$} & \textbf{PSNR$\uparrow$} & \textbf{LPIPS$\downarrow$} \\
\midrule

\rowcolor{gray!25}
\multicolumn{11}{l}{\textbf{FFHQ-512} \quad Prompt: \texttt{a photo of a face}} \\
LATINO & \textbf{8} & 40.01 & 26.27 & 0.352 & \underline{35.36} & \underline{25.19} & \underline{0.405} & \underline{50.34} & 25.25 & \underline{0.444} \\
LATINO-PRO & 65 & \underline{32.48} & \underline{28.49} & \underline{0.325} & 39.72 & 25.16 & 0.410 & 75.97 & \underline{26.00} & 0.445 \\
P2L & 400 & 45.12 & 26.47 & 0.433 & 55.73 & 23.92 & 0.503 & 52.14 & 23.76  & 0.485 \\
TReg & 200 & 48.73 & 24.27 & 0.418 & 44.97 & 22.82 & 0.432 & 62.40 & 22.68 & 0.460 \\
PSLD & 200 & 128.79 & 17.15 & 0.595 & 173.6 & 15.81 & 0.629 & 61.90 & 23.19 & 0.535 \\
LDPS & 200 & 173.37 & 15.12 & 0.644 & 219.26 & 14.19 & 0.670 & 50.52 & 23.89 & 0.491 \\
\rowcolor{orange!20}
CWGF & \underline{16} & \textbf{21.95} & \textbf{29.42} & \textbf{0.309} & \textbf{23.18} & \textbf{27.98} & \textbf{0.338} & \textbf{37.37} & \textbf{27.07} & \textbf{0.366} \\

\midrule
\rowcolor{gray!25}
\multicolumn{11}{l}{\textbf{ImageNet-512} \quad Prompt: \texttt{a photo in high resolution}} \\
LATINO & \textbf{8} & 52.95 & 23.43 & 0.406 & \underline{53.49} & 22.29 & \underline{0.468} & 73.90 & 22.53 & 0.502\\
LATINO-PRO & 65 & \underline{35.18} & \underline{25.28} & \textbf{0.357} & 67.44 & \underline{22.51} & 0.478 & 68.89 & \underline{23.06} & 0.498 \\
P2L & 400 & 59.77 & 23.05 & 0.487 & 159.32 & 14.29 & 0.683 & \underline{55.04} & 22.56 & \underline{0.468} \\
TReg & 200 & 56.54 & 22.38 & 0.441 & 78.75 & 20.99 & 0.501 & 75.91 & 20.85 & 0.504 \\
PSLD & 200 & 129.8 & 15.43 & 0.615 & 152.9 & 14.15 & 0.645 & 129.8 & 15.43 & 0.615 \\
LDPS & 1000 & 152.0 & 13.81 & 0.649 & 166.8 & 13.05 & 0.667 & 105.3 & 20.32 & 0.610 \\
\rowcolor{orange!20}
CWGF & \underline{16} & \textbf{28.19} & \textbf{25.40} & \underline{0.394} & \textbf{41.81} & \textbf{23.86} & \textbf{0.409} & \textbf{48.66} & \textbf{23.40} & \textbf{0.458} \\
\bottomrule
\end{tabular}
}
\caption{
Results for Gaussian deblurring with $\sigma=3.0$, motion deblurring, and $\times8$ super-resolution, all with measurement noise $\sigma_\vy=0.01$, on FFHQ-512 and ImageNet-512 validation datasets. \textbf{Bold}: best within each dataset/problem; \underline{underline}: second best.
}
\label{tab:comparison_ffhq_imagenet}
\end{table*}

\begin{wrapfigure}{r}{0.6\textwidth}
    \vspace{-1.5em}
    \centering
    \begin{minipage}{0.38\linewidth}
        \centering
        \includegraphics[width=\linewidth]{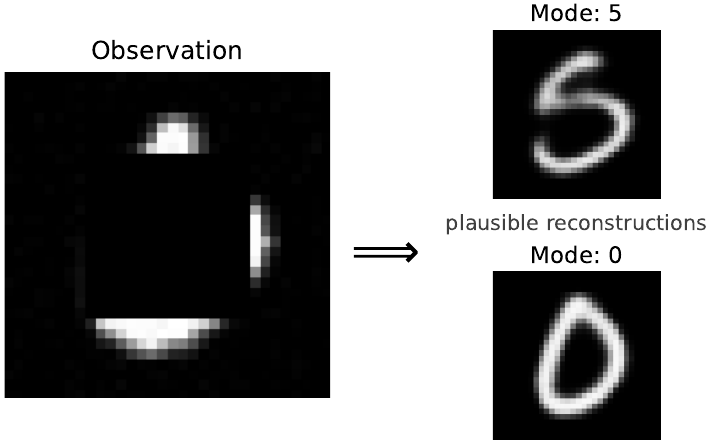}
    \end{minipage}
    \hfill
    \begin{minipage}{0.58\linewidth}
        \centering
        \includegraphics[width=\linewidth]{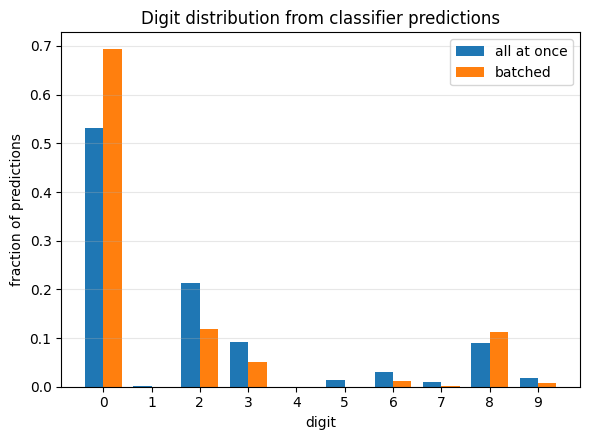}
    \end{minipage}
    \caption{Distribution of digits among the samples.}
    \label{fig:digits-hist-mnist}
    \vspace{-0.8em}
\end{wrapfigure}

\paragraph{Results.}
Table~\ref{tab:comparison_ffhq_imagenet} reports quantitative results on the selected inverse problems, while qualitative results are shown in Figure~\ref{fig:qualitative_ImageNet} for ImageNet, and in Figure~\ref{fig:qualitative_FFHQ} for FFHQ. All results are obtained with only $N=1$ particle. In Appendix~\ref{sec:ablations}, we report results with $N>1$, for which we observe mild improvements compared to the case $N=1$, as the high dimensionality of the latent space weakens the interactions. The results change when we run the algorithm on low-dimensional problems using the MNIST dataset~\cite{Deng2012TheMD}, where particle interactions are important. Indeed, as shown in Figure~\ref{fig:digits-hist-mnist}, where the problem solved was inpainting, the distribution of the particles exhibits greater variability when sampled \emph{all at once} (i.e., with $N>1$) than in the \emph{batched} case (i.e., in parallel with $N=1$). A more detailed analysis of this setting is discussed in Appendix~\ref{app:mnist}.

\begin{figure*}[h]
\centering
\begin{minipage}{0.03\textwidth}
  \centering
  \begin{tabular}{c}
    \\[-8mm]
    \rotatebox{90}{\scalebox{.6}{\textbf{Gaussian deblur}}} \\[2mm]
    \rotatebox{90}{\scalebox{.6}{\textbf{Motion deblur}}}  \\[8mm]
    \rotatebox{90}{\scalebox{.6}{\textbf{SRx8}}}
  \end{tabular}
\end{minipage}
\hfill
\begin{minipage}{0.13\textwidth}
    \centering \tiny\textbf{GT} \\[-1mm]
    \zoomedImage[width=\linewidth]{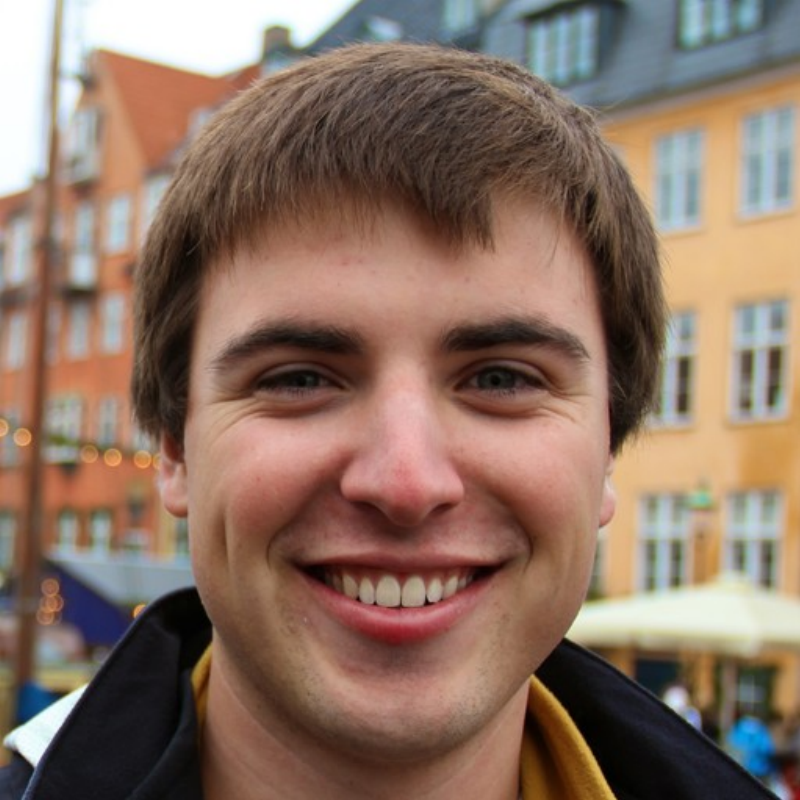}{-0.2,0.1}{1.0,0.5} \\[-1mm]
    \zoomedImage[width=\linewidth]{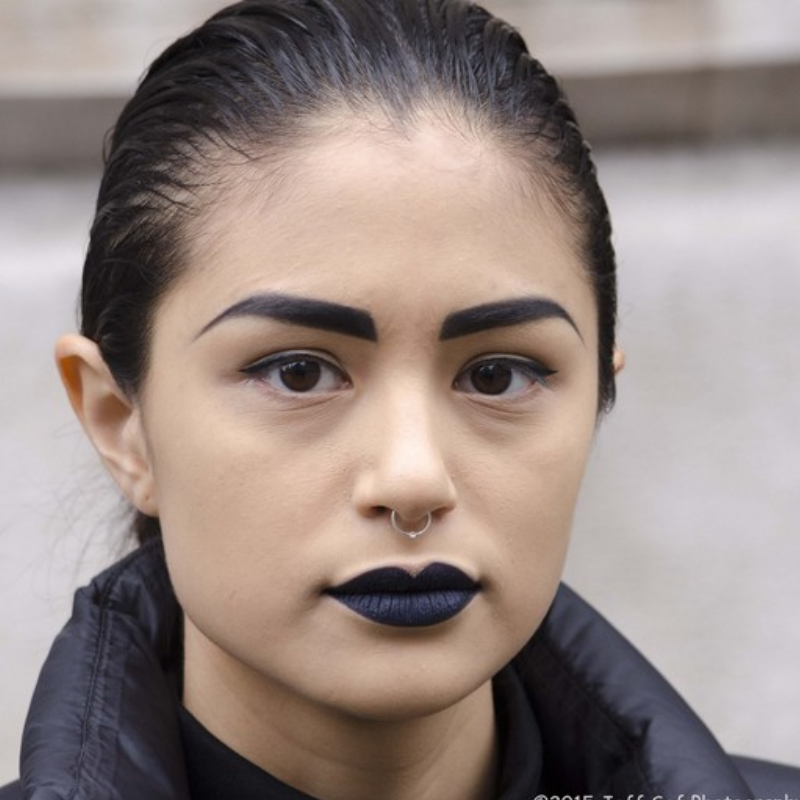}{-0.2,0.1}{1.0,0.5} \\[-1mm]
    \zoomedImage[width=\linewidth]{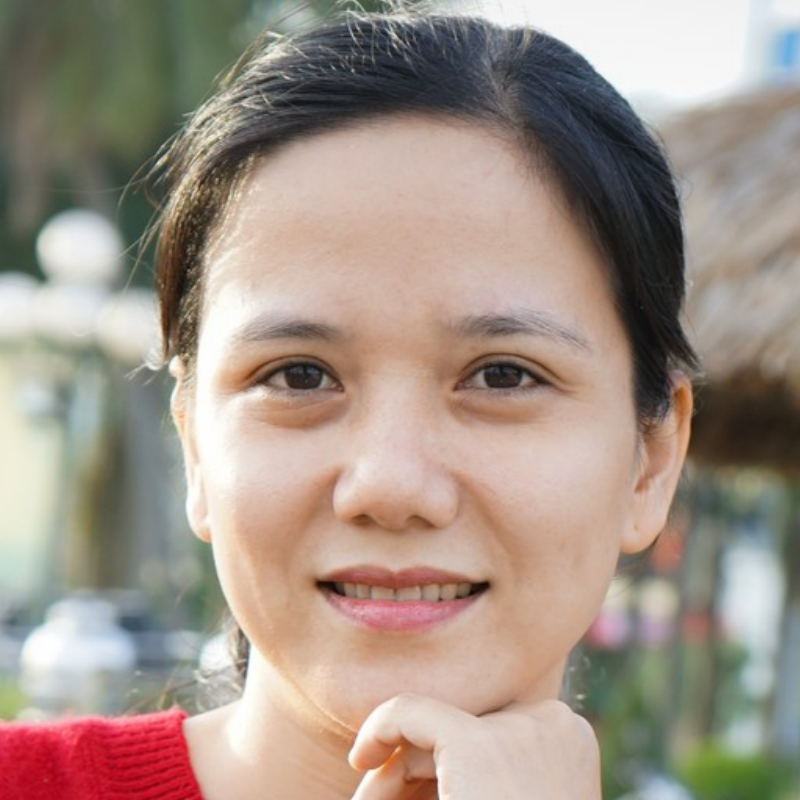}{-0.2,0.1}{1.0,-0.5} \\[-1mm]
\end{minipage}
\hfill
\begin{minipage}{0.13\textwidth}
  \centering \tiny\textbf{Measurement} \\[-1mm]
    \zoomedImage[width=\linewidth]{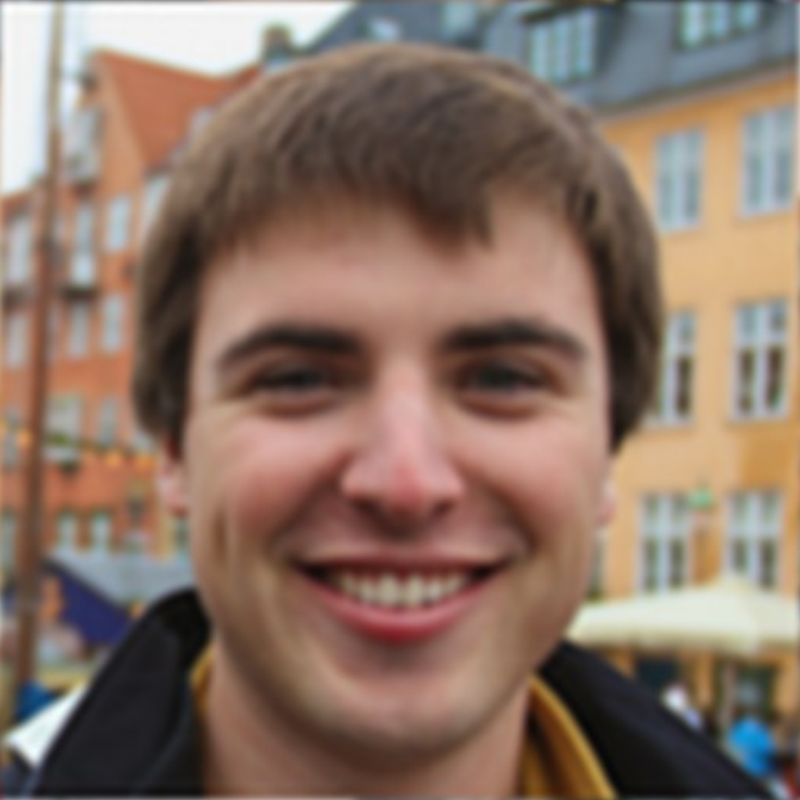}{-0.2,0.1}{1.0,0.5}\\[-1mm]
    \zoomedImage[width=\linewidth]{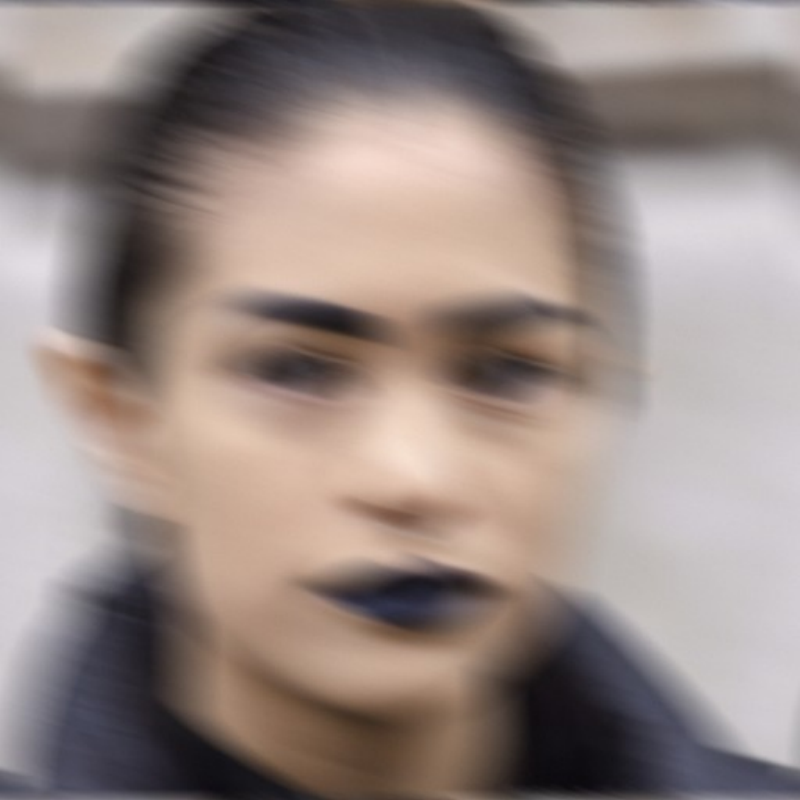}{-0.2,0.1}{1.0,0.5}\\[-1mm]
    \zoomedImage[width=\linewidth]{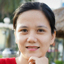}{-0.2,0.1}{1.0,-0.5} \\[-1mm]
\end{minipage}
\hfill
\begin{minipage}{0.13\textwidth}
    \centering \tiny\textbf{CWGF} \\[-1mm]
    \zoomedImage[width=\linewidth]{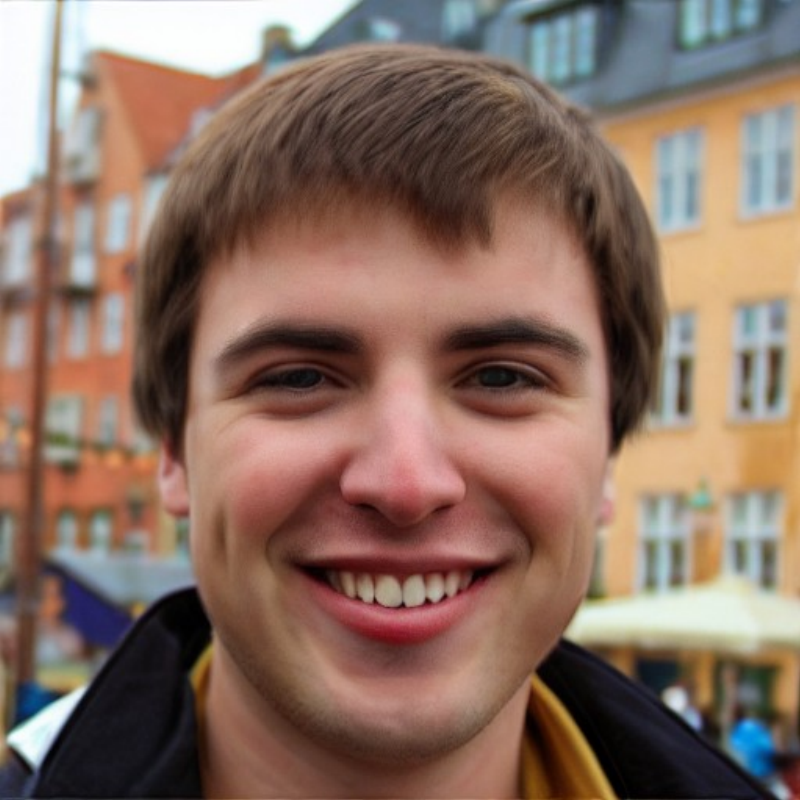}{-0.2,0.1}{1.0,0.5}\\[-1mm]
    \zoomedImage[width=\linewidth]{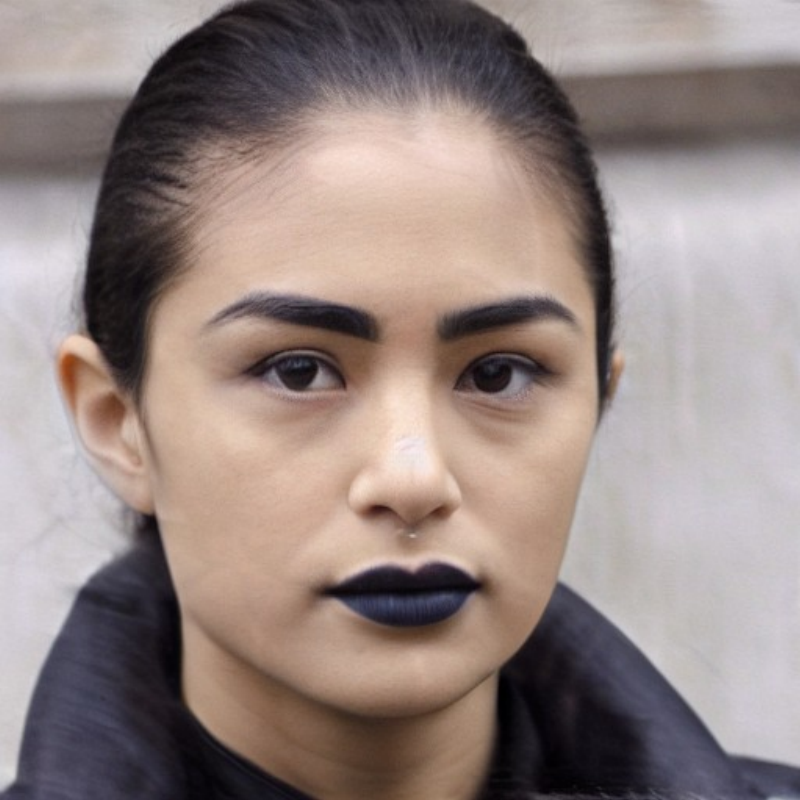}{-0.2,0.1}{1.0,0.5}\\[-1mm]
    \zoomedImage[width=\linewidth]{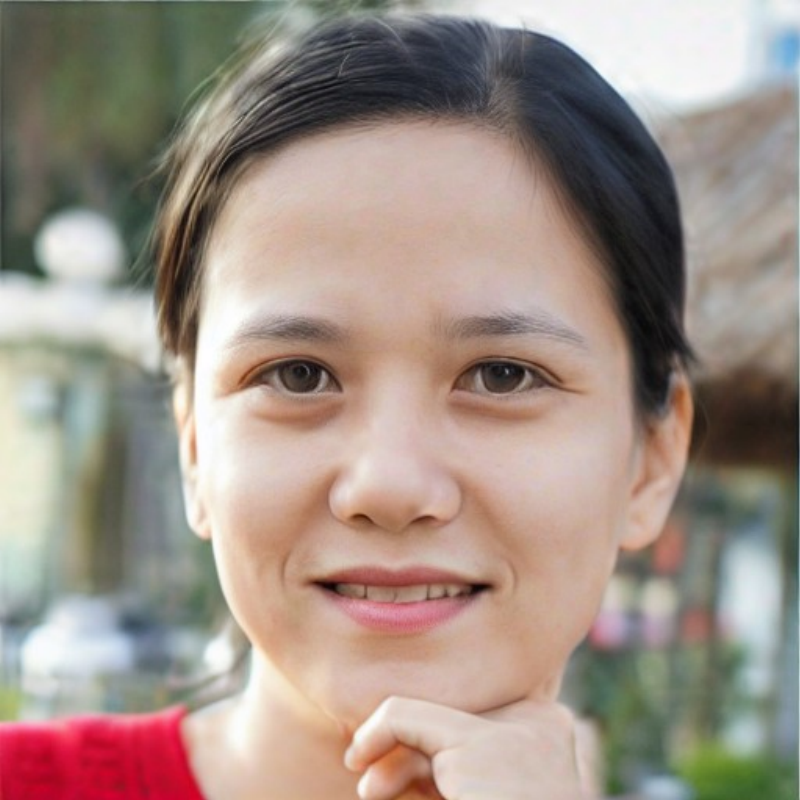}{-0.2,0.1}{1.0,-0.5} \\[-1mm]
\end{minipage}
\hfill
\begin{minipage}{0.13\textwidth}
    \centering \tiny\textbf{LATINO} \\[-1mm]
    \zoomedImage[width=\linewidth]{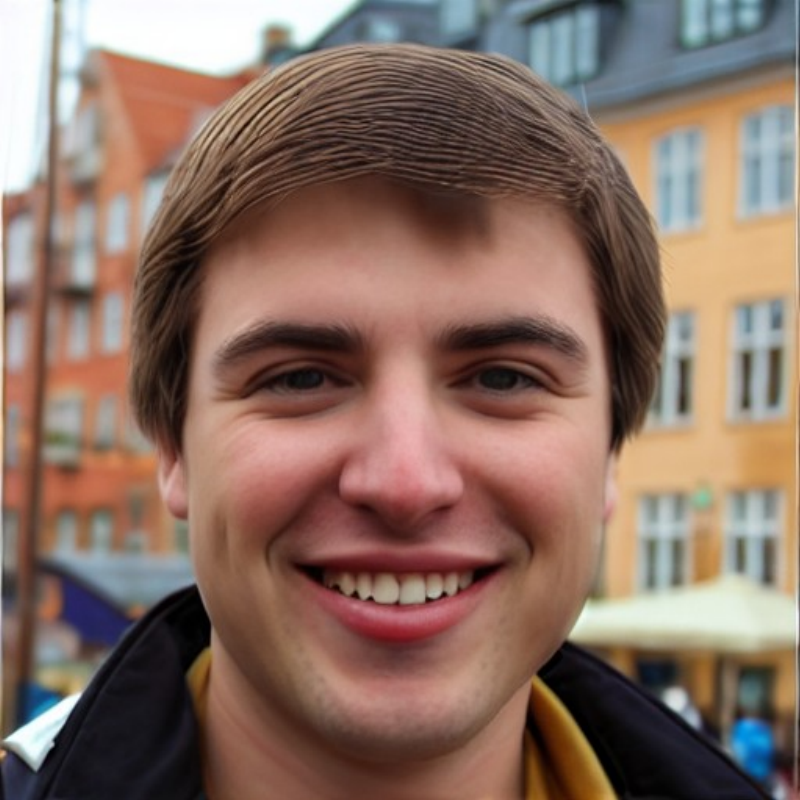}{-0.2,0.1}{1.0,0.5} \\[-1mm]
    \zoomedImage[width=\linewidth]{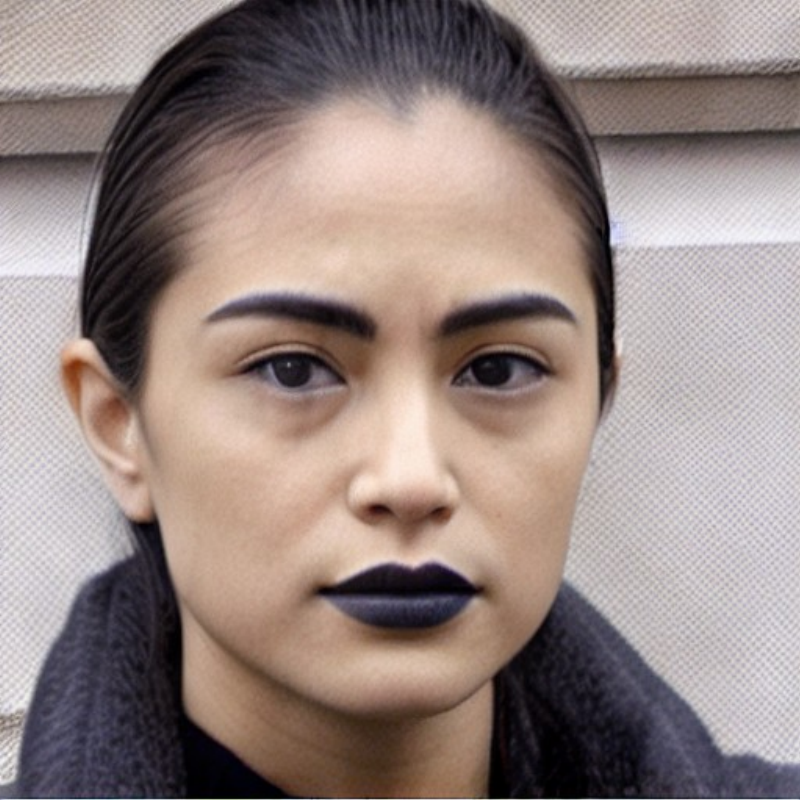}{-0.2,0.1}{1.0,0.5} \\[-1mm]
    \zoomedImage[width=\linewidth]{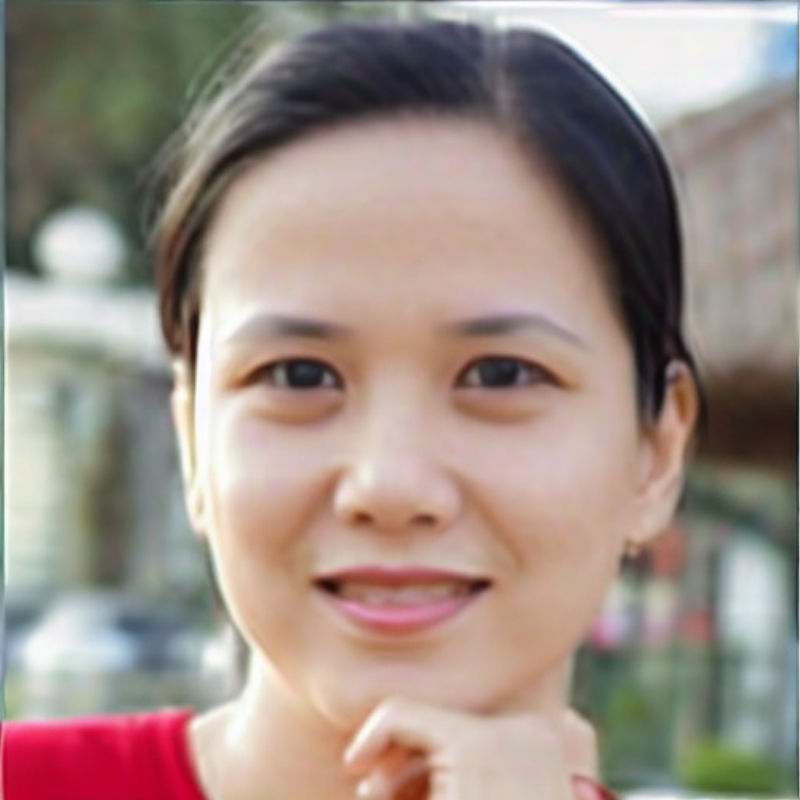}{-0.2,0.1}{1.0,-0.5} \\[-1mm]
\end{minipage}
\hfill
\begin{minipage}{0.13\textwidth}
    \centering \tiny\textbf{TREG} \\[-1mm]
    \zoomedImage[width=\linewidth]{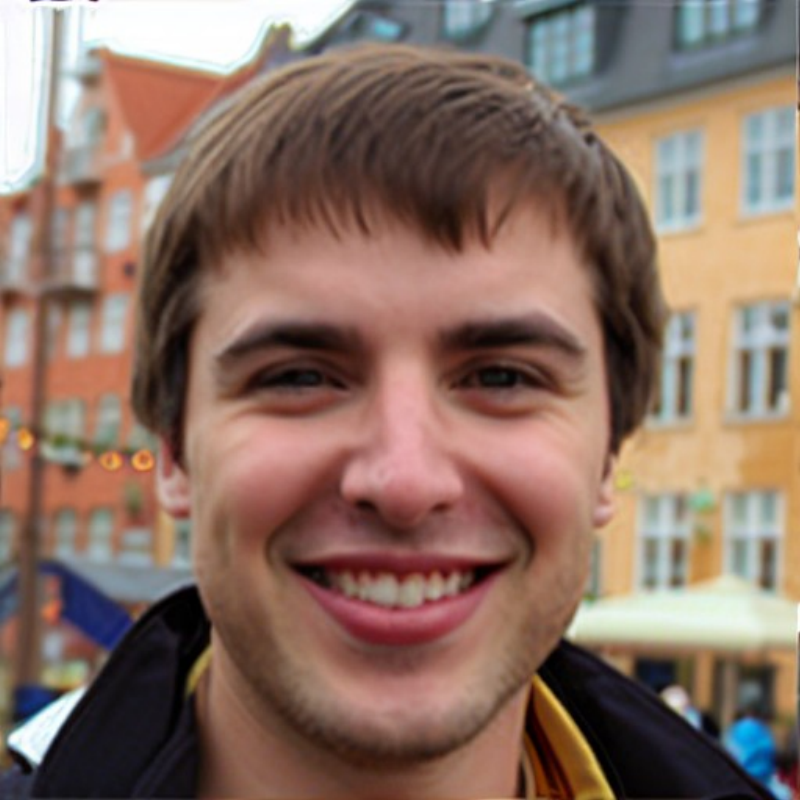}{-0.2,0.1}{1.0,0.5} \\[-1mm]
    \zoomedImage[width=\linewidth]{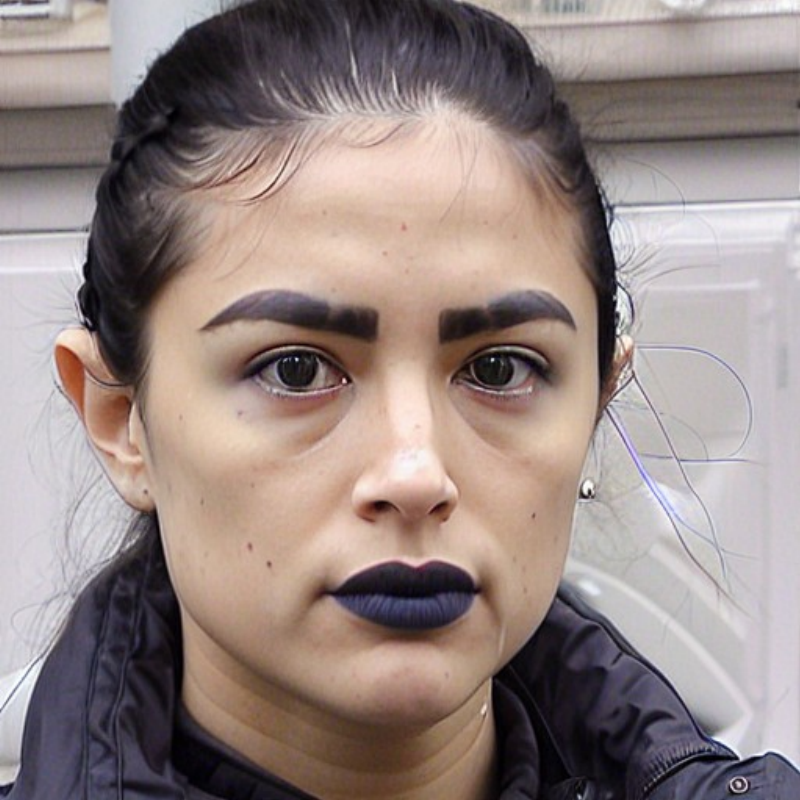}{-0.2,0.1}{1.0,0.5} \\[-1mm]
    \zoomedImage[width=\linewidth]{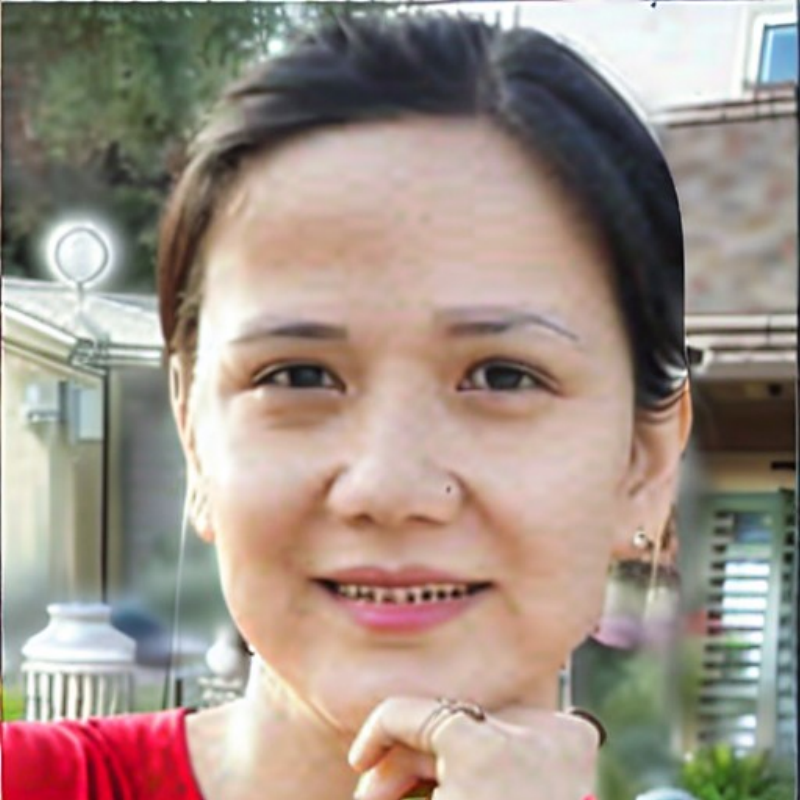}{-0.2,0.1}{1.0,-0.5} \\[-1mm]
\end{minipage}
\hfill
\begin{minipage}{0.13\textwidth}
    \centering \tiny\textbf{P2L} \\[-1mm]
    \zoomedImage[width=\linewidth]{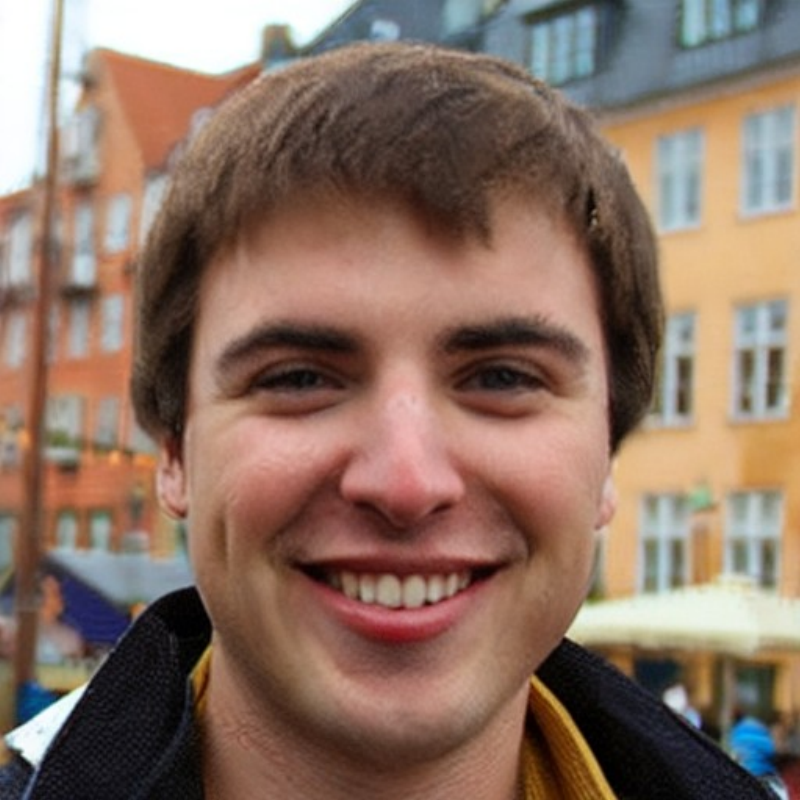}{-0.2,0.1}{1.0,0.5} \\[-1mm]
    \zoomedImage[width=\linewidth]{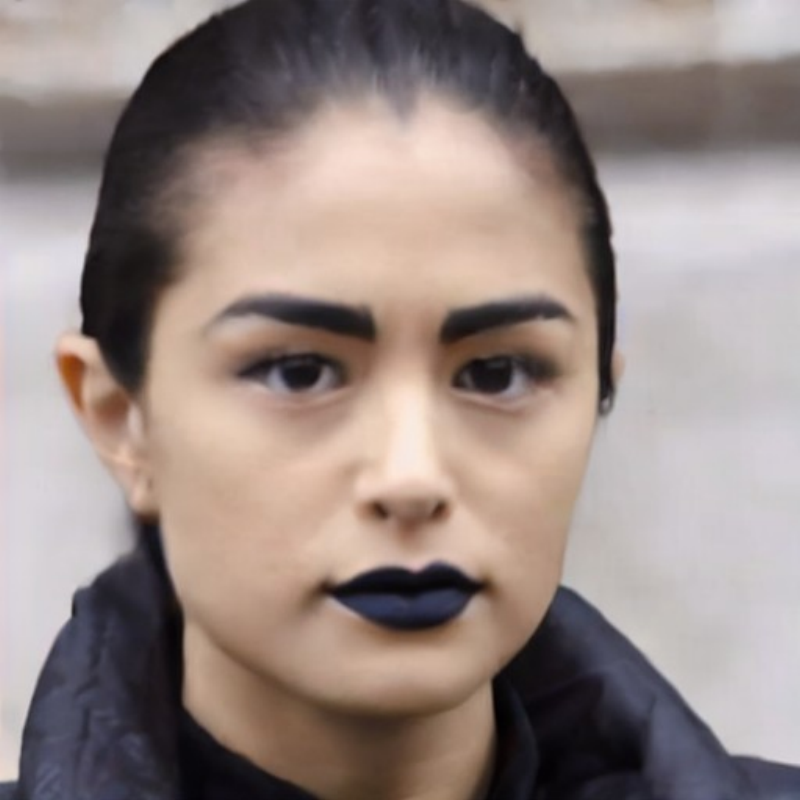}{-0.2,0.1}{1.0,0.5} \\[-1mm]
    \zoomedImage[width=\linewidth]{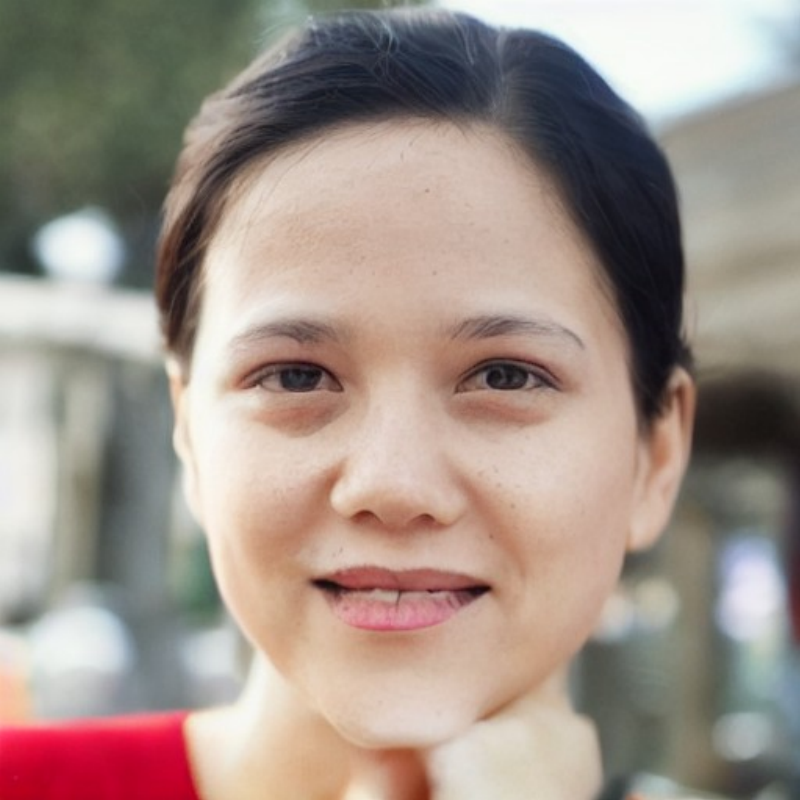}{-0.2,0.1}{1.0,-0.5} \\[-1mm]
\end{minipage}
\hfill
\begin{minipage}{0.13\textwidth}
    \centering \tiny\textbf{PSLD} \\[-1mm]
    \zoomedImage[width=\linewidth]{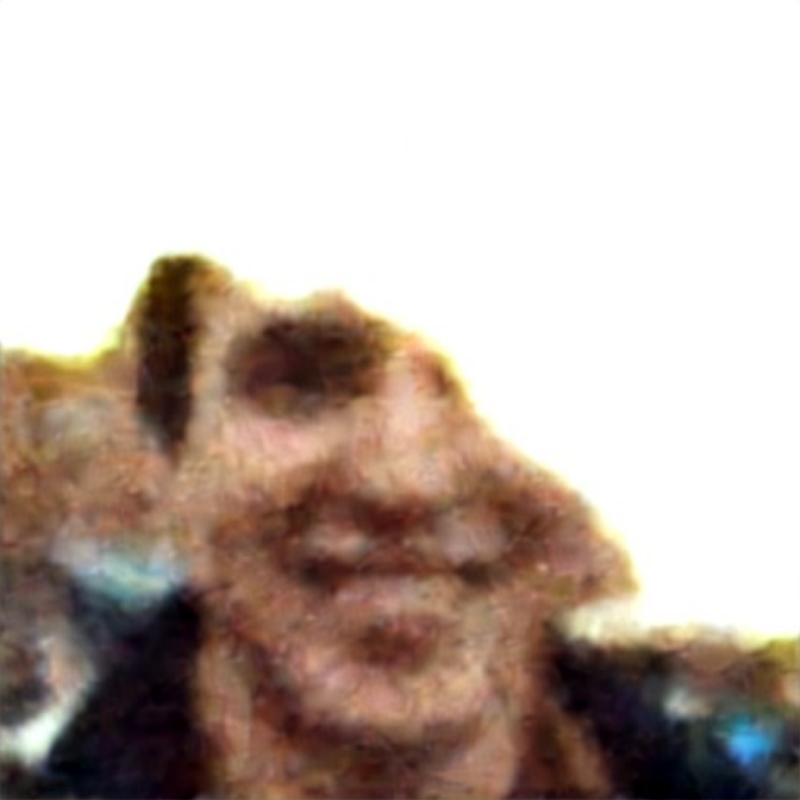}{-0.2,0.1}{1.0,0.5} \\[-1mm]
    \zoomedImage[width=\linewidth]{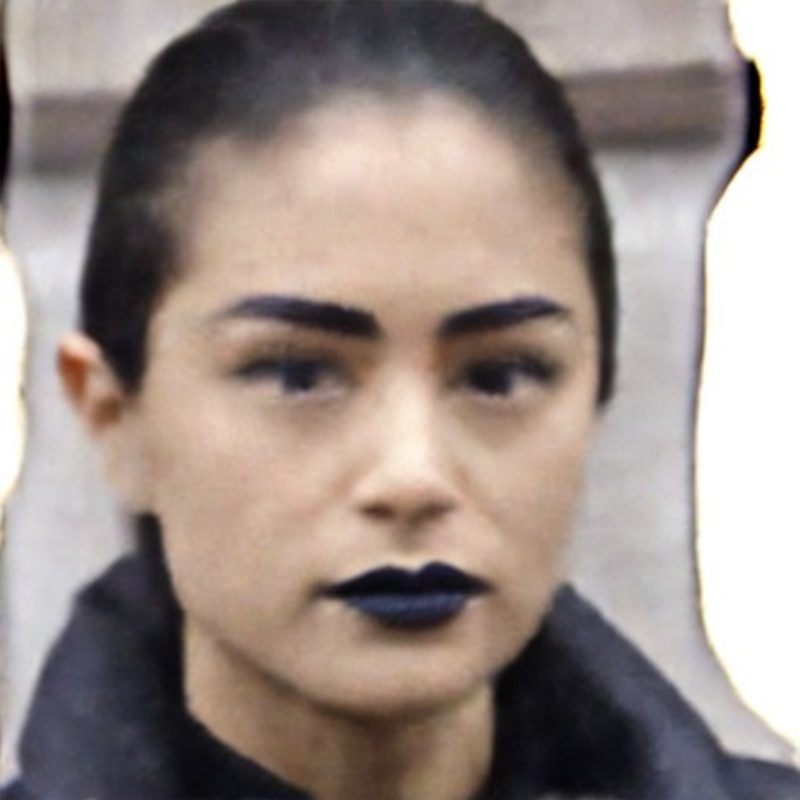}{-0.2,0.1}{1.0,0.5} \\[-1mm]
    \zoomedImage[width=\linewidth]{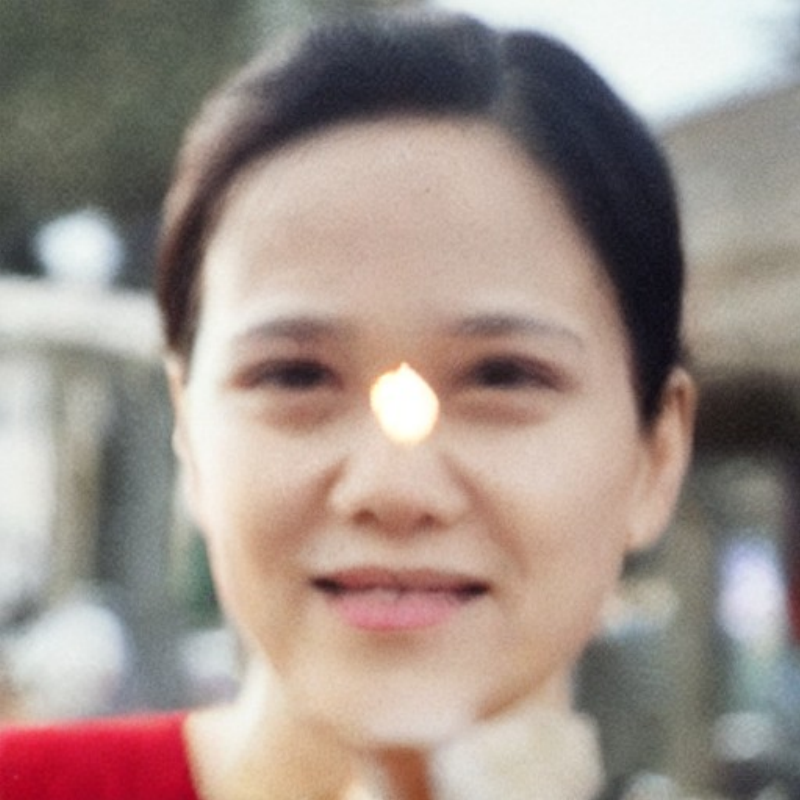}{-0.2,0.1}{1.0,-0.5} \\[-1mm]
\end{minipage}
\vspace{-2mm}
\caption{Qualitative comparison of image restoration results. Samples taken from FFHQ-512. Prompt: \texttt{A photo of a face}.}
\label{fig:qualitative_FFHQ}
\end{figure*}

\begin{figure*}[h]
\centering
\begin{minipage}{0.03\textwidth}
  \centering
  \begin{tabular}{c}
    \\[-8mm]
    \rotatebox{90}{\scalebox{.6}{\textbf{Gaussian deblur}}} \\[1mm]
    \rotatebox{90}{\scalebox{.6}{\textbf{Motion deblur}}}  \\[7mm]
    \rotatebox{90}{\scalebox{.6}{\textbf{SRx8}}}
  \end{tabular}
\end{minipage}
\hfill
\begin{minipage}{0.13\textwidth}
    \centering \tiny\textbf{GT} \\[-1mm]
    \zoomedImage[width=\linewidth]{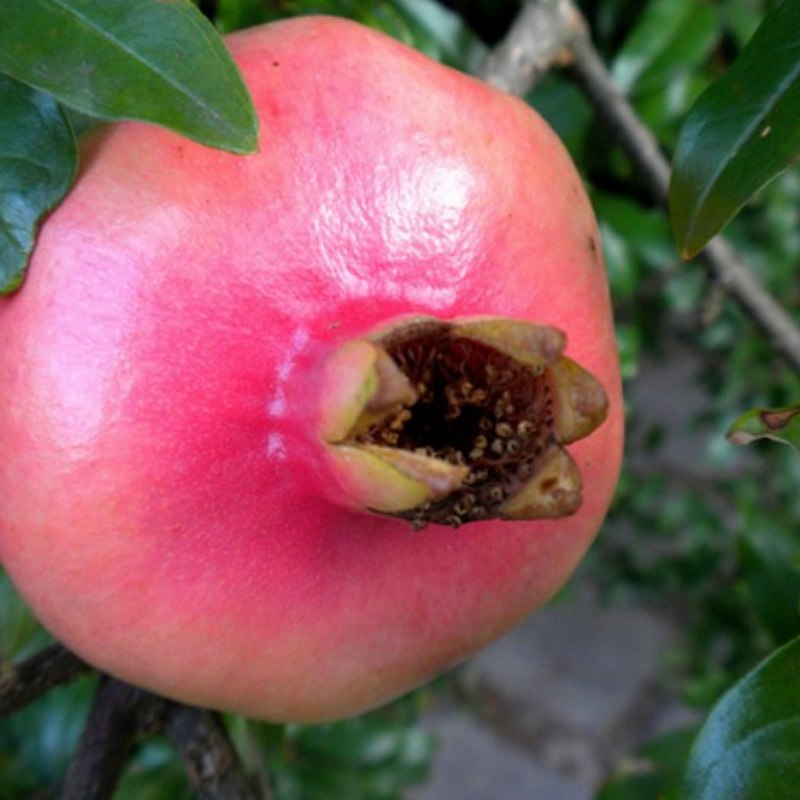}{-0.3,0.6}{1.0,0.5} \\[-1mm]
    \zoomedImage[width=\linewidth]{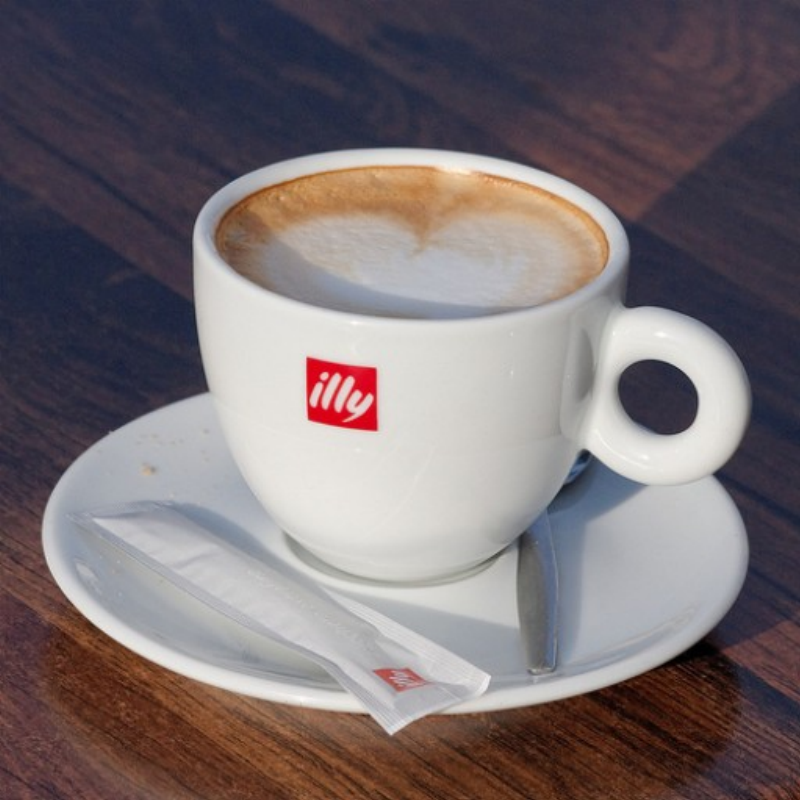}{-0.2,0.1}{1.0,0.5} \\[-1mm]
    \zoomedImage[width=\linewidth]{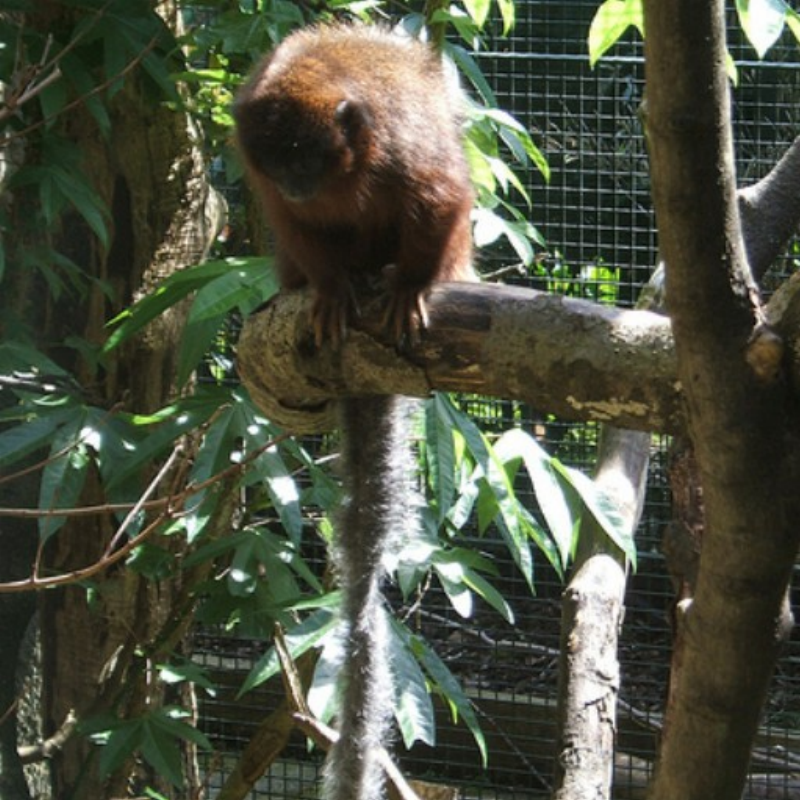}{-0.2,0.6}{1.0,-0.5} \\[-1mm]
\end{minipage}
\hfill
\begin{minipage}{0.13\textwidth}
  \centering \tiny\textbf{Measurement} \\[-1mm]
    \zoomedImage[width=\linewidth]{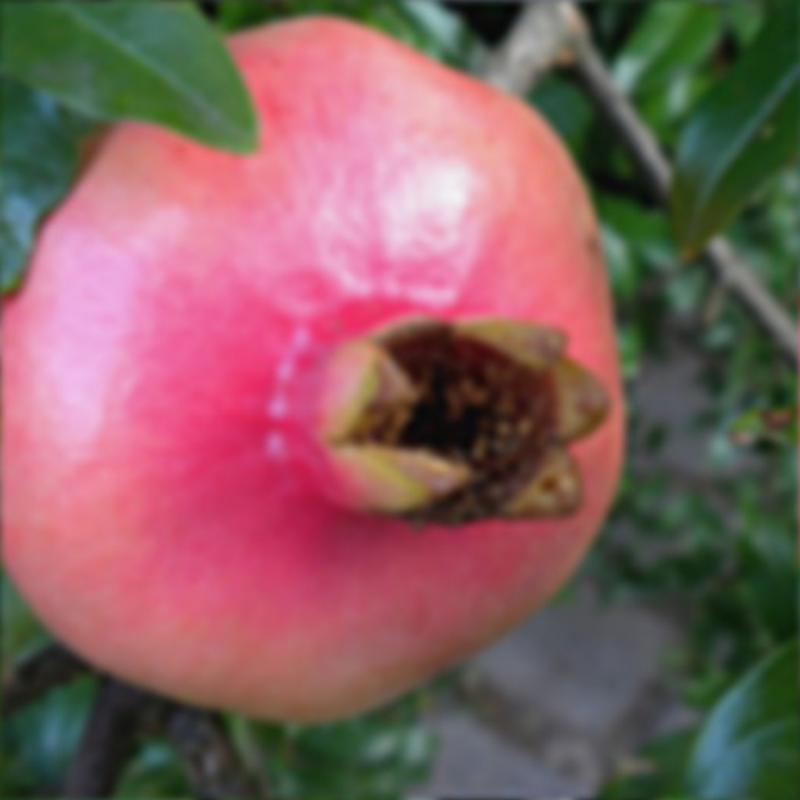}{-0.3,0.6}{1.0,0.5}\\[-1mm]
    \zoomedImage[width=\linewidth]{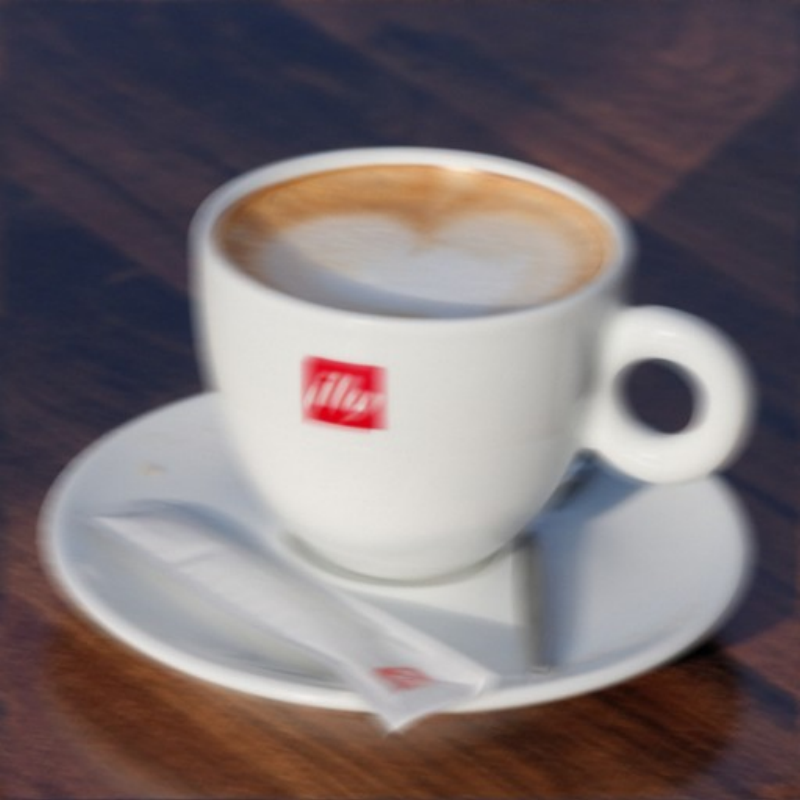}{-0.2,0.1}{1.0,0.5}\\[-1mm]
    \zoomedImage[width=\linewidth]{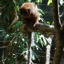}{-0.2,0.6}{1.0,-0.5} \\[-1mm]
\end{minipage}
\hfill
\begin{minipage}{0.13\textwidth}
    \centering \tiny\textbf{CWGF} \\[-1mm]
    \zoomedImage[width=\linewidth]{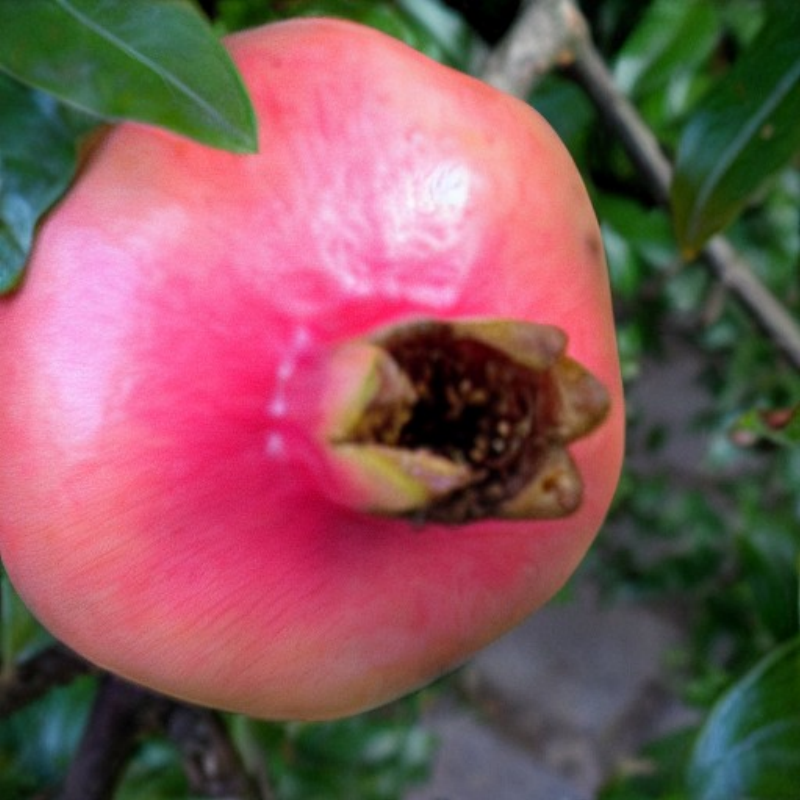}{-0.3,0.6}{1.0,0.5}\\[-1mm]
    \zoomedImage[width=\linewidth]{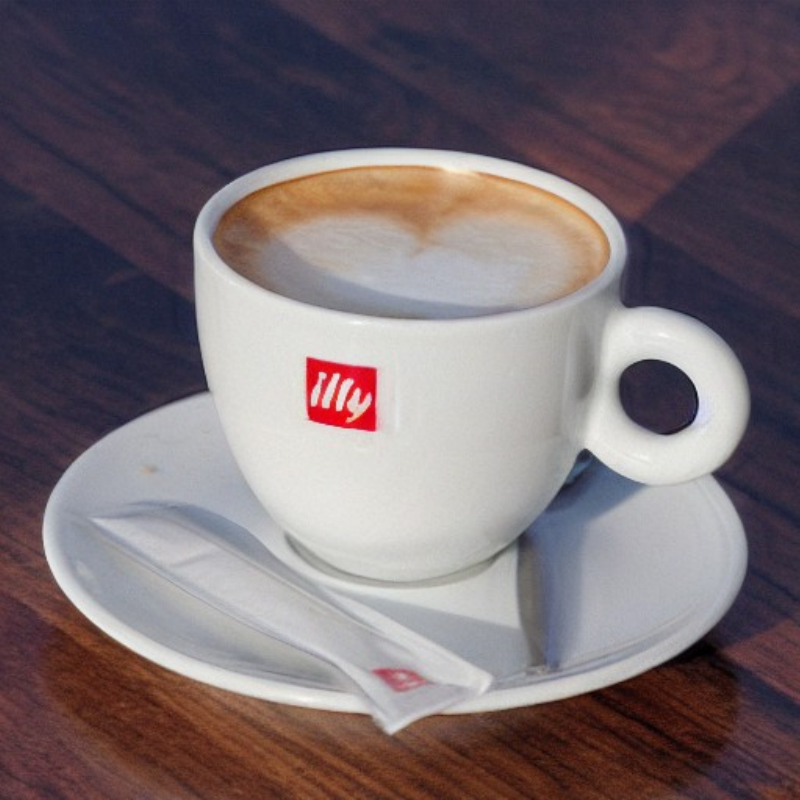}{-0.2,0.1}{1.0,0.5}\\[-1mm]
    \zoomedImage[width=\linewidth]{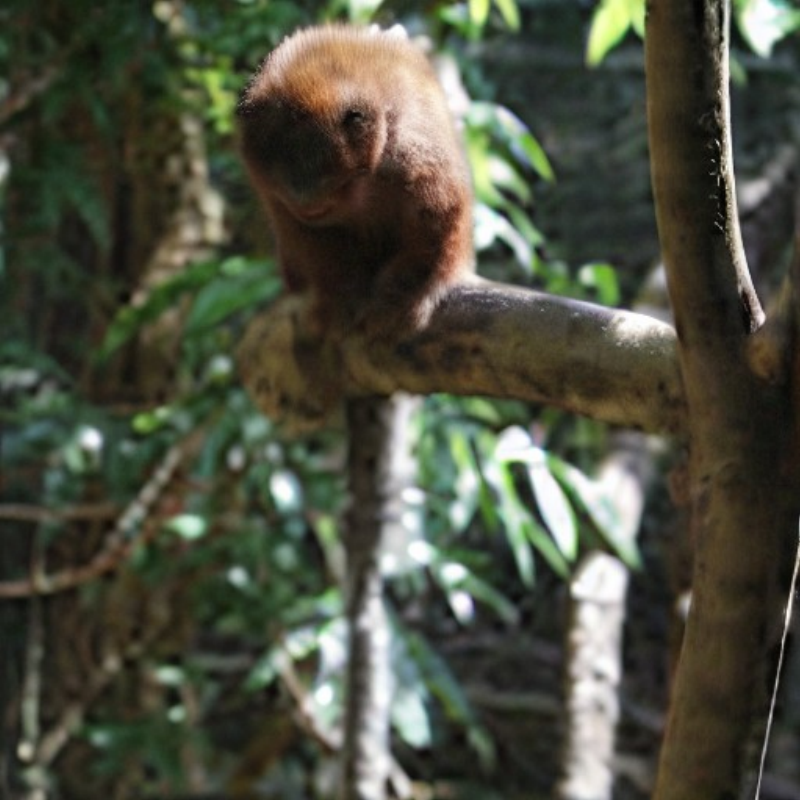}{-0.2,0.6}{1.0,-0.5} \\[-1mm]
\end{minipage}
\hfill
\begin{minipage}{0.13\textwidth}
    \centering \tiny\textbf{LATINO} \\[-1mm]
    \zoomedImage[width=\linewidth]{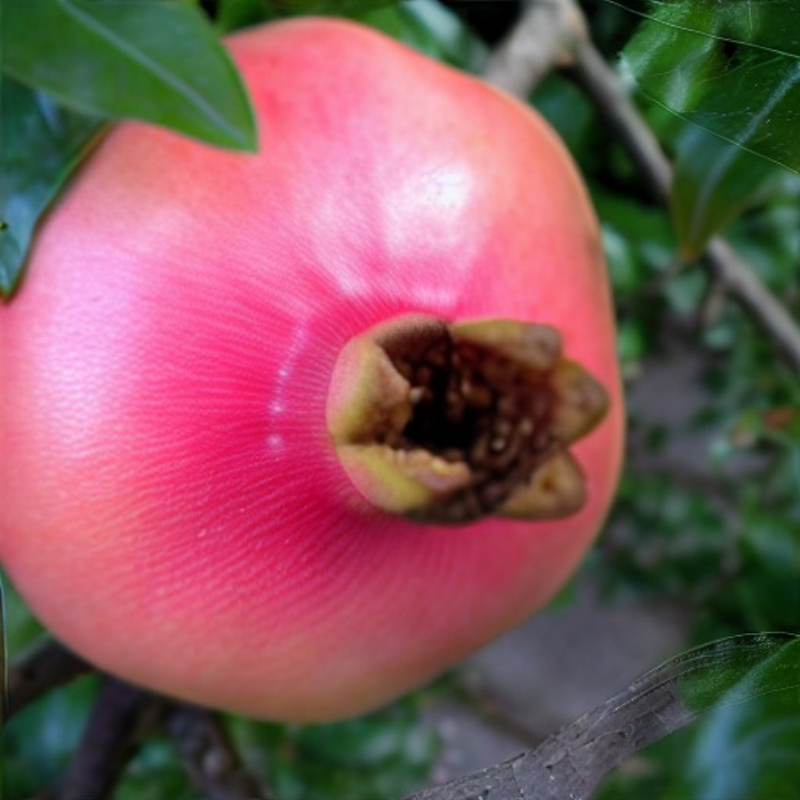}{-0.3,0.6}{1.0,0.5} \\[-1mm]
    \zoomedImage[width=\linewidth]{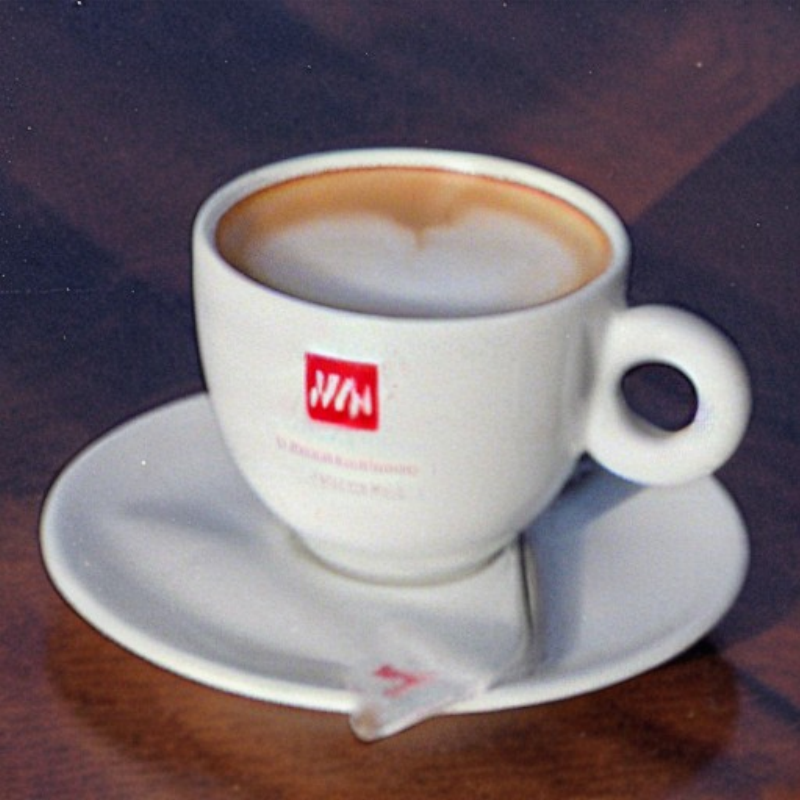}{-0.2,0.1}{1.0,0.5} \\[-1mm]
    \zoomedImage[width=\linewidth]{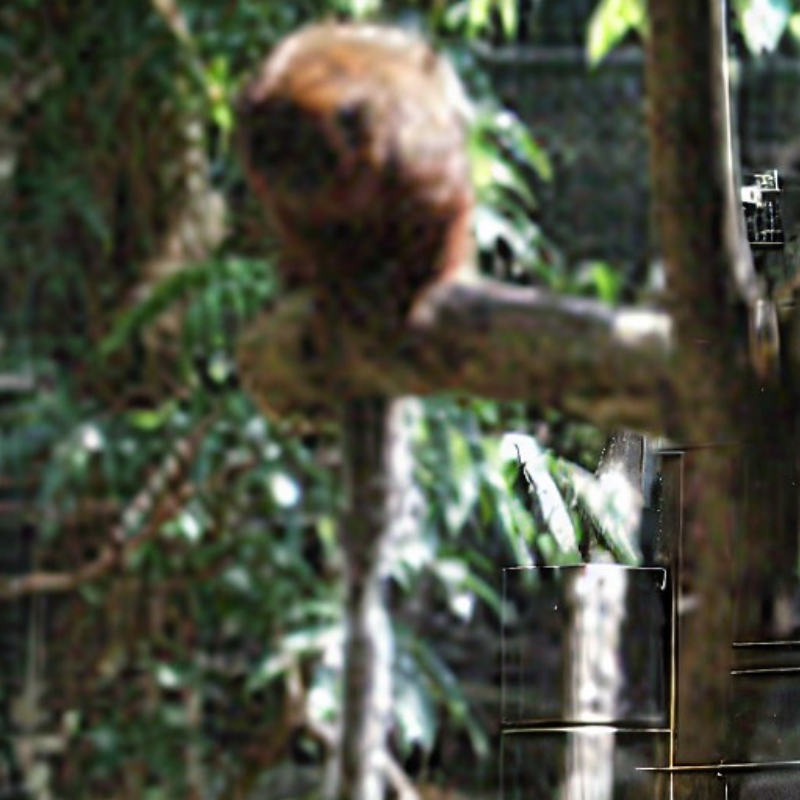}{-0.2,0.6}{1.0,-0.5} \\[-1mm]
\end{minipage}
\hfill
\begin{minipage}{0.13\textwidth}
    \centering \tiny\textbf{LATINO-P} \\[-1mm]
    \zoomedImage[width=\linewidth]{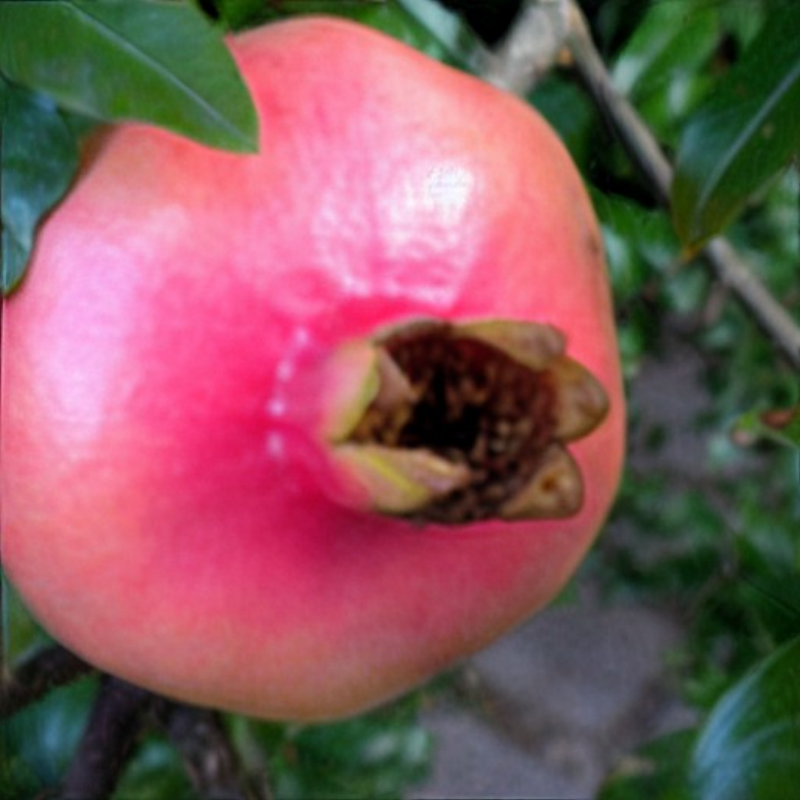}{-0.3,0.6}{1.0,0.5} \\[-1mm]
    \zoomedImage[width=\linewidth]{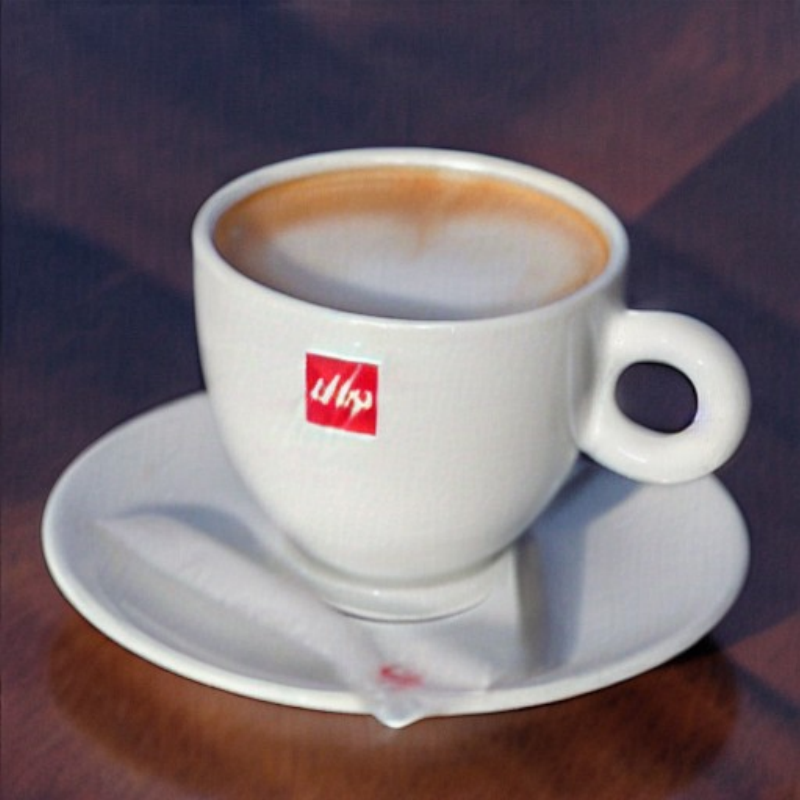}{-0.2,0.1}{1.0,0.5} \\[-1mm]
    \zoomedImage[width=\linewidth]{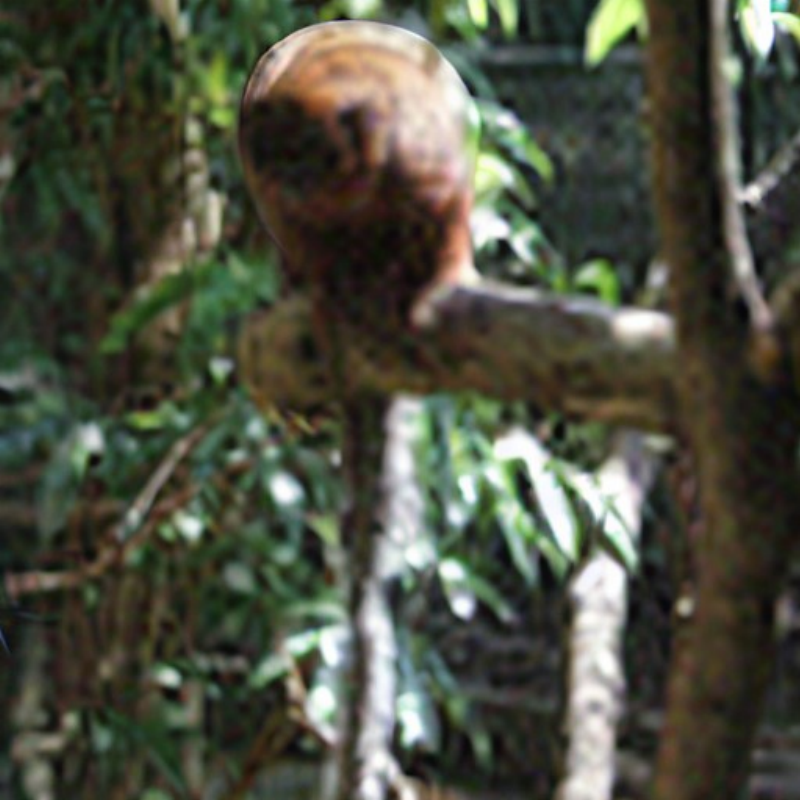}{-0.2,0.6}{1.0,-0.5} \\[-1mm]
\end{minipage}
\hfill
\begin{minipage}{0.13\textwidth}
    \centering \tiny\textbf{TREG} \\[-1mm]
    \zoomedImage[width=\linewidth]{Experiments/ImageNet/Gaussian-Blur/Fruit/TREG.pdf}{-0.3,0.6}{1.0,0.5} \\[-1mm]
    \zoomedImage[width=\linewidth]{Experiments/ImageNet/Motion-Blur/Coffee/TREG.pdf}{-0.2,0.1}{1.0,0.5} \\[-1mm]
    \zoomedImage[width=\linewidth]{Experiments/ImageNet/SRx8/Monkey/TREG.pdf}{-0.2,0.6}{1.0,-0.5} \\[-1mm]
\end{minipage}
\hfill
\begin{minipage}{0.13\textwidth}
    \centering \tiny\textbf{P2L} \\[-1mm]
    \zoomedImage[width=\linewidth]{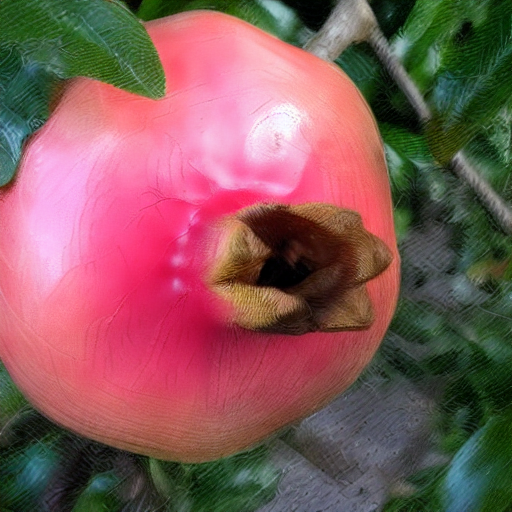}{-0.3,0.6}{1.0,0.5} \\[-1mm]
    \zoomedImage[width=\linewidth]{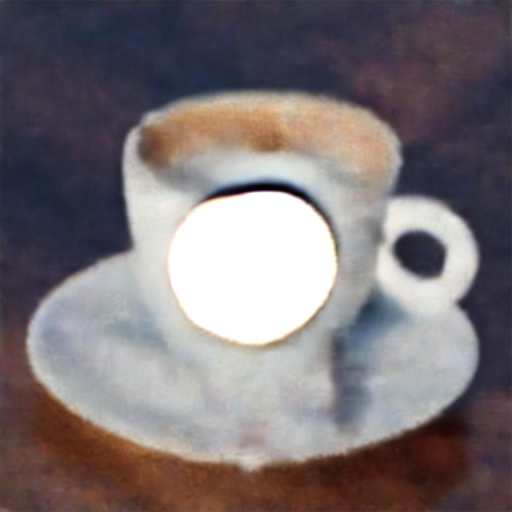}{-0.2,0.1}{1.0,0.5} \\[-1mm]
    \zoomedImage[width=\linewidth]{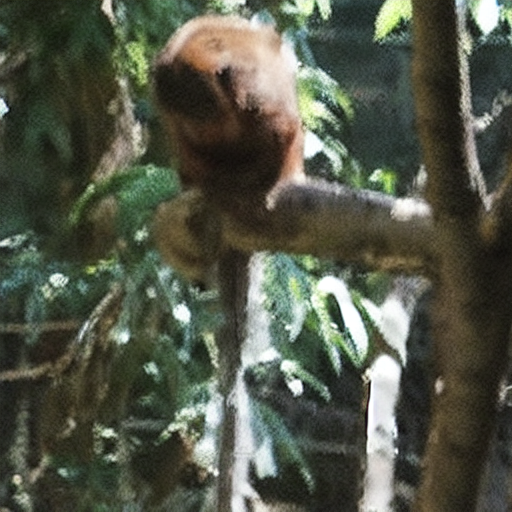}{-0.2,0.6}{1.0,-0.5} \\[-1mm]
\end{minipage}
\vspace{-2mm}
\caption{Qualitative comparison of image restoration results. Samples taken from ImageNet-512. Prompt: \texttt{a photo in high resolution}.}
\label{fig:qualitative_ImageNet}
\end{figure*}

\paragraph{Drifting a wrong prompt.}
To verify that our joint posterior-prompt optimisation is effective, we conduct experiments on the FFHQ test dataset, in which the initial prompt is misinitialised as \texttt{A photo of a cat}. We compare the method's results when prompt optimisation is disabled (CWGF (no prompt opt.) row in Table~\ref{tab:comparison_FFHQ_neg_prompt}) with those of the full CWGF algorithm.
\begin{wraptable}{r}{0.7\textwidth}
\centering
\scriptsize
\setlength{\tabcolsep}{2.2pt}
\renewcommand{\arraystretch}{1.05}
\rowcolors{3}{gray!10}{white}
\resizebox{\linewidth}{!}{%
\begin{tabular}{l c ccc ccc ccc}
\toprule
& & \multicolumn{3}{c}{\textbf{Gaussian}} 
& \multicolumn{3}{c}{\textbf{Motion}} 
& \multicolumn{3}{c}{\textbf{SR $\times8$}} \\
\cmidrule(lr){3-5}\cmidrule(lr){6-8}\cmidrule(lr){9-11}
\textbf{Method} & \textbf{NFE}
& \textbf{FID} & \textbf{PSNR} & \textbf{LPIPS}
& \textbf{FID} & \textbf{PSNR} & \textbf{LPIPS}
& \textbf{FID} & \textbf{PSNR} & \textbf{LPIPS} \\
\midrule
CWGF (no prompt opt.)
& \textbf{16} 
& \underline{32.91} & \underline{28.64} & 0.354
& \underline{32.20} & \underline{27.86} & \underline{0.357}
& 92.43 & \underline{26.24} & 0.456 \\
LATINO-PRO 
& \underline{65} 
& 44.67 & 28.63 & \underline{0.331} 
& 44.40 & 25.31 & 0.422 
& 85.58 & 26.17 & \underline{0.452} \\
P2L 
& 400 
& 46.40 & 26.06 & 0.476 
& 65.22 & 23.79 & 0.542 
& \underline{80.68} & 23.43 & 0.540 \\
TReg 
& 200 
& 95.96 & 22.89 & 0.463 
& 164.4 & 20.87 & 0.518 
& 137.0 & 21.29 & 0.528 \\
\midrule
\rowcolor{orange!20}
CWGF
& \textbf{16} 
& \textbf{27.58} & \textbf{29.59} & \textbf{0.322} 
& \textbf{25.99} & \textbf{28.16} & \textbf{0.339} 
& \textbf{50.07} & \textbf{27.39} & \textbf{0.415} \\
\bottomrule
\end{tabular}
}
\caption{
Negative prompt scenario on FFHQ-512 with prompt \texttt{A photo of a cat}. Results are reported for Gaussian deblurring, motion deblurring, and $\times8$ super-resolution with $\sigma_\vy=0.01$. \textbf{Bold}: best; \underline{underline}: second best.
}
\label{tab:comparison_FFHQ_neg_prompt}
\vspace{-1.0em}
\end{wraptable}
We also compare our method with LATINO, LATINO-PRO, P2L, and TReg, showing improved semantic alignment. Indeed, while other methods fail to yield correct high-resolution features, CWGF improves prompt retrieval, which is necessary to correctly leverage the prior's capabilities to generate such details.

\begin{figure*}[h]
\centering
\begin{minipage}{0.03\textwidth}
  \centering
  \begin{tabular}{c}
    \\[-8mm]
    \rotatebox{90}{\scalebox{.6}{\textbf{Gaussian deblur}}} \\[5mm]
    \rotatebox{90}{\scalebox{.6}{\textbf{Motion deblur}}}  \\[10mm]
    \rotatebox{90}{\scalebox{.6}{\textbf{SRx8}}}
  \end{tabular}
\end{minipage}
\hfill
\begin{minipage}{0.15\textwidth}
    \centering \tiny\textbf{GT} \\[-1mm]
    \zoomedImage[width=\linewidth]{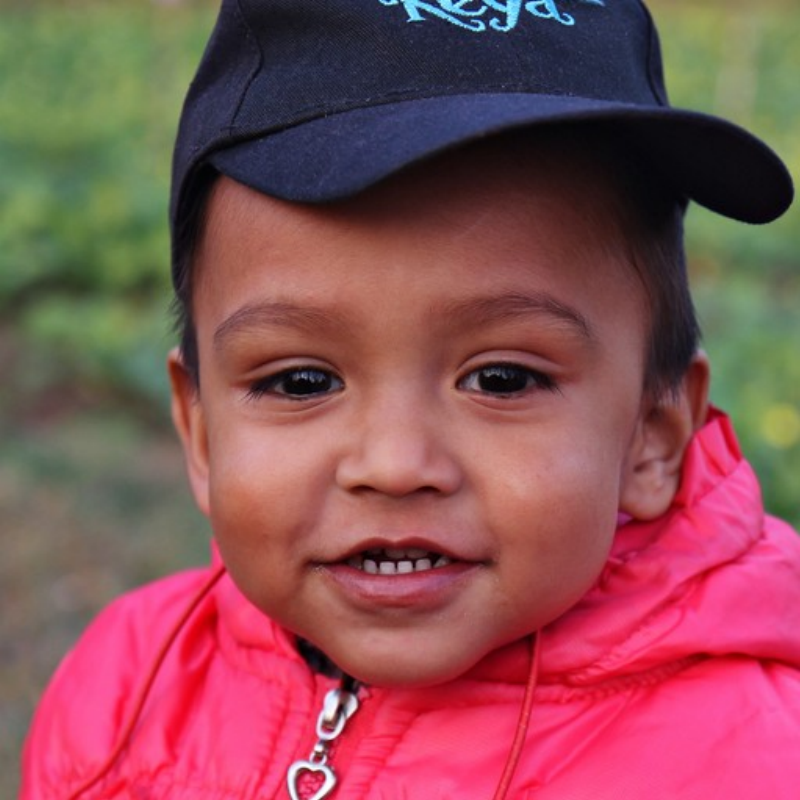}{-0.1,-0.4}{1.0,0.6} \\[-1mm]
    \zoomedImage[width=\linewidth]{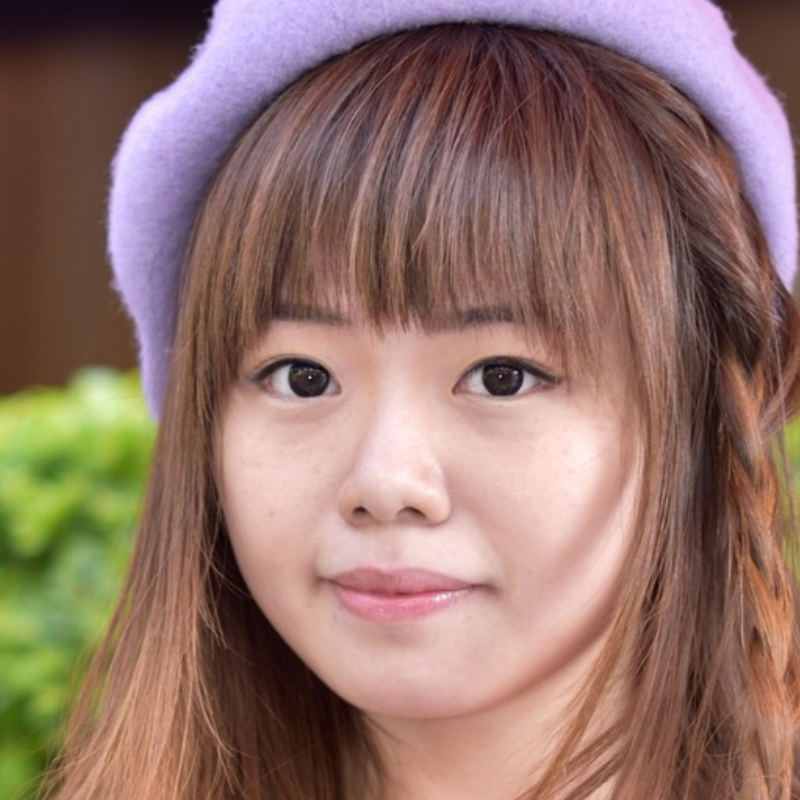}{-0.2,0.1}{1.0,0.5} \\[-1mm]
    \zoomedImage[width=\linewidth]{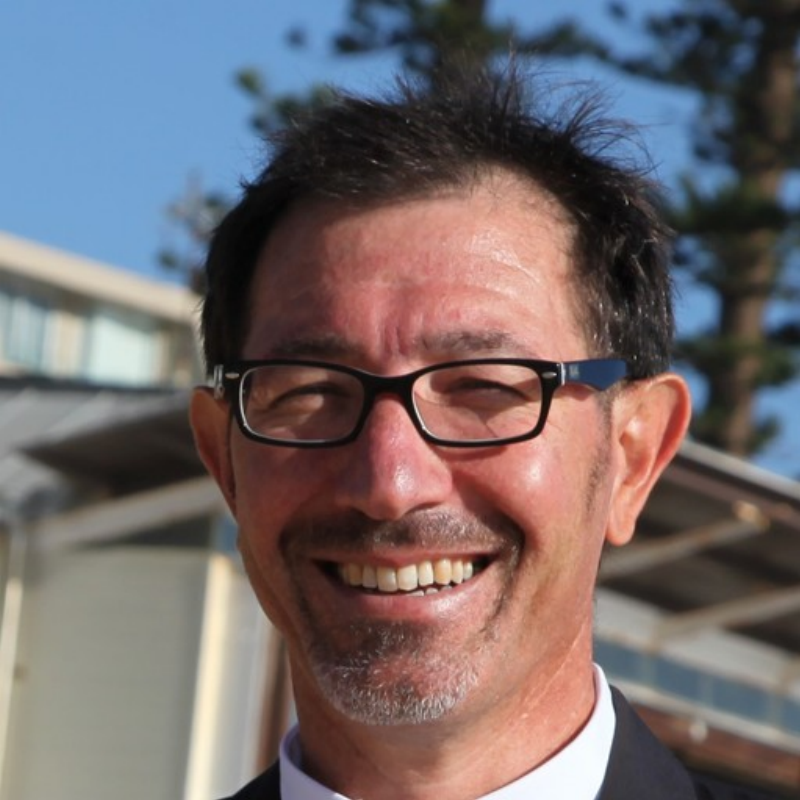}{-0.2,0.1}{1.0,-0.5} \\[-1mm]
\end{minipage}
\hfill
\begin{minipage}{0.15\textwidth}
  \centering \tiny\textbf{Measurement} \\[-1mm]
    \zoomedImage[width=\linewidth]{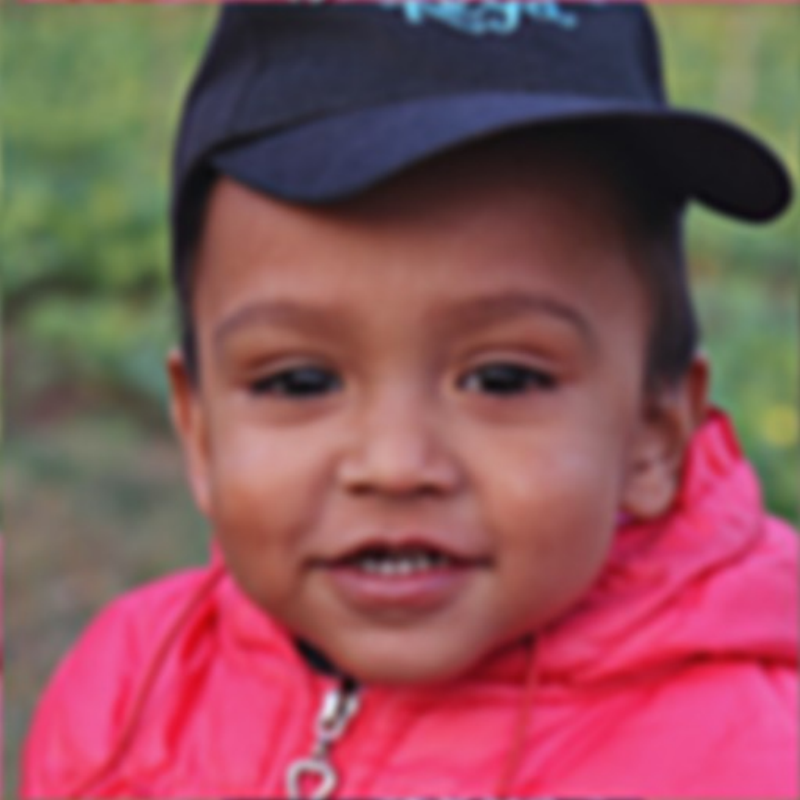}{-0.1,-0.4}{1.0,0.6}\\[-1mm]
    \zoomedImage[width=\linewidth]{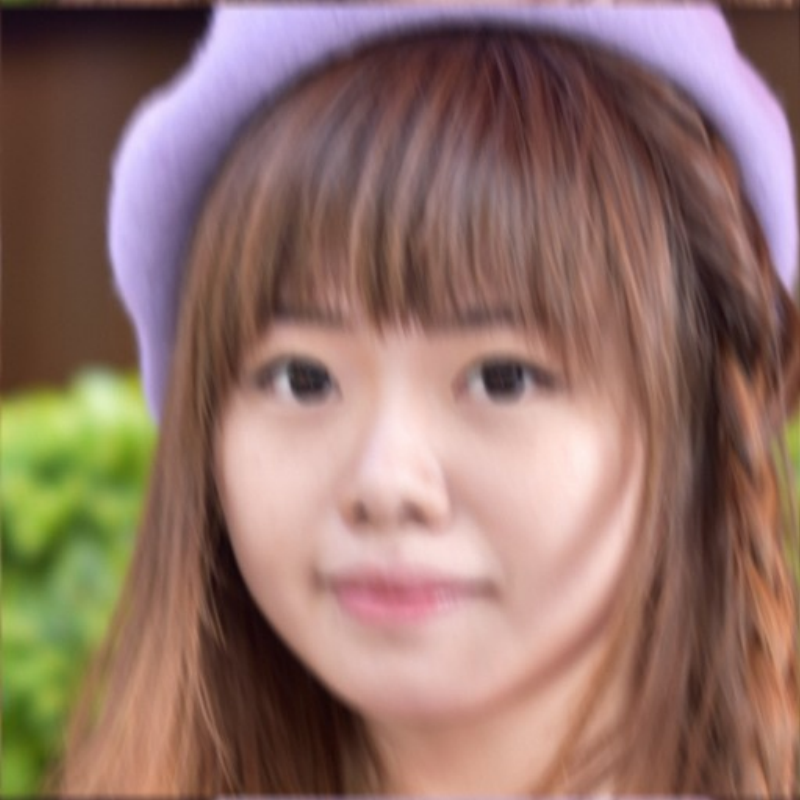}{-0.2,0.1}{1.0,0.5}\\[-1mm]
    \zoomedImage[width=\linewidth]{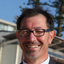}{-0.2,0.1}{1.0,-0.5} \\[-1mm]
\end{minipage}
\hfill
\begin{minipage}{0.15\textwidth}
    \centering \tiny\textbf{CWGF} \\[-1mm]
    \zoomedImage[width=\linewidth]{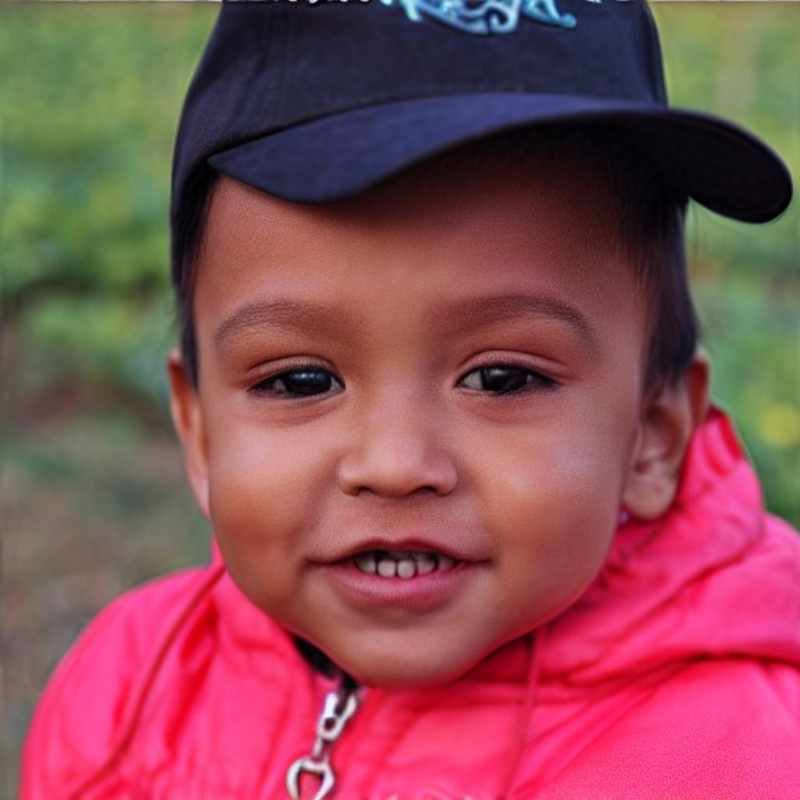}{-0.1,-0.4}{1.0,0.6}\\[-1mm]
    \zoomedImage[width=\linewidth]{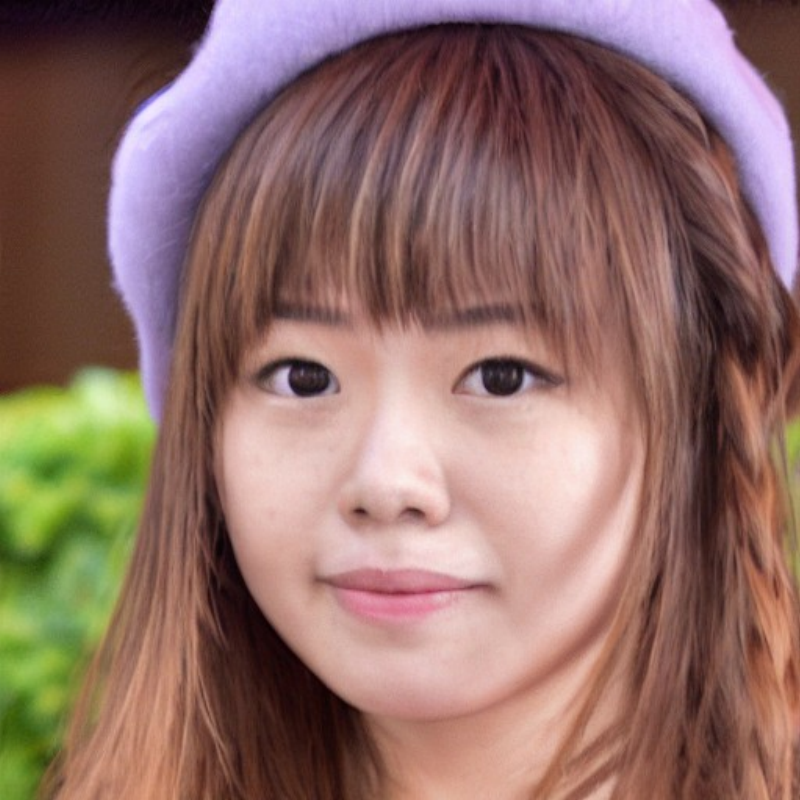}{-0.2,0.1}{1.0,0.5}\\[-1mm]
    \zoomedImage[width=\linewidth]{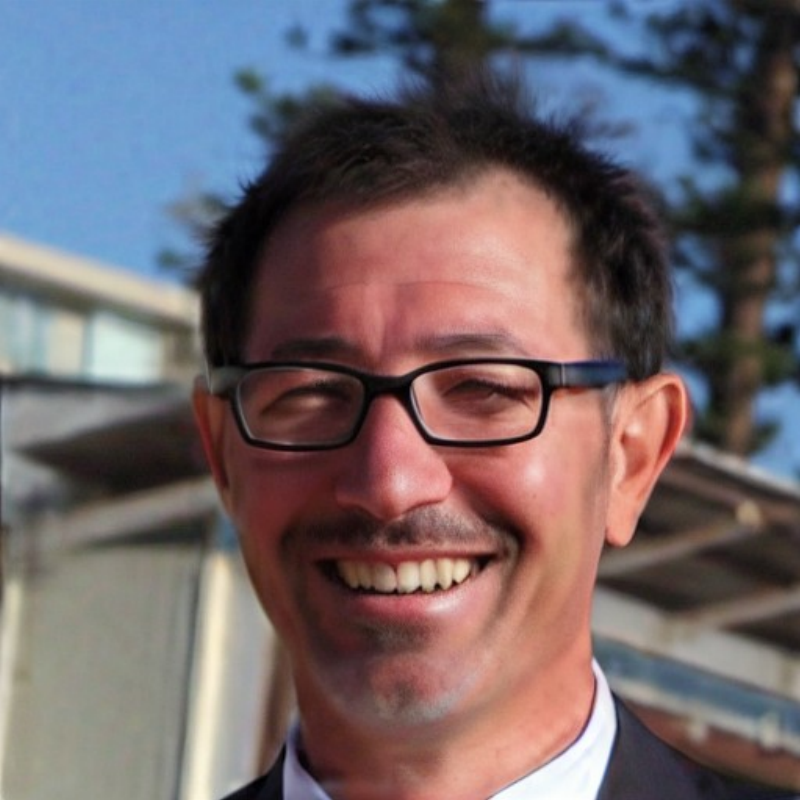}{-0.2,0.1}{1.0,-0.5} \\[-1mm]
\end{minipage}
\hfill
\begin{minipage}{0.15\textwidth}
    \centering \tiny\textbf{LATINO-PRO} \\[-1mm]
    \zoomedImage[width=\linewidth]{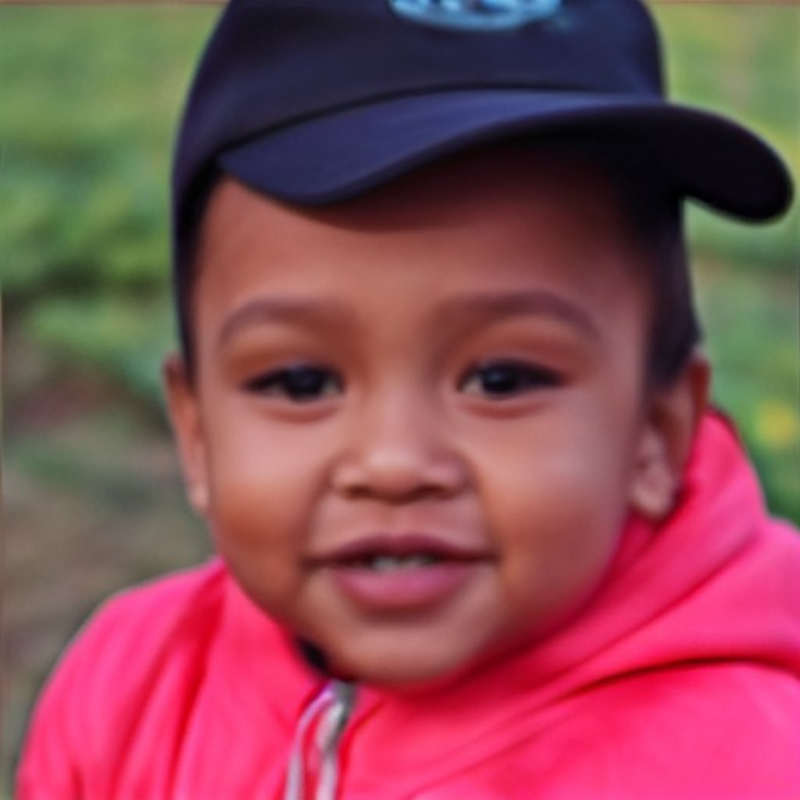}{-0.1,-0.4}{1.0,0.6} \\[-1mm]
    \zoomedImage[width=\linewidth]{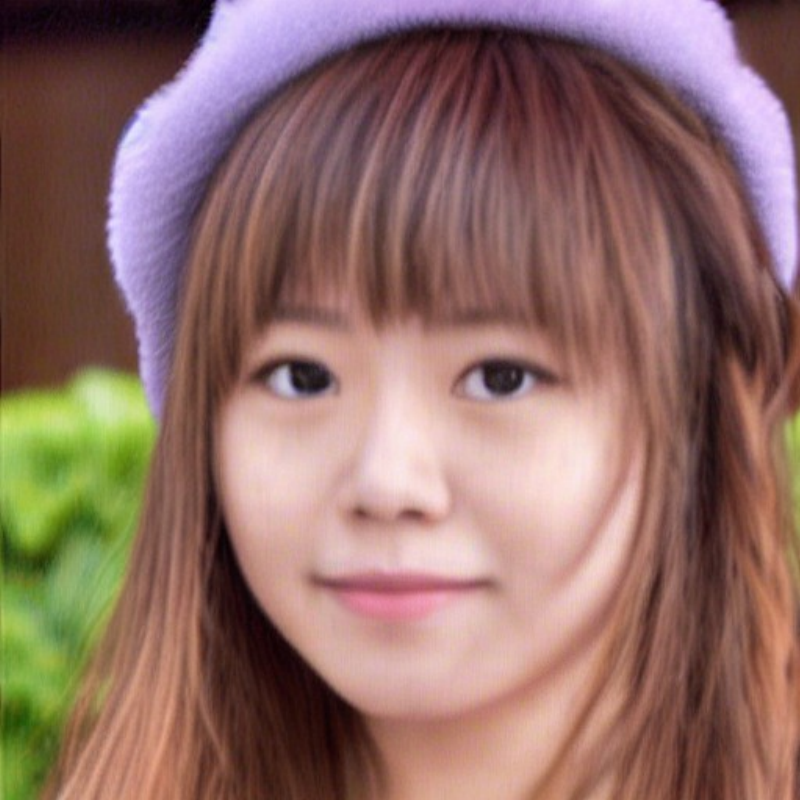}{-0.2,0.1}{1.0,0.5} \\[-1mm]
    \zoomedImage[width=\linewidth]{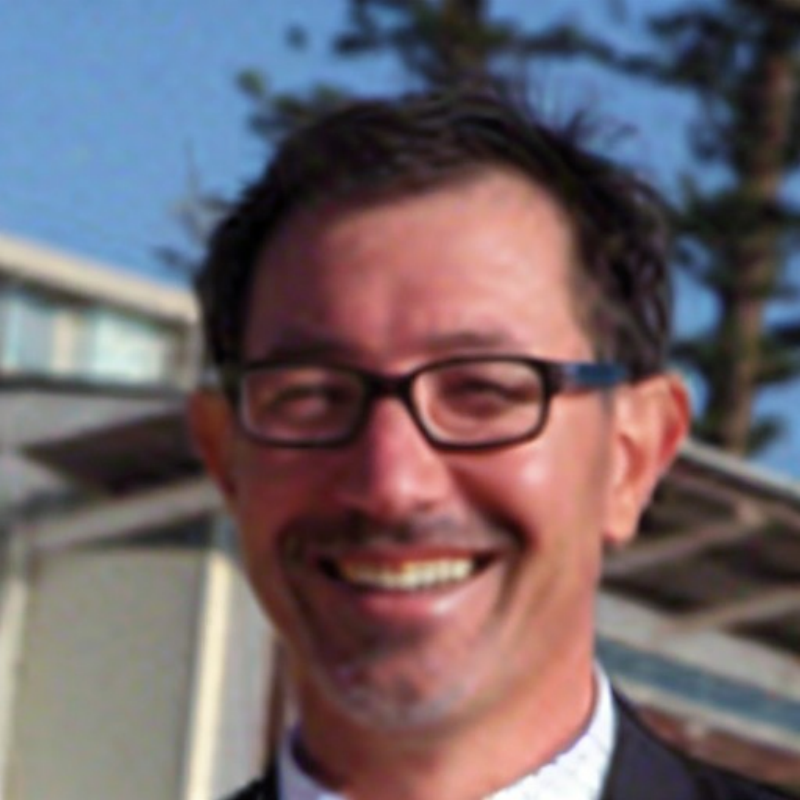}{-0.2,0.1}{1.0,-0.5} \\[-1mm]
\end{minipage}
\hfill
\begin{minipage}{0.15\textwidth}
    \centering \tiny\textbf{TREG} \\[-1mm]
    \zoomedImage[width=\linewidth]{Experiments/FFHQ/Gaussian-Blur/60878/TREG-cat.pdf}{-0.1,-0.4}{1.0,0.6} \\[-1mm]
    \zoomedImage[width=\linewidth]{Experiments/FFHQ/Motion-Blur/60692/TREG-cat.pdf}{-0.2,0.1}{1.0,0.5} \\[-1mm]
    \zoomedImage[width=\linewidth]{Experiments/FFHQ/SRx8/60362/TREG-cat.pdf}{-0.2,0.1}{1.0,-0.5} \\[-1mm]
\end{minipage}
\hfill
\begin{minipage}{0.15\textwidth}
    \centering \tiny\textbf{P2L} \\[-1mm]
    \zoomedImage[width=\linewidth]{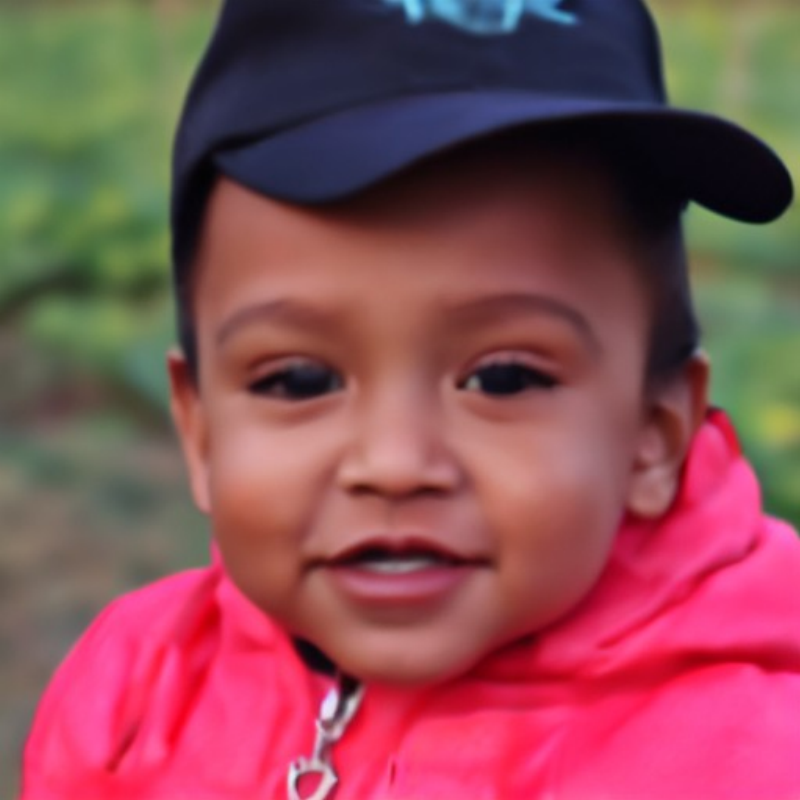}{-0.1,-0.4}{1.0,0.6} \\[-1mm]
    \zoomedImage[width=\linewidth]{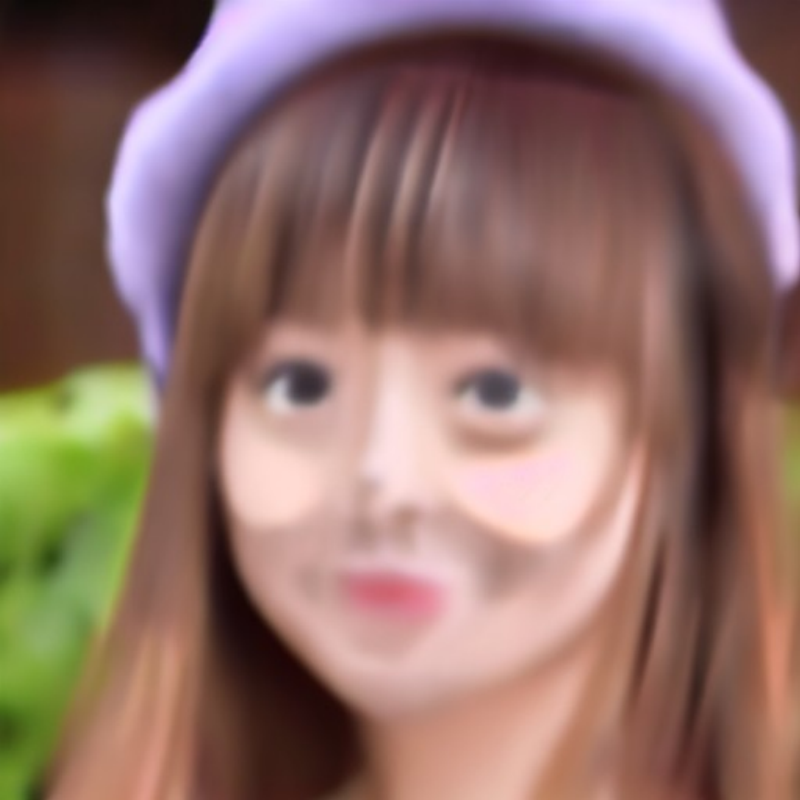}{-0.2,0.1}{1.0,0.5} \\[-1mm]
    \zoomedImage[width=\linewidth]{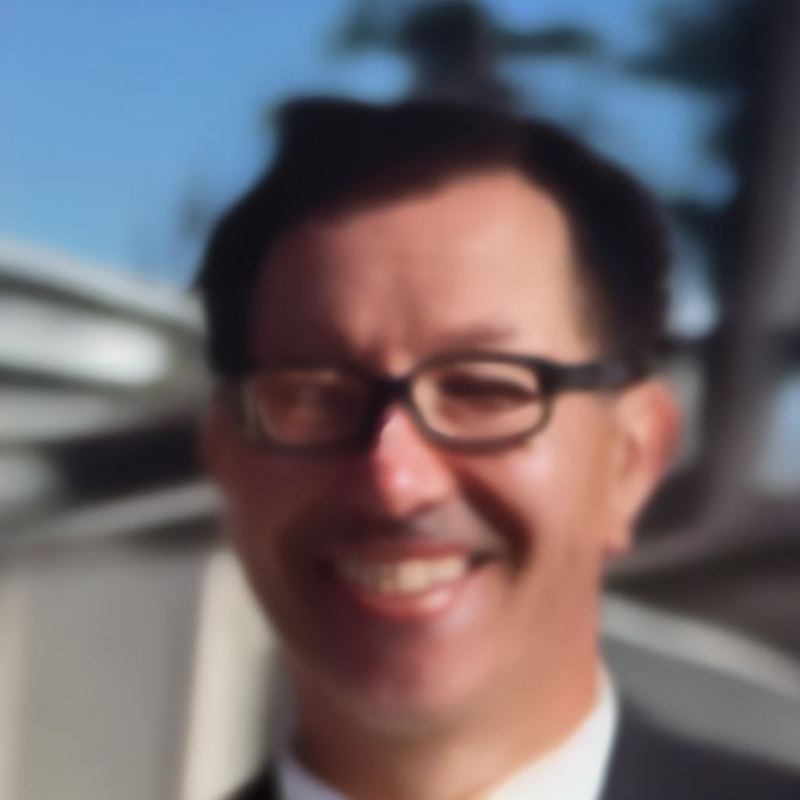}{-0.2,0.1}{1.0,-0.5} \\[-1mm]
\end{minipage}
\vspace{-2mm}
\caption{Qualitative comparison of image restoration results. Samples taken from FFHQ-512. Prompt: \texttt{A photo of a cat}.}
\label{fig:qualitative_FFHQ_cat}
\end{figure*}

%% file: sections/6-conclusion.tex
\section{Conclusion}\label{sec:conclusion}
We propose CWGF, a novel method for solving inverse problems with pre-trained text-to-image LCM priors. CWGF leverages a joint Euclidean-Wasserstein gradient flow over prompt embeddings and latent distributions. Extensive experiments show that CWGF delivers state-of-the-art performance with a significantly reduced computational cost. However, there remain some limitations that future work should address. First, while we have assumed a Euclidean geometry for the prompt embeddings, their geometry is not fully understood and future work should explore non-Euclidean constructions~\citep{karrisWhichWayRole2025,leviDoubleEllipsoidGeometryCLIP2025}. Second, CWGF is based on a consistency model prior, leaving extensions to other few-step models as future work~\citep{boffi2025buildconsistencymodellearning,boffi2025flowmapmatchingstochastic,geng2025meanflowsonestepgenerative,dengGenerativeModelingDrifting2026}.

%% file: sections/7-acknow.tex
\section*{Acknowledgments and Disclosure of Funding}
MP acknowledges support by UKRI Engineering and Physical Sciences Research Council (EPSRC) (EP/V006134/1,EP/Z534481/1). AS acknowledges support from the France 2030 research program on artificial intelligence via the PEPR PDE-AI grant (ANR-23-PEIA-0004). HPC resources provided by GENCI-IDRIS Jean-Zay (Grant 2024-AD011014557).
TW is supported by the Roth Scholarship from the Department of Mathematics, Imperial College London. The authors acknowledge computational resources and support provided by the Department of Mathematics and the Imperial College Research Computing Service, DOI: 10.14469/hpc/2232. The authors acknowledge the use of resources provided by the Isambard-AI National AI Research Resource (AIRR)~\citep{mcintoshsmith2024isambardaileadershipclasssupercomputer}. Isambard-AI is operated by the University of Bristol and is funded by the UK Government’s Department for Science, Innovation and Technology (DSIT) via UK Research and Innovation; and the Science and Technology Facilities Council [ST/AIRR/I-A-I/1023].

%% file: appendix.tex
\section{Derivations and Proofs}
\label{app:proofs}
\subsection{Background on \texorpdfstring{$\mathsf{C}\times\mathcal{P}_2(\sZ)$}{CxP2(Z)}'s Geometry}\label{app-sec:background-w2}
\paragraph{Geometric Constructions.}~We endow the product space $\mathsf{C}\times\mathcal{P}_2(\sZ)$ with geometry induced by the Wasserstein-2 metric on $\mathcal{P}_2(\sZ)$ and the Euclidean metric on $\mathsf{C}$ similarly to~\citet{kuntz23a}. The tangent space at a point $(\vc, \mu)$ is given by:
\begin{equation}
    T_{(\vc, \mu)}(\mathsf{C}\times\mathcal{P}_2(\sZ)) = T_{\vc}(\mathsf{C}) \times T_{\mu}(\mathcal{P}_2(\sZ)),
\end{equation}
where $T_{\vc}(\mathsf{C})$ is the usual Euclidean space and $T_{\mu}(\mathcal{P}_2(\sZ))$ is the tangent space at $\mu$ defined by the Wasserstein-2 geometry~\citep{otto2001,figalli2021invitation}:
\begin{equation}
    T_{\mu}(\mathcal{P}_2(\sZ)) = \left\{
                \partial_t \mu_t|_{t=0} \ \middle|\ \partial_t \mu_t + \nabla \cdot (\mu_t v_t) = 0, \mu_0 = \mu, \; v_t=\nabla\psi_t, \psi_t\in C_c^\infty(\sZ)
            \right\}.
\end{equation}
We define the Riemannian metric on the tangent space as a product metric:
\begin{equation}
    \langle (u_1, v_1), (u_2, v_2) \rangle_{(\vc, \mu)} = \langle u_1, u_2 \rangle + \langle v_1, v_2 \rangle_\mu,
\end{equation}
where $\langle u_1, u_2 \rangle$ is the standard Euclidean inner product and $\langle v_1, v_2 \rangle_\mu$ is the Wasserstein-2 inner product defined as:
\begin{equation}
    \langle v_1, v_2 \rangle_\mu = \int \nabla \psi_{v_1}(\vz) \cdot \nabla \psi_{v_2}(\vz) \mu(\vz) \dd \vz,
\end{equation}
where $\psi_{v_1}$ and $\psi_{v_2}$ are the potentials associated with the vector fields $v_1$ and $v_2$ satisfying $v_i = -\nabla \cdot (\mu \nabla \psi_{v_i})$ for $i=1,2$.
In what follows, we restrict to the subspace of distributions that have probability densities w.r.t. the Lebesgue measure and we thus use $\mathcal{P}_{2}(\sZ)$ interchangeably with $\mathcal{P}_{2,a.c.}(\sZ)$. 

\paragraph{The Gradient.}~The Wasserstein-2 gradient at $\mu \in \mathcal{P}_2(\mathsf{Z})$ is the unique tangent vector in $ \mathrm{grad}_{W_2} \mathcal{L}[\mu] \in T_\mu \mathcal{P}_2(\mathsf{Z})$ such that for any tangent vector $v \in T_\mu \mathcal{P}_2(\mathsf{Z})$, we have:
\begin{equation}\label{eq:w2grad}
    \langle \mathrm{grad}_{W_2} \mathcal{L}[\mu], v \rangle_{\mu} = \lim_{\varepsilon \to 0} \frac{\mathcal{L}[\mu_\varepsilon] - \mathcal{L}[\mu]}{\varepsilon},
\end{equation}
for any smooth curve $\mu_\varepsilon: (-\varepsilon_0, \varepsilon_0) \to \mathcal{P}_2(\mathsf{Z})$ satisfying $\mu_0 = \mu$ and $\left.\partial_\varepsilon \mu_\varepsilon\right|_{\varepsilon=0} = v$.

For a regular functional $\mathcal{L}:\mathcal{P}_2(\sZ)\to \R$ satisfying the assumptions discussed in~\citet[10.4.1]{ambrosio2008gradient}, we define its first variation at $\mu$ as the unique function $\delta \mathcal{L}/\delta \mu:\sZ\to \R$ satisfying:
\begin{equation}
\lim_{\varepsilon \searrow 0} \frac{\mathcal{L}[\mu + \varepsilon \chi] - \mathcal{L}[\mu]}{\varepsilon} = \int \frac{\delta\mathcal{L}}{\delta \mu}(\vz) \, \chi(\dd \vz) ,
\end{equation}
for all signed measures $\chi$ such that $\mu+\varepsilon \chi\in\mathcal{P}_2(\sZ)$ for all sufficiently small $\varepsilon$. The Wasserstein-2 gradient of $\mathcal{L}$ at $\mu$ is given by~\citep[Chapter 4]{figalli2021invitation}:
\begin{equation}
    \mathrm{grad}_{W_2} \mathcal{L}[\mu] = -\nabla \cdot \left(\mu \nabla \frac{\delta \mathcal{L}}{\delta \mu}\right).
\end{equation}
Therefore, for a functional $\mathcal{F}: \mathsf{C}\times \mathcal{P}_2(\sZ)\to \R$ defined on the product manifold that is differentiable in $c$ and regular in $\mu$, the gradient at $(c,\mu)$ is given by:
\begin{equation}
    \nabla \mathcal{F}[c,\mu] := \left(\nabla_c \mathcal{F}[c,\mu], \mathrm{grad}_{W_2} \mathcal{F}[c,\mu]\right).
\end{equation}

We first state a standard result on the first variation of the KL divergence. (cf.~\citet{villani2003,santambrogio2016euclideanmetricwasserstein,figalli2021invitation})
\begin{lemma}[First Variation of KL Divergence]\label{lemma:first_var_kl}
    For $p, q \in \mathcal{P}_2(\sZ)$, the first variation of the KL divergence is given by:
    \begin{equation}
        \frac{\delta \KL(q\|p)}{\delta q} (z) = \log q(z) - \log p(z) +1, \quad \forall z\in \sZ
    \end{equation}
\end{lemma}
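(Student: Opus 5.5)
The plan is to compute the first variation directly from the definition given in Appendix~\ref{app-sec:background-w2}. Fix $q,p\in\mathcal{P}_2(\sZ)$ with densities, assume $\KL(q\|p)<\infty$, and take a signed measure $\chi$ (with density, also denoted $\chi$) such that $q_\varepsilon:=q+\varepsilon\chi\in\mathcal{P}_2(\sZ)$ for small $\varepsilon>0$. Since $q_\varepsilon$ is a probability measure, $\int\chi(\dd z)=0$. We then write
\begin{equation*}
\KL(q_\varepsilon\|p)=\int q_\varepsilon(z)\log q_\varepsilon(z)\,\dd z-\int q_\varepsilon(z)\log p(z)\,\dd z .
\end{equation*}
The second term is linear in $\varepsilon$. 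Its derivative at $\varepsilon=0$ is exactly $-\int\log p\,\dd\chi$.

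For the entropy term, we use the pointwise derivative of $\phi(s)=s\log s$, namely $\phi'(s)=\log s+1$. For each $z$,
\begin{equation*}
\frac{\phi(q(z)+\varepsilon\chi(z))-\phi(q(z))}{\varepsilon}\to(\log q(z)+1)\chi(z).
\end{equation*}
Combining the two terms gives
\begin{equation*}
\lim_{\varepsilon\searrow0}\frac{\KL(q_\varepsilon\|p)-\KL(q\|p)}{\varepsilon}=\int(\log q-\log p+1)\,\dd\chi .
\end{equation*}
This identifies $\delta\KL(q\|p)/\delta q=\log q-\log p+1$.

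The first variation is only defined up to an additive constant, because $\int\dd\chi=0$. The ``$+1$'' is therefore the canonical representative obtained from $\phi'$; it is immaterial for the Wasserstein gradient, which only involves $\nabla(\delta\KL/\delta q)=\nabla\log q-\nabla\log p$. This gives the drift in \eqref{eq:R-particle-drift}.

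The main obstacle is justifying the exchange of limit and integral in the entropy term. The standard route uses convexity of $\phi$: the difference quotient $\varepsilon^{-1}[\phi(q+\varepsilon\chi)-\phi(q)]$ is monotone non-increasing as $\varepsilon\searrow0$. It is bounded above by its value at a fixed $\varepsilon_0$, which is integrable because $q_{\varepsilon_0}$ has finite entropy relative to $p$. Monotone convergence then yields the limit, possibly $-\infty$, and it is finite under the integrability assumption $\int|\log q-\log p|\,\dd|\chi|<\infty$ implicit in the regularity conditions of \citet[10.4.1]{ambrosio2008gradient}. Finite entropy of $q_{\varepsilon_0}$ is guaranteed because $\mathcal{P}_2$ measures have entropy bounded below via their second moments. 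Since the lemma is standard (\citet{villani2003,santambrogio2016euclideanmetricwasserstein}), we would state these integrability conditions as standing assumptions rather than re-derive them in full.
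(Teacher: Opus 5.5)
Your proposal is correct and is essentially the paper's argument: split $\KL(q_\varepsilon\|p)$ into entropy and cross-entropy parts, differentiate $s\log s$ pointwise to get $(\log q+1)\chi$, and use linearity for $-\int\log p\,\dd\chi$. The paper simply integrates the pointwise Taylor expansion with an $o(t)$ remainder, so your monotone-convergence justification and your remark that the first variation is only defined up to an additive constant are extra rigor, not a different route.

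One small slip in that extra part: finite second moments only bound the entropy from below, i.e.\ they give integrability of $\phi(q)^-$. The integrability of $\phi(q_{\varepsilon_0})^+$ needed for your upper bound must therefore come from an assumption such as $\KL(q_{\varepsilon_0}\|p)<\infty$ together with finite cross-entropy, not from $q_{\varepsilon_0}\in\mathcal{P}_2(\sZ)$.
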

\begin{proof}
    Using the equivalent definition of first variation, we can write for $m \in T\mathcal{P}_2(\sZ)$ and $t>0$:
    \begin{equation}
        \KL(q+tm\|p) = \KL(q\|p) + t\left\langle m, \frac{\delta \KL(q\|p)}{\delta q} \right\rangle  + o(t),
    \end{equation}
    where the inner product is defined $\langle m,f\rangle:=\int_{\sZ} f(z)m(z) \mathrm{d} z$ for all $f,m\in T\mathcal{P}_2(\sZ)$.
    Using Taylor expansion of $(z+t)\log(z+t)=z\log z+t(\log z+1)+o(t)$, we can write $\KL(q+tm\|p)$ as:
    \begin{align}
        \KL(q+tm\|p) &= \int (q(z)+tm(z)) [\log (q(z)+tm(z))-\log p(z)] \dd z \\
        &=\int q(z)\log q(z) \dd z - \int q(z) \log p(z) \dd z \\
        &+ t\int (\log q(z) - \log p(z) +1) m(z)\dd z   + o(t),
    \end{align}
    which shows the desired result by matching the terms.
\end{proof}
As a consequence, the Wasserstein-2 gradient of the KL divergence is given by:
\begin{equation}
    \mathrm{grad}_{W_2} \KL(q\|p) = -\nabla \cdot \left(q \nabla \log \frac{q}{p}\right).
\end{equation}

\begin{lemma}\label{lemma:semi-implicit}
    For $\mathcal{L}: \mathcal{P}_2(\sZ)\to \R$ with $\mathcal{L}[\mu] :=\int w(z)\mu(z) \dd z$ for any fixed $w(z):\R^d \to \R$, we have:
    \begin{equation}
        \frac{\delta\mathcal{L}[\mu]}{\delta \mu}(z) = w(z)
    \end{equation}
\end{lemma}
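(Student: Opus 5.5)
The plan is to compute the first variation directly from the definition given above, the same way as in Lemma~\ref{lemma:first_var_kl}, except that here the functional is linear in $\mu$ and the expansion is exact. Take any signed measure $\chi$ such that $\mu+\varepsilon\chi\in\mathcal{P}_2(\sZ)$ for all small $\varepsilon>0$. By linearity of the integral,
\begin{equation*}
\mathcal{L}[\mu+\varepsilon\chi]=\int w(z)\,\mu(\dd z)+\varepsilon\int w(z)\,\chi(\dd z)=\mathcal{L}[\mu]+\varepsilon\int w(z)\,\chi(\dd z).
\end{equation*}
So the difference quotient $(\mathcal{L}[\mu+\varepsilon\chi]-\mathcal{L}[\mu])/\varepsilon$ equals $\int w\,\dd\chi$ for every $\varepsilon$. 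The limit as $\varepsilon\searrow0$ is therefore $\int w\,\dd\chi$, and comparing with the defining identity of the first variation gives $\delta\mathcal{L}/\delta\mu=w$.

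The one step that needs care is integrability. The integrals $\int w\,\dd\mu$ and $\int w\,\dd\chi$ must be finite so that the splitting above is legitimate.
\begin{itemize}
\item I will assume $w$ has at most quadratic growth, $|w(z)|\le C(1+\|z\|^2)$. Then every measure in $\mathcal{P}_2(\sZ)$ integrates $w$.
\item Since $\chi=(\mu_\varepsilon-\mu)/\varepsilon$ with $\mu_\varepsilon:=\mu+\varepsilon\chi\in\mathcal{P}_2(\sZ)$, $\chi$ is a difference of two measures with finite second moments, scaled by $1/\varepsilon$. Hence it also integrates $w$.
\item For the case of interest, $w(z)=-\log p(\vy|\vz)$, this growth condition is mild and natural for Gaussian-type likelihoods.
\end{itemize}
This regularity is the only obstacle I expect; the rest of the argument is immediate.

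Finally, I would address uniqueness. The first variation is determined only up to an additive constant, because $\chi(\sZ)=0$ for admissible perturbations. This constant is harmless: the statement is used through $\nabla_z\,\delta\mathcal{L}/\delta\mu=\nabla w$ in the Wasserstein gradient, and the gradient kills constants. Applying the result with $w=-\log p(\vy|\vz)$ then gives the drift $\nabla_\vz\log p(\vy|\vz)$ used in Section~\ref{sec:method-likelihood}.
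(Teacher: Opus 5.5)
Your argument is correct and is essentially the paper's proof: expand $\mathcal{L}[\mu+\varepsilon\chi]$ by linearity of the integral and read off $w$ from the definition of the first variation. Your two additional remarks are accurate refinements that the paper omits. The integrability you impose is essentially what is already needed for $\mathcal{L}$ to be real-valued on $\mathcal{P}_2(\sZ)$, and $\delta\mathcal{L}/\delta\mu$ is indeed determined only up to an additive constant, which the Wasserstein gradient removes.
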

\begin{proof}
    Similar to the proof above, for any $m \in T\mathcal{P}_2(\sZ)$ and $t\in \R$, we have:
    \begin{align}
        \mathcal{L}[\mu+tm] &= \int w(z) (\mu(z) + tm(z)) \dd z \\
        &= \int w(z) \mu(z) \dd z + t \int w(z) m(z) \dd z.
    \end{align}
    From the definition of the functional derivative, this implies the desired result.
\end{proof}

\subsection{Proof to \texorpdfstring{Theorem~\ref{thm:1st-opt-cond}}{first order optimality conditions}}\label{app:proof-prop-opt-cond}
\begin{proof}
    By the definition of $\mathrm{grad}_{W_2}$, $\mathrm{grad}_{W_2}\mathcal{F}[c,\mu]=0$ if and only if $\mu=p_c(\vz|\vy)$, which is the optimal solution to $\argmin \KL(\mu\|p_c) - \mathbb{E}_\mu[\log p(\vy|\vz)]$. Moreover, we have from $p_c(\vy, \vz)=p(\vy|\vz)p_c(\vz)$ that
    \begin{align*}
        \nabla_c p_c(\vy) &=  \int \nabla_c p_c(\vy,\vz)\dd \vz = \int \nabla_c \log p_c(\vz) p_c(\vz|\vy) p_c(\vy) \dd \vz \\
        &= p_c(\vy)\nabla_c \KL(\mu\|p_c) 
    \end{align*}
    Thus, $\nabla_c p_c(\vy)=0$ if and only if $\nabla_c \mathcal{R}[c,\mu]=\nabla_c \KL(\mu\|p_c)=0$. The result follows by combining the two conditions.
\end{proof}

\subsection{Exponential Convergence to the Minimisers}\label{app:proof-exp-conv}
To prove the exponential convergence, we first need two standard assumptions on the regularity of the model and the strong log-concavity of the likelihood and the prior.
\begin{assumption}[Model Regularity]\label{assump:app-model-regularity}
i) For all $\vz\in \sZ$, the maps $c\mapsto \log p_c(\vz|\vy)$ and $c\mapsto \log p_c(\vy)$ are differentiable; ii) for all $c\in \sC$, $c\mapsto p_c(\vz|\vy)$ is twice continuously differentiable; iii) for all $c\in \sC$ and $\vz\in \sZ$, the joint density $p_c(\vy,\vz)$ is positive everywhere.
\end{assumption}
\begin{assumption}[Strong Log-Concavity]\label{assump:app-log-concavity}
    i) The map $(c,\vz)\mapsto\log p_c(\vz)$ is jointly $\lambda$-strongly concave; ii) $p(\vy|\vx)p_{\phi^-}(\vz|\vy)$ is $\lambda'$-strongly log-concave.
\end{assumption}
\begin{remark}
    We note that in the case of a linear inverse problem, $p(\vy|\vx)$ is Gaussian and thus log-concave. Moreover, if the decoder is linear Gaussian, then $p_{\phi^-}(\vz|\vy)$ is also strongly log-concave.
\end{remark}
\begin{theorem}[Exponential convergence]
    Under Assumptions~\ref{assump:app-model-regularity} and~\ref{assump:app-log-concavity}, the functional $\mathcal{F}[c, \mu]$ has a unique minimiser $(c_\star, \mu_\star)$ and for $\tilde{\lambda}:=\min(\lambda, \lambda')$ the flow in~\eqref{eq:prompt-cont-flow}-\eqref{eq:distr-cont-flow} converges exponentially fast to $(c_\star, \mu_\star)$
    \begin{equation}
        \mathrm{d}_{W_2}(\mu_\tau, \mu_\star)^2 + \|c_\tau - c_\star\|^2_2 \leq 2\tilde{\lambda}^{-1} e^{-2\tilde{\lambda} \tau} (\mathcal{F}[c_0, \mu_0] - \mathcal{F}[c_\star, \mu_\star]).
    \end{equation}
\end{theorem}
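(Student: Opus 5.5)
The plan is the standard route of \citet{kuntz23a} for Euclidean--Wasserstein gradient flows: prove a Polyak--{\L}ojasiewicz (log-Sobolev type) inequality for $\mathcal{F}$ on $\sC\times\mathcal{P}_2(\sZ)$, get exponential decay of the energy gap along the flow, and convert it to a distance bound through quadratic growth (Talagrand-type) around the minimiser.

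\textbf{Step 1: rewrite $\mathcal{F}$ and identify the minimiser.} Since $\log p_c(\vy,\vz)=\log p(\vy|\vz)+\log p_c(\vz)$,
\[
\mathcal{F}[c,\mu]=\int \mu\log\mu\,\dd\vz-\int \mu(\vz)\log p_c(\vy,\vz)\,\dd\vz=\KL(\mu\|p_c(\cdot|\vy))-\log p_c(\vy).
\]
Assumption~\ref{assump:app-log-concavity} makes $(c,\vz)\mapsto-\log p_c(\vy,\vz)$ jointly $\tilde\lambda$-strongly convex. I read its part~ii) as strong log-concavity in $\vz$ of the likelihood factor $p(\vy|\vz)$; if only concavity of the likelihood were available, $\lambda$ alone would drive the rate. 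Integrating out $\vz$ (Pr\'ekopa--Leindler) shows that $c\mapsto-\log p_c(\vy)$ is $\tilde\lambda$-strongly convex, so its minimiser $c_\star$ is unique. The minimising law is then forced to be $\mu_\star=p_{c_\star}(\cdot|\vy)$, which gives uniqueness of $(c_\star,\mu_\star)$ and is consistent with Theorem~\ref{thm:1st-opt-cond}.

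\textbf{Step 2: energy dissipation.} Along the flow \eqref{eq:prompt-cont-flow}--\eqref{eq:distr-cont-flow}, the chain rule on the product manifold gives
\[
\tfrac{\dd}{\dd\tau}\mathcal{F}[c_\tau,\mu_\tau]=-\|\nabla_c\mathcal{F}\|^2-\int\|\nabla_\vz\log(\mu_\tau/p_{c_\tau}(\vy,\cdot))\|^2\dd\mu_\tau=:-I(c_\tau,\mu_\tau).
\]
Using Lemmas~\ref{lemma:first_var_kl} and~\ref{lemma:semi-implicit}, the second term is exactly the Wasserstein part of the gradient. Regularity (Assumption~\ref{assump:app-model-regularity}) justifies differentiating under the integral sign.

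\textbf{Step 3: PL inequality.} The goal is $\mathcal{F}[c,\mu]-\mathcal{F}_\star\le (2\tilde\lambda)^{-1}I(c,\mu)$. This is the step I expect to be the main obstacle.
\begin{itemize}
\item First try the extended log-Sobolev argument of \citet[Theorem 3]{kuntz23a}. View $\mathcal{F}$ as a KL divergence on the joint space $\sC\times\sZ$ between $\delta_c\otimes\mu$ and the strongly log-concave density $\propto p_{c'}(\vy,\vz)$.
\item Alternatively, use displacement convexity: $\mathcal{F}$ is $\tilde\lambda$-geodesically convex on the product space, being strongly convex potential energy plus entropy. The HWI/EVI inequality along the product geodesic to $(c_\star,\mu_\star)$ then gives $\mathcal{F}-\mathcal{F}_\star\le \sqrt{I}\,d-\tfrac{\tilde\lambda}{2}d^2\le I/(2\tilde\lambda)$, where $d^2=\|c-c_\star\|^2+\mathrm{d}_{W_2}(\mu,\mu_\star)^2$.
\item The delicate point in both routes is that the $c$-gradient is $-\int\nabla_c\log p_c(\vz)\,\dd\mu$, so the coupling between $c$ and $\vz$ must be handled jointly. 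This is why joint strong concavity of $(c,\vz)\mapsto\log p_c(\vz)$ is assumed. I would verify geodesic convexity by computing the second derivative of $\mathcal{F}$ along $(c_s,\mu_s)$, with $c_s$ linear and $\mu_s$ a McCann interpolation.
\end{itemize}

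\textbf{Step 4: conclude.} Grönwall applied to Steps~2--3 gives $\mathcal{F}[c_\tau,\mu_\tau]-\mathcal{F}_\star\le e^{-2\tilde\lambda\tau}(\mathcal{F}[c_0,\mu_0]-\mathcal{F}_\star)$. Geodesic $\tilde\lambda$-convexity together with first-order optimality at the minimiser gives quadratic growth, $\tfrac{\tilde\lambda}{2}d^2\le\mathcal{F}-\mathcal{F}_\star$. Combining the two yields the stated bound with constant $2\tilde\lambda^{-1}$.
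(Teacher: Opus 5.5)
Your proposal is correct and is essentially the paper's argument written out in full. The paper likewise shows that $(c,\vz)\mapsto\log p_c(\vy,\vz)$ is jointly $\tilde\lambda$-strongly concave (using Pr\'ekopa--Leindler for the likelihood factor), obtains existence and uniqueness of $(c_\star,\mu_\star)$ as in \citet[Theorem~4]{kuntz23a}, and then simply invokes \citet[Corollary~5]{caprioerrorboundsparticle2024}. That corollary packages exactly your Steps 2--4: an extended log-Sobolev/Polyak--{\L}ojasiewicz inequality under strong concavity, Gr\"onwall, and an extended Talagrand/quadratic-growth bound. Your displacement-convexity/HWI derivation is therefore a valid, self-contained replacement for that citation.

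There are two small remarks. First, the joint-space reformulation in your first bullet of Step~3 does not work as written: $\delta_c\otimes\mu$ is singular on $\sC\times\sZ$, so that KL is $+\infty$. Second, both your proof and the paper's only ever use joint strong concavity of $\log p_c(\vy,\vz)$, and that is the hypothesis to state. Part~i) of Assumption~\ref{assump:app-log-concavity}, read literally, cannot hold for a normalised prior: a marginal of a jointly $\lambda$-strongly log-concave function is $\lambda$-strongly log-concave, whereas $c\mapsto\int p_c(\vz)\,\dd\vz\equiv 1$.
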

\begin{proof}
    The proof follows uses the same strategy as in~\citet{caprioerrorboundsparticle2024}. 
    Define $\ell(c,\vz):=\log p_c(\vy,\vz)=\log \int p(\vy|\vx) p(\vx|\vz) p_c(\vz) d\vx$. We first show that $\ell(c,\vz)$ is strongly concave in $(c,\vz)$. Indeed, we have $\ell(c,\vz)=\log p_c(\vz) + \log \int p(\vy|\vx)p(\vx|\vz) d\vx$, where the first term is $\lambda$-strongly concave by Assumption~\ref{assump:app-log-concavity}, and the second term is $\lambda'$-strongly concave in $\vz$ by the strong log-concavity of $p(\vy|\vx)p(\vx|\vz)$ and the Pr\'ekopa-Leindler inequality~\citep[Theorem 7.1]{gardner2002brunn}. It follows that for any $t\in[0,1]$ and $(c,\vz), (c', \vz')\in \sC\times \sZ$,
    \begin{align*}
        &\ell((1-t)c + tc', (1-t)\vz + t\vz') \geq (1-t)\ell(c,\vz) + t\ell(c', \vz') \\
        &+ \frac{\lambda}{2}t(1-t)(\|c-c'\|^2_2 + \|\vz-\vz'\|^2_2) + \frac{\lambda'}{2}t(1-t)\|\vz-\vz'\|^2_2 \\
        &\geq (1-t)\ell(c,\vz) + t\ell(c', \vz')
        + \frac{\min(\lambda, \lambda')}{2}t(1-t)(\|c-c'\|^2_2 + \|\vz-\vz'\|^2_2).
    \end{align*}
    Thus, under the log-concavity of $\ell(c,\vz)$ and Assumption~\ref{assump:app-model-regularity}, we can apply a similar argument to~\citet[Theorem 4, Appendix B.3]{kuntz23a} to obtain the existence and uniqueness of the minimiser $(c_\star, \mu_\star)$ of $\log p_c(\vy)$. The result follows directly from~\citet[Corollary 5]{caprioerrorboundsparticle2024} given the log-concavity of $\ell(c,\vz)$ and the regularity of the model.
\end{proof}

\subsection{Proof to Proposition~\ref{prop:exact-flow-map-identity}}
\label{app:proof-prop-exact-flow-map}

\begin{proof}
Fix $r\in\mathcal P_2(\sZ)$ and let $r_s=Q_s\ast r$. Along the probability-flow characteristic associated with $r_s$, we have
\[
\frac{d\vz_s^r}{ds}
=
-\frac{\beta_s}{2}\vz_s^r
-\frac{\beta_s}{2}\nabla\log r_s(\vz_s^r),
\qquad s\in[0,t],
\]
with terminal condition $\vz_t^r=\vz_t$ and initial point $\vz_0^r=G_t^r(\vz_t)$. Using $\frac{d}{ds}\alpha_s^{-1/2}=\frac{\beta_s}{2}\alpha_s^{-1/2}$, we obtain
\begin{align*}
\frac{d}{ds}\left(\frac{\vz_s^r}{\sqrt{\alpha_s}}\right)
&=
\frac{1}{\sqrt{\alpha_s}}\frac{d\vz_s^r}{ds}
+
\frac{\beta_s}{2\sqrt{\alpha_s}}\vz_s^r \\
&=
-\frac{\beta_s}{2\sqrt{\alpha_s}}\nabla\log r_s(\vz_s^r).
\end{align*}
Integrating from $0$ to $t$ and using $\alpha_0=1$ gives
\[
\frac{\vz_t}{\sqrt{\alpha_t}}-G_t^r(\vz_t)
=
-\int_0^t
\frac{\beta_s}{2\sqrt{\alpha_s}}
\nabla\log r_s(\vz_s^r)\,ds.
\]
Equivalently,
\[
G_t^r(\vz_t)
=
\frac{\vz_t}{\sqrt{\alpha_t}}
+
\int_0^t
\frac{\beta_s}{2\sqrt{\alpha_s}}
\nabla\log r_s(\vz_s^r)\,ds.
\]
Multiplying by $\sqrt{\alpha_t}$, subtracting $\vz_t$, and dividing by $\sigma_t^2=1-\alpha_t$ yields
\[
\frac{\sqrt{\alpha_t}G_t^r(\vz_t)-\vz_t}{\sigma_t^2}
=
\frac{\sqrt{\alpha_t}}{\sigma_t^2}
\int_0^t
\frac{\beta_s}{2\sqrt{\alpha_s}}
\nabla\log r_s(\vz_s^r)\,ds.
\]
Multiplying both sides by $1+\sqrt{\alpha_t}$ gives
\[
(1+\sqrt{\alpha_t})
\frac{\sqrt{\alpha_t}G_t^r(\vz_t)-\vz_t}{\sigma_t^2}
=
\int_0^t
\gamma_t(s)\nabla\log r_s(\vz_s^r)\,ds,
\]
where
\[
\gamma_t(s)
:=
\frac{(1+\sqrt{\alpha_t})\sqrt{\alpha_t}}{\sigma_t^2}
\frac{\beta_s}{2\sqrt{\alpha_s}}=
\frac{\sqrt{\alpha_t}}{1-\sqrt{\alpha_t}}
\frac{\beta_s}{2\sqrt{\alpha_s}}
\]
It remains to check that $\gamma_t$ is a probability density on $[0,t]$. Since $\beta_s\geq0$ and $\alpha_s>0$, we have $\gamma_t(s)\geq0$. Moreover,
\[
\int_0^t \frac{\beta_s}{2\sqrt{\alpha_s}}\,ds
=
\frac{1}{\sqrt{\alpha_t}}-1,
\]
and therefore, using $\sigma_t^2=1-\alpha_t$,
\[
\int_0^t\gamma_t(s)\,ds
=
\frac{(1+\sqrt{\alpha_t})\sqrt{\alpha_t}}{1-\alpha_t}
\left(
\frac{1}{\sqrt{\alpha_t}}-1
\right)
=
\frac{(1+\sqrt{\alpha_t})(1-\sqrt{\alpha_t})}{1-\alpha_t}
=
1.
\]
Thus the right-hand side is a convex average of positive-time scores along the probability-flow characteristic. Under continuity of $s\mapsto\nabla\log r_s(\vz_s^r)$ at $s=0$, this average converges to $\nabla\log r(\vz_0^r)$ as $t\to0$, with the quantitative Gaussian behaviour discussed in Appendix~\ref{app:gaussian-cm-score-error}.
\end{proof}

\subsection{Proof to Proposition~\texorpdfstring{\ref{prop:approx-delta-L}}{L}}\label{app:proof-prop-approx-delta-L}
\begin{proof}
First note that for a fixed decoder \(p_\phi(\vx \mid \vz)\), the optimal encoder distribution associated with the VAE objective is given by
\[
q_\phi^\star(\vz \mid \vx)
\propto
p_0(\vz)\exp\!\left(\frac{1}{\lambda}\log p_\phi(\vx \mid \vz)\right).
\]
Taking the logarithm and differentiating with respect to \(\vz\), we obtain
\[
\nabla_\vz \log q_\phi^\star(\vz \mid \vx)
=
\nabla_\vz \log p_0(\vz)
+
\frac{1}{\lambda}\nabla_\vz \log p_\phi(\vx \mid \vz),
\]
and therefore $\nabla_\vz \log p_\phi(\vx \mid \vz)
=
\lambda \nabla_\vz \log q_\phi^\star(\vz \mid \vx)
-
\lambda \nabla_\vz \log p_0(\vz)$. By Fisher's identity,
\[
g(\vz;\vy)
=
\E_{p_c(\vx \mid \vy,\vz)}\!\left[\nabla_\vz \log p_\phi(\vx \mid \vz)\right].
\]
Substituting the previous identity into this expression gives
\[
g(\vz;\vy)
=
\lambda \E_{p_c(\vx \mid \vy,\vz)}\!\left[\nabla_\vz \log q_\phi^\star(\vz \mid \vx)\right]
-
\lambda \nabla_\vz \log p_0(\vz).
\]
On the other hand, by definition,
\[
\hat g(\vz;\vy)
:=
\lambda \E_{p_c(\vx \mid \vy,\vz)}\!\left[\nabla_\vz \log q_\phi(\vz \mid \vx)\right]
-
\lambda \nabla_\vz \log p_0(\vz).
\]
Hence,
\begin{align*}
g(\vz;\vy)-\hat g(\vz;\vy)
&=
\lambda \E_{p_c(\vx \mid \vy,\vz)}\!\left[
\nabla_\vz \log q_\phi^\star(\vz \mid \vx)
-
\nabla_\vz \log q_\phi(\vz \mid \vx)
\right].
\end{align*}
Taking Euclidean norms and using Jensen's inequality, we obtain
\begin{align*}
\|g(\vz;\vy)-\hat g(\vz;\vy)\|_2^2
&\le
\lambda^2
\E_{p_c(\vx \mid \vy,\vz)}\!\left[
\|\nabla_\vz \log q_\phi^\star(\vz \mid \vx)
-
\nabla_\vz \log q_\phi(\vz \mid \vx)
\right\|^2_2].
\end{align*}
\end{proof}

\subsection{Proof to the construction of the Gaussian Posterior}\label{app:proof-prop-posterior-gauss}
\begin{proposition}\label{prop:posterior_gauss}
    For observation model $p(\vy|\vx) = \mathcal{N}(\vy; A\vx, \sigma_\vy^2 I)$ and Gaussian decoder $p_{\phi^-}(\vx|\vz) = \mathcal{N}(\vx; \mathcal{D}_{\phi^-}(\vz), \sigma_{dec}^2 I)$, we have $p_c(\vx|\vz,\vy) = \mathcal{N}(\vx; m(\vz), \Sigma)$, where $m(\vz)$ and $\Sigma$ are given by:
    \begin{align}\label{eq:mean}
        m(\vz) = \Sigma \left(\sigma_{dec}^{-2}\mathcal{D}_{\phi^-}(\vz) + \sigma_\vy^{-2}A^\top \vy\right), \qquad \Sigma = \left({\sigma_{dec}^{-2}}I + {\sigma_\vy^{-2}} A^\top A
        \right)^{-1}.
    \end{align}
\end{proposition}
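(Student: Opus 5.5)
The plan is to compute the conditional $p_c(\vx\mid\vz,\vy)$ directly from Bayes' rule and complete the square in $\vx$. The hierarchical model \eqref{eq:main_model} has the factorisation $p(\vy\mid\vx)\,p_{\phi^-}(\vx\mid\vz)\,p_c(\vz)$, so $\vy$ depends on $\vz$ only through $\vx$. Conditioning on $\vz$ and $\vy$ therefore gives
\begin{equation*}
p_c(\vx\mid\vz,\vy)\propto p(\vy\mid\vx)\,p_{\phi^-}(\vx\mid\vz).
\end{equation*}
The factor $p_c(\vz)$ and the normaliser do not depend on $\vx$ and can be dropped. In particular, the result does not depend on the prompt $c$ at all, which explains why the formula for $m(\vz)$ and $\Sigma$ contains no $c$.

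Next, I write out the log-density as a function of $\vx$, up to additive constants:
\begin{equation*}
-\frac{1}{2\sigma_\vy^{2}}\|\vy-A\vx\|_2^2-\frac{1}{2\sigma_{dec}^{2}}\|\vx-\mathcal{D}_{\phi^-}(\vz)\|_2^2 .
\end{equation*}
Expanding and collecting terms gives a quadratic form
\begin{equation*}
-\tfrac12\vx^\top\bigl(\sigma_{dec}^{-2}I+\sigma_\vy^{-2}A^\top A\bigr)\vx+\vx^\top\bigl(\sigma_{dec}^{-2}\mathcal{D}_{\phi^-}(\vz)+\sigma_\vy^{-2}A^\top\vy\bigr)+\text{const}.
\end{equation*}
Completing the square identifies the precision matrix $\Sigma^{-1}=\sigma_{dec}^{-2}I+\sigma_\vy^{-2}A^\top A$ and the mean $m(\vz)=\Sigma\bigl(\sigma_{dec}^{-2}\mathcal{D}_{\phi^-}(\vz)+\sigma_\vy^{-2}A^\top\vy\bigr)$. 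This is exactly \eqref{eq:mean}.

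The only step needing care is that $\Sigma$ is well defined. This holds because $\sigma_{dec}^{-2}I$ is positive definite and $A^\top A$ is positive semidefinite, so their weighted sum is positive definite, hence invertible, for every linear $A$, including non-injective operators such as super-resolution. Since a density proportional to $\exp$ of a strictly concave quadratic is Gaussian, the conditional is $\mathcal N(\vx;m(\vz),\Sigma)$.
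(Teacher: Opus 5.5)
Your proposal is correct and follows essentially the same route as the paper: Bayes' rule reduces the conditional to $p(\vy\mid\vx)\,p_{\phi^-}(\vx\mid\vz)$, you expand both quadratics, complete the square to identify $\Sigma^{-1}$ and $m(\vz)$, and use $\sigma_{dec}>0$ to get positive definiteness of $\Sigma^{-1}$. You also state explicitly that $\vy$ depends on $\vz$ only through $\vx$, so $p_c(\vz)$ drops out and the result does not depend on $c$; the paper uses this step without comment.
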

\begin{proof}
    This is the standard conjugate Gaussian update. By Bayes' rule,
    \begin{equation*}
        p_c(\vx|\vz,\vy) \propto p(\vy|\vx)p_{\phi^-}(\vx|\vz),
    \end{equation*}
    and therefore
    \begin{equation*}
        p_c(\vx|\vz,\vy)
        \propto
        \exp\left(
            -\frac{1}{2\sigma_\vy^2}\|\vy-A\vx\|_2^2
            -\frac{1}{2\sigma_{dec}^2}\|\vx-\mathcal{D}_{\phi^-}(\vz)\|_2^2
        \right).
    \end{equation*}
    Expanding the quadratic terms in $\vx$ and collecting only the terms that depend on $\vx$, we get
    \begin{align*}
        \log p_c(\vx|\vz,\vy)
        &=
        -\frac{1}{2}\vx^\top
        \left(
            \sigma_{dec}^{-2}I+\sigma_\vy^{-2}A^\top A
        \right)\vx \\
        &\quad +
        \vx^\top
        \left(
            \sigma_{dec}^{-2}\mathcal{D}_{\phi^-}(\vz)
            +
            \sigma_\vy^{-2}A^\top \vy
        \right)
        + \mathrm{const},
    \end{align*}
    where $\mathrm{const}$ does not depend on $\vx$. Define
    \begin{equation*}
        \Sigma^{-1}:=\sigma_{dec}^{-2}I+\sigma_\vy^{-2}A^\top A,
        \qquad
        m(\vz):=\Sigma\left(
            \sigma_{dec}^{-2}\mathcal{D}_{\phi^-}(\vz)
            +
            \sigma_\vy^{-2}A^\top \vy
        \right).
    \end{equation*}
    Since $\sigma_{dec}>0$, the matrix $\Sigma^{-1}$ is positive definite and hence invertible. Completing the square gives:
    \begin{equation*}
        \log p_c(\vx|\vz,\vy)
        =
        -\frac{1}{2}
        (\vx-m(\vz))^\top \Sigma^{-1}(\vx-m(\vz))
        + \mathrm{const}.
    \end{equation*}
    Hence $p_c(\vx|\vz,\vy)=\mathcal{N}(\vx; m(\vz), \Sigma)$, where $\Sigma$ and $m(\vz)$ are defined as above.
\end{proof}

\section{Gaussian error analysis for the CM-induced prior score}
\label{app:gaussian-cm-score-error}

We provide a closed-form analysis of the CM-induced score surrogate in the Gaussian case. This setting is useful because the clean score, the positive-time score, and the exact probability-flow map are all available explicitly. It also reflects the regime targeted by latent diffusion models, whose VAE latent spaces are regularised toward a standard Gaussian prior.

Let $\vz_0\sim p_0=\mathcal N(0,\Sigma)$ with $\Sigma\succ 0$, and consider the VP forward process $\vz_t=\sqrt{\alpha_t}\vz_0+\sigma_t\varepsilon$, where $\varepsilon\sim\mathcal N(0,I)$ and $\sigma_t^2=1-\alpha_t$. Then $\vz_t\sim p_t=\mathcal N(0,M_t)$, with $M_t:=\alpha_t\Sigma+\sigma_t^2 I=I+\alpha_t(\Sigma-I)$, and the positive-time score is $s_t^\star(\vz_t)=\nabla_{\vz_t}\log p_t(\vz_t)=-M_t^{-1}\vz_t$. The clean score is $s_0^\star(\vz_0)=-\Sigma^{-1}\vz_0$.

Let $g_t^\star$ denote the exact probability-flow map from time $t$ to time $0$. Motivated by~\Cref{prop:exact-flow-map-identity}, we study the normalised CM-induced score
$$
\bar s_t(\vz_t)
:=
(1+\sqrt{\alpha_t})
\frac{\sqrt{\alpha_t}g_t^\star(\vz_t)-\vz_t}{\sigma_t^2}
=
\frac{\sqrt{\alpha_t}g_t^\star(\vz_t)-\vz_t}{1-\sqrt{\alpha_t}}.
$$
When $g_\theta=g_t^\star$, this is the exact normalised counterpart of the surrogate induced by the CM score formula. In practice, $g_t^\star$ is replaced by the consistency model $g_\theta$.

\begin{lemma}[Gaussian probability-flow map]
\label{lem:gaussian-pf-map}
For $p_0=\mathcal N(0,\Sigma)$ and $p_t=\mathcal N(0,M_t)$, the exact probability-flow map from time $t$ to time $0$ is
$$
g_t^\star(\vz_t)=\Sigma^{1/2}M_t^{-1/2}\vz_t.
$$
\end{lemma}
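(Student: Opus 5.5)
Our plan is to integrate the probability-flow ODE explicitly. For $p_0=\mathcal N(0,\Sigma)$ the score at time $s$ is linear, $\nabla\log p_s(\vz)=-M_s^{-1}\vz$, so the characteristic ODE used in the proof of Proposition~\ref{prop:exact-flow-map-identity},
\[
\frac{d\vz_s}{ds}=-\frac{\beta_s}{2}\left(I-M_s^{-1}\right)\vz_s,
\]
is a linear ODE. Every matrix involved, $M_s=I+\alpha_s(\Sigma-I)$, is a polynomial in $\Sigma$, so all of them commute.

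\textbf{Step 1.} Diagonalise $\Sigma=U\Lambda U^\top$. Then $M_s=U\,\mathrm{diag}(m_i(s))\,U^\top$ with $m_i(s)=1+\alpha_s(\lambda_i-1)$. The system decouples into scalar ODEs,
\[
\dot z_i=-\frac{\beta_s}{2}\left(1-\frac{1}{m_i(s)}\right)z_i .
\]

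\textbf{Step 2.} Identify the right-hand side as a logarithmic derivative, using $\dot\alpha_s=-\beta_s\alpha_s$. We have $\dot m_i(s)=-\beta_s\alpha_s(\lambda_i-1)=-\beta_s(m_i(s)-1)$. Hence
\[
\frac{d}{ds}\log m_i(s)^{1/2}=-\frac{\beta_s}{2}\,\frac{m_i(s)-1}{m_i(s)}=-\frac{\beta_s}{2}\left(1-\frac{1}{m_i(s)}\right),
\]
which is exactly the coefficient in the ODE. Therefore $z_i(s)/\sqrt{m_i(s)}$ is conserved along characteristics.

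\textbf{Step 3.} Integrate from $t$ down to $0$, using $m_i(0)=\lambda_i$ (since $\alpha_0=1$). This gives $z_i(0)=\sqrt{\lambda_i}\,m_i(t)^{-1/2}z_i(t)$. Reassembling in the eigenbasis yields
\[
g_t^\star(\vz_t)=\Sigma^{1/2}M_t^{-1/2}\vz_t .
\]

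\textbf{Sanity check.} As an alternative confirmation, we can verify that this map pushes $\mathcal N(0,M_t)$ to $\mathcal N(0,\Sigma)$. By commutativity, $\Sigma^{1/2}M_t^{-1/2}M_tM_t^{-1/2}\Sigma^{1/2}=\Sigma$. We can also check that the map is the identity at $t=0$.

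\textbf{Main obstacle.} The only nontrivial point is spotting the conserved quantity $M_s^{-1/2}\vz_s$, together with the careful use of commutativity so that no time-ordered exponential is needed. Once the logarithmic-derivative identity in Step 2 is checked, the remaining steps are immediate.
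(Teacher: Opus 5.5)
Your proof is correct, but it follows a different route from the paper's. The paper only notes that $M_t$ and $\Sigma$ commute, checks $\Sigma^{1/2}M_t^{-1/2}M_tM_t^{-1/2}\Sigma^{1/2}=\Sigma$ (your sanity check), and then asserts that for Gaussian marginals the probability-flow characteristic \emph{is} this transport map. You instead solve the characteristic ODE $\dot{\vz}_s=-\tfrac{\beta_s}{2}(I-M_s^{-1})\vz_s$ through the conserved quantity $M_s^{-1/2}\vz_s$. The identity $\dot m_i=-\beta_s(m_i-1)$, the logarithmic-derivative step and the endpoint $m_i(0)=\lambda_i$ all check out, and $m_i(s)=(1-\alpha_s)+\alpha_s\lambda_i>0$ keeps the logarithm well defined.

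Your route gains something real. A pushforward check cannot by itself identify the flow map: $\Sigma^{1/2}OM_t^{-1/2}$ has the same pushforward for every orthogonal $O$ (e.g.\ $O=-I$). So the paper's final sentence carries the actual content and is left unjustified. Your integration proves it, and also gives the whole trajectory $\vz_s=M_s^{1/2}M_t^{-1/2}\vz_t$.

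The paper's sketch could be completed without explicit integration. The flow map $\exp\bigl(\int_0^t\tfrac{\beta_s}{2}(I-M_s^{-1})\,\dd s\bigr)$ is symmetric positive definite and commutes with $\Sigma$, and $\Sigma^{1/2}M_t^{-1/2}$ is the only such matrix with the correct pushforward. Your explicit computation is equally short and more transparent.
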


\begin{proof}
Since $M_t=\alpha_t\Sigma+\sigma_t^2 I$, the matrices $M_t$ and $\Sigma$ commute and are diagonalizable in the same orthonormal basis. The linear map $\Sigma^{1/2}M_t^{-1/2}$ transports $\mathcal N(0,M_t)$ to $\mathcal N(0,\Sigma)$ because
$$
\Sigma^{1/2}M_t^{-1/2}M_tM_t^{-1/2}\Sigma^{1/2}=\Sigma.
$$
For Gaussian marginals, the probability-flow characteristic is exactly this deterministic Gaussian transport map.
\end{proof}

We compare $\bar s_t(\vz_t)$ with the clean score evaluated at the endpoint $g_t^\star(\vz_t)$, namely $s_0^\star(g_t^\star(\vz_t))=-\Sigma^{-1}g_t^\star(\vz_t)$.

\begin{proposition}[Gaussian score-surrogate error]
\label{prop:gaussian-score-surrogate-error}
Let $\Sigma=U\operatorname{diag}(\lambda_1,\dots,\lambda_d)U^\top$ and set $m_{t,i}:=1+\alpha_t(\lambda_i-1)$. In the eigenbasis of $\Sigma$, the $i$-th coordinate of the error $e_t(\vz_t):=\bar s_t(\vz_t)-s_0^\star(g_t^\star(\vz_t))$ is
$$
(e_t)_i
=
a_{t,i}(\vz_t)_i,
\qquad
a_{t,i}
=
\frac{
1+\sqrt{\alpha_t}(\lambda_i-1)
-
\sqrt{\lambda_i}\sqrt{m_{t,i}}
}{
\sqrt{\lambda_i}(1-\sqrt{\alpha_t})\sqrt{m_{t,i}}
}.
$$
Consequently, for $\vz_t\sim\mathcal N(0,M_t)$,
$$
\mathbb E\|e_t(\vz_t)\|^2
=
\sum_{i=1}^d
\frac{
\left[
1+\sqrt{\alpha_t}(\lambda_i-1)
-
\sqrt{\lambda_i}\sqrt{m_{t,i}}
\right]^2
}{
\lambda_i(1-\sqrt{\alpha_t})^2
}.
$$
Moreover, $\mathbb E\|s_0^\star(g_t^\star(\vz_t))\|^2=\operatorname{Tr}(\Sigma^{-1})=\sum_{i=1}^d\lambda_i^{-1}$, so the relative mean-square error is
$$
\mathcal E_{\mathrm{rel}}(t)
=
\frac{
\sum_{i=1}^d
\frac{
\left[
1+\sqrt{\alpha_t}(\lambda_i-1)
-
\sqrt{\lambda_i}\sqrt{m_{t,i}}
\right]^2
}{
\lambda_i(1-\sqrt{\alpha_t})^2
}
}{
\sum_{i=1}^d \lambda_i^{-1}
}.
$$
\end{proposition}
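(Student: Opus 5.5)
Everything here is linear and simultaneously diagonalisable, so the plan is to reduce to scalar computations in the eigenbasis of $\Sigma$. By Lemma~\ref{lem:gaussian-pf-map}, $g_t^\star(\vz_t)=\Sigma^{1/2}M_t^{-1/2}\vz_t$, and both $\Sigma$ and $M_t=I+\alpha_t(\Sigma-I)$ are diagonal in the basis $U$ with eigenvalues $\lambda_i$ and $m_{t,i}$. I will write $\vz_t$ in this basis (rotating by $U^\top$, which preserves norms and the Gaussian structure), so every operator acts coordinatewise. This lets me express both $\bar s_t$ and $s_0^\star\circ g_t^\star$ as diagonal matrices applied to $\vz_t$.

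The $i$-th coefficient of $\bar s_t$ is $(\sqrt{\alpha_t}\sqrt{\lambda_i/m_{t,i}}-1)/(1-\sqrt{\alpha_t})$. The $i$-th coefficient of $s_0^\star(g_t^\star(\vz_t))=-\Sigma^{-1}\Sigma^{1/2}M_t^{-1/2}\vz_t$ is $-1/(\sqrt{\lambda_i}\sqrt{m_{t,i}})$.

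To get $a_{t,i}$, I subtract the two coefficients and put them over the common denominator $\sqrt{\lambda_i}(1-\sqrt{\alpha_t})\sqrt{m_{t,i}}$. The numerator becomes
\[
\sqrt{\alpha_t}\lambda_i-\sqrt{\lambda_i}\sqrt{m_{t,i}}+1-\sqrt{\alpha_t}=1+\sqrt{\alpha_t}(\lambda_i-1)-\sqrt{\lambda_i}\sqrt{m_{t,i}},
\]
which matches the stated formula. This algebraic regrouping, in particular keeping the sign conventions straight, is the only step needing care.

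For the mean-square error, the coordinates $(\vz_t)_i$ in the eigenbasis are independent with variance $m_{t,i}$. Hence
\[
\mathbb E\|e_t\|^2=\sum_i a_{t,i}^2 m_{t,i},
\]
and the factor $m_{t,i}$ cancels against the $m_{t,i}$ in the squared denominator, giving the claimed sum.

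For the normaliser, $g_t^\star(\vz_t)\sim\mathcal N(0,\Sigma)$ by the transport property in Lemma~\ref{lem:gaussian-pf-map}. Then
\[
\mathbb E\|\Sigma^{-1}g_t^\star\|^2=\operatorname{Tr}(\Sigma^{-1}\Sigma\Sigma^{-1})=\operatorname{Tr}(\Sigma^{-1})=\sum_i\lambda_i^{-1}.
\]
The relative error $\mathcal E_{\mathrm{rel}}(t)$ is the ratio of these two quantities. No step is a genuine obstacle; the main risk is bookkeeping in the numerator simplification.
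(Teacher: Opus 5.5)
Your proposal is correct and follows the paper's proof essentially step for step. Both diagonalise in the eigenbasis of $\Sigma$ via Lemma~\ref{lem:gaussian-pf-map}, subtract the two diagonal coefficients to obtain $a_{t,i}$, sum $a_{t,i}^2 m_{t,i}$ for the mean-square error, and use $g_t^\star(\vz_t)\sim\mathcal N(0,\Sigma)$ to get the $\operatorname{Tr}(\Sigma^{-1})$ normaliser; you also spell out the common-denominator simplification that the paper leaves implicit.
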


\begin{proof}
In the eigenbasis of $\Sigma$, Lemma~\ref{lem:gaussian-pf-map} gives $(g_t^\star(\vz_t))_i=\sqrt{\lambda_i/m_{t,i}}\,(\vz_t)_i$. Hence
$$
(\bar s_t(\vz_t))_i
=
\frac{
\sqrt{\alpha_t}\sqrt{\lambda_i/m_{t,i}}-1
}{
1-\sqrt{\alpha_t}
}(\vz_t)_i,
$$
whereas
$$
(s_0^\star(g_t^\star(\vz_t)))_i
=
-\frac{1}{\lambda_i}\sqrt{\frac{\lambda_i}{m_{t,i}}}(\vz_t)_i
=
-\frac{1}{\sqrt{\lambda_i}\sqrt{m_{t,i}}}(\vz_t)_i.
$$
Subtracting gives the coefficient $a_{t,i}$. Since $(\vz_t)_i\sim\mathcal N(0,m_{t,i})$, the mean-square error is obtained by summing $a_{t,i}^2m_{t,i}$ over the eigen-directions. Finally, since $g_t^\star(\vz_t)\sim\mathcal N(0,\Sigma)$, the clean-score energy is $\mathbb E\|\Sigma^{-1}g_t^\star(\vz_t)\|^2=\operatorname{Tr}(\Sigma^{-1})$.
\end{proof}

The expression above immediately shows that the approximation is exact for the VP reference Gaussian. Indeed, if $\Sigma=I$, then $\lambda_i=1$ and $m_{t,i}=1$ for every $i$, so every numerator in Proposition~\ref{prop:gaussian-score-surrogate-error} vanishes. Therefore $\bar s_t(\vz_t)=s_0^\star(g_t^\star(\vz_t))$ for all $t$.

We also obtain the small-time order of the error.

\begin{corollary}[Small-time behaviour]
\label{cor:gaussian-small-time-order}
Assume that the spectrum of $\Sigma$ is contained in $[\lambda_{\min},\lambda_{\max}]$ with $\lambda_{\min}>0$. Then, as $t\to0$,
$$
\mathbb E\left\|
\bar s_t(\vz_t)-s_0^\star(g_t^\star(\vz_t))
\right\|^2
=
O\!\left((1-\sqrt{\alpha_t})^2\right)
=
O(\sigma_t^4).
$$
More precisely, in eigen-direction $i$,
$$
\mathbb E[(e_t)_i^2]
=
\frac{(\lambda_i-1)^2}{4\lambda_i^3}
(1-\sqrt{\alpha_t})^2
+
o\!\left((1-\sqrt{\alpha_t})^2\right)
=
\frac{(\lambda_i-1)^2}{16\lambda_i^3}
\sigma_t^4
+
o(\sigma_t^4).
$$
\end{corollary}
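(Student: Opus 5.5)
The plan is to Taylor-expand the exact per-direction expression from Proposition~\ref{prop:gaussian-score-surrogate-error} in the small parameter $u:=1-\sqrt{\alpha_t}$, which tends to $0$ as $t\to0$. From that proposition,
\[
\mathbb E[(e_t)_i^2]=a_{t,i}^2 m_{t,i}=\frac{N_i(u)^2}{\lambda_i u^2},\qquad N_i(u):=1+\sqrt{\alpha_t}(\lambda_i-1)-\sqrt{\lambda_i}\sqrt{m_{t,i}}.
\]
So everything reduces to showing $N_i(u)=\tfrac{\lambda_i-1}{2\lambda_i}u^2+O(u^3)$, uniformly in $i$ over the spectrum.

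To get this expansion, I will write $\sqrt{\alpha_t}=1-u$ and $\alpha_t=(1-u)^2$. Then
\[
m_{t,i}=\lambda_i-(\lambda_i-1)(2u-u^2)=\lambda_i\bigl(1-\kappa_i(2u-u^2)\bigr),\qquad \kappa_i:=\frac{\lambda_i-1}{\lambda_i},
\]
so that $\sqrt{\lambda_i}\sqrt{m_{t,i}}=\lambda_i\sqrt{1-\kappa_i(2u-u^2)}$.

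Next I expand $\sqrt{1-x}=1-x/2-x^2/8+O(x^3)$ with $x=\kappa_i(2u-u^2)$. This gives
\[
\lambda_i\Bigl[1-\kappa_i u+\tfrac{\kappa_i}{2}u^2-\tfrac{\kappa_i^2}{2}u^2\Bigr]+O(u^3)=\lambda_i-(\lambda_i-1)u+\tfrac{\lambda_i-1}{2}u^2-\tfrac{(\lambda_i-1)^2}{2\lambda_i}u^2+O(u^3).
\]
Meanwhile $1+\sqrt{\alpha_t}(\lambda_i-1)=\lambda_i-(\lambda_i-1)u$ exactly. The constant and linear terms therefore cancel, and
\[
N_i=\Bigl[\tfrac{(\lambda_i-1)^2}{2\lambda_i}-\tfrac{\lambda_i-1}{2}\Bigr]u^2+O(u^3)=-\tfrac{\lambda_i-1}{2\lambda_i}u^2+O(u^3).
\]
Squaring and dividing by $\lambda_i u^2$ yields $\frac{(\lambda_i-1)^2}{4\lambda_i^3}u^2+o(u^2)$, which is the first displayed form.

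To convert from $u$ to $\sigma_t$, I use $\sigma_t^2=1-\alpha_t=u(2-u)$. Hence $u=\sigma_t^2/2+O(\sigma_t^4)$ and $u^2=\sigma_t^4/4+o(\sigma_t^4)$, which gives the factor $1/16$.

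For the aggregate $O(u^2)=O(\sigma_t^4)$ claim, I sum over the $d$ directions. The bound follows once the $O(u^3)$ remainder is uniform over $\lambda_i\in[\lambda_{\min},\lambda_{\max}]$. This holds because $|\kappa_i|\le\max(1,\lambda_{\max})$, since $\kappa_i=1-1/\lambda_i$ lies in $[1-1/\lambda_{\min},\,1-1/\lambda_{\max}]$. Consequently, for $u$ small, $x$ stays in a fixed compact subset of $(-1,1)$, where the Taylor remainder of $\sqrt{1-x}$ is uniformly bounded. Dividing by $\lambda_i\ge\lambda_{\min}$ is also harmless.

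The main obstacle is bookkeeping the second-order cancellation correctly. The leading terms cancel exactly, so the coefficient comes entirely from combining the $-x^2/8$ term with the $u^2$ part of $x$. I will double-check the sign and the coefficient by testing $\lambda_i=1$, where the error should be zero, and one explicit value such as $\lambda_i=4$ numerically.
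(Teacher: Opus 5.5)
Your proposal is correct and takes the same route as the paper: expand the numerator from Proposition~\ref{prop:gaussian-score-surrogate-error} in $1-\sqrt{\alpha_t}$ to get $-\frac{\lambda_i-1}{2\lambda_i}(1-\sqrt{\alpha_t})^2+O((1-\sqrt{\alpha_t})^3)$, square it, divide by $\lambda_i(1-\sqrt{\alpha_t})^2$, and convert using $1-\sqrt{\alpha_t}=\sigma_t^2/(1+\sqrt{\alpha_t})$; your explicit second-order bookkeeping checks out. One small slip in your uniformity remark: from $\kappa_i\in[1-1/\lambda_{\min},\,1-1/\lambda_{\max}]$ the correct bound is $|\kappa_i|\le\max(1,1/\lambda_{\min})$, not $\max(1,\lambda_{\max})$, but this is harmless, and for fixed $d$ a finite sum of $O(u^2)$ terms needs no uniformity anyway.
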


\begin{proof}
Let $\delta_t:=1-\sqrt{\alpha_t}$. Since $\alpha_t=(1-\delta_t)^2$, we have $\sigma_t^2=2\delta_t-\delta_t^2$. A Taylor expansion gives
$$
1+\sqrt{\alpha_t}(\lambda_i-1)
-
\sqrt{\lambda_i}\sqrt{m_{t,i}}
=
-\frac{\lambda_i-1}{2\lambda_i}\delta_t^2
+
O(\delta_t^3).
$$
Substituting this expansion into the exact expression of Proposition~\ref{prop:gaussian-score-surrogate-error} yields $\mathbb E[(e_t)_i^2]=\frac{(\lambda_i-1)^2}{4\lambda_i^3}\delta_t^2+o(\delta_t^2)$. Since $\delta_t=\sigma_t^2/(1+\sqrt{\alpha_t})$, this is equivalently $\mathbb E[(e_t)_i^2]=\frac{(\lambda_i-1)^2}{16\lambda_i^3}\sigma_t^4+o(\sigma_t^4)$.
\end{proof}

This analysis supports the CM-induced replacement used in the prior subflow. If $g_\theta\simeq g_t^\star$, then
$$
(1+\sqrt{\alpha_t})s_t^{\mathrm{CM}}(\vz_t;c)
=
(1+\sqrt{\alpha_t})
\frac{\sqrt{\alpha_t}g_\theta(\vz_t,t,c)-\vz_t}{\sigma_t^2}
$$
approximates the clean prior score evaluated at the reverse-flow endpoint. The approximation is exact for the standard Gaussian prior and its bias scales with the deviation of the covariance eigenvalues $\lambda_i$ from $1$.

\begin{figure}[h]
    \centering
    \centering
    \includegraphics[width=\linewidth]{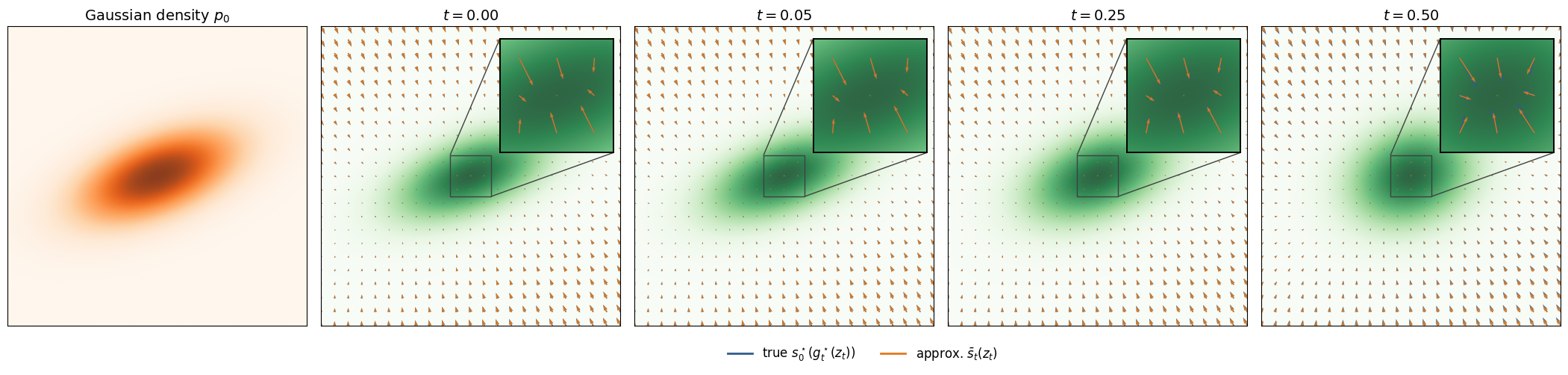}
    \caption{The plots show that, in the Gaussian prior setting, the normalized surrogate tracks the target score well for small and moderate diffusion times. Times $t\in[0,1]$}
    \label{fig:cm-score-gaussian-validation}
\end{figure}

This is consistent with the latent spaces used in latent diffusion models. The VAE is trained with a KL regularisation toward a standard Gaussian latent prior, and the latent scaling used by LDMs further encourages approximately whitened latent coordinates. Therefore, when the empirical latent covariance is close to $I$, the Gaussian calculation predicts a small bias for the CM-induced score surrogate. This provides a simple explanation for the stability of replacing the inaccessible clean prior score by a positive-time CM-induced score in latent-space posterior sampling.

\section{Algorithm}\label{sec:algorithm}
We now give the pseudocode for simulating the gradient flow. we use the notation $\vz_{k}$ to denote the clean particles, and $\vz_{t_k}$ for the ones diffused to the time $t_k\in [0,T]$ and at step $k \in \mathbb{N}$ of the Euler discretization of the ODE.

\begin{algorithm}[h]
\caption{Consistency-regularised Wasserstein Gradient Flow (CWGF)}\label{algo:cwgf}
\begin{algorithmic}[1]
\State \textbf{Inputs:} observation $\vy$, forward operator $\mathcal A$, noise level $\sigma_\vy$, particles $\{\vz_0^{(n)}\}_{n=1}^N$, initial prompt $c_0$, CM flow map $g_\theta(\cdot,t,c)$, score $s_{\theta}(\cdot,t;c)$, VAE encoder-decoder $(\mathcal E_{\phi^-},\mathcal D_{\phi^-})$, decoder std. $\sigma_{\mathrm{dec}}$, encoder covariance $\Sigma_{\phi^-}$, VAE weight $\lambda$, timestep schedule $t(k)$, weights $w(t)$, stepsizes $\eta_c,\eta_R,\eta_L$, iterations $K$.
\State Set $\Sigma_\vy \gets \left(\sigma_{\mathrm{dec}}^{-2}I+\sigma_\vy^{-2}A^\top A\right)^{-1}$.
\For{$k=0,\dots,K-1$}
    \State Set $t_k\gets t(k)$ and get $\alpha_{t_k},\sigma_{t_k}$.
    \For{$n=1,\dots,N$}
        \State Sample $\varepsilon_k^{(n)}\sim\mathcal N(0,I)$ and set
        $
        \vz_{t_k}^{(n)}
        \gets
        \sqrt{\alpha_{t_k}}\vz_k^{(n)}
        +
        \sigma_{t_k}\varepsilon_k^{(n)}.
        $
    \EndFor

    \State \textbf{Prompt step.} Estimate the prompt gradient by
    \begin{equation*}
    \widehat{\nabla_c\mathcal R}
    \gets
    \nabla_c
    \frac{1}{2N}
    \sum_{n=1}^N
    w(t_k)
    \left\|
    -\frac{\vz_{t_k}^{(n)}-\sqrt{\alpha_{t_k}}\vz_k^{(n)}}{\sigma_{t_k}^2}
    -
    s_{\theta}(\vz_{t_k}^{(n)},t_k;c_k)
    \right\|_2^2 .
    \end{equation*}
    \State Update
    $
    c_{k+1}\gets c_k-\eta_c\,\widehat{\nabla_c\mathcal R}.
    $

    \State \textbf{Prior subflow.}
    \For{$n=1,\dots,N$}
        \State Compute kernel weights
        \begin{equation*}
        \pi_{nm}(t_k)
        \gets
        \frac{
        \exp\!\left(
        -\frac{\|\vz_{t_k}^{(n)}-\sqrt{\alpha_{t_k}}\vz_k^{(m)}\|^2}{2\sigma_{t_k}^2}
        \right)
        }{
        \sum_{j=1}^N
        \exp\!\left(
        -\frac{\|\vz_{t_k}^{(n)}-\sqrt{\alpha_{t_k}}\vz_k^{(j)}\|^2}{2\sigma_{t_k}^2}
        \right)
        }.
        \end{equation*}
        \State Update
        \begin{equation*}
        \bar\vz_k^{(n)}
        \gets
        \vz_k^{(n)}
        +
        \eta_R\left(\,g_\theta(\vz_{t_k}^{(n)},t_k,c_k) - \vz_k^{(n)}\right)
        +
        \eta_{R}
        \left(
        \vz_k^{(n)}
        -
        \sum_{m=1}^N\pi_{nm}(t_k)\vz_k^{(m)}
        \right).
        \end{equation*}
    \EndFor

    \State \textbf{Likelihood subflow.}
    \For{$n=1,\dots,N$}
        \State Compute the Gaussian pixel-space posterior mean
        \begin{equation*}
        m(\bar\vz_k^{(n)})
        \gets
        \Sigma_\vy
        \left(
        \sigma_{\mathrm{dec}}^{-2}\mathcal D_{\phi^-}(\bar\vz_k^{(n)})
        +
        \sigma_\vy^{-2}A^\top\vy
        \right).
        \end{equation*}
        \State Compute the encoder-based likelihood drift
        \begin{equation*}
        \widehat g(\bar\vz_k^{(n)};\vy)
        \gets
        \lambda\Sigma_{\phi^-}^{-1}
        \left(
        \mathcal E_{\phi^-}(m(\bar\vz_k^{(n)}))
        -
        \bar\vz_k^{(n)}
        \right)
        +
        \lambda\bar\vz_k^{(n)}.
        \end{equation*}
        \State Update
        $
        \vz_{k+1}^{(n)}
        \gets
        \bar\vz_k^{(n)}
        +
        \eta_L\,\lambda^{-1}\Sigma_{\phi^-}\widehat g(\bar\vz_k^{(n)};\vy).
        $
    \EndFor
\EndFor
\State \Return restored samples $\{\mathcal D_{\phi^-}(\vz_K^{(n)})\}_{n=1}^N$ and final prompt $c_K$.
\end{algorithmic}
\end{algorithm}

\subsection{Hyperparameters choice}
\label{app:hyperparameters}
We report the hyperparameters used for both the FFHQ-512 and ImageNet experiments in~\Cref{tab:comparison_ffhq_imagenet}. Across all three inverse problems, we use the LCM-LoRA prior~\cite{Luo2023LCMLoRAAU} based on Stable Diffusion 1.5, run CWGF for $K=16$ iterations, use $N=1$ particle, enable prompt optimisation, and use cyclic timestep sampling as discussed in Section~\ref{sec:experiments}. We set the measurement noise to $\sigma_\vy=0.01$, and use the prompt \texttt{a photo of a face}. The problem-dependent hyperparameters are reported in~\Cref{tab:hyperparams-ffhq-positive}. Here, $\eta_z$ denotes the particle stepsize, $\eta_c$ the prompt stepsize, $\sigma_{\mathrm{dec}}$ the decoder standard deviation used in the likelihood approximation, and $w_t$ the timestep-dependent weights used in the prior/likelihood update. The linear weights $w_t$ are defined by $w(t)=0.1+0.8(t/T)$ for $t\in\{\texttt{999,879,759,639,499,379,259,139}\}$ and $T=999$. $r_c$ is the radius of the ball centred in the starting prompt $c_0$, to which we project every time we apply the prompt update step. Additionally, similar to techniques used in training and distilling diffusion models~\citep{songMaximumLikelihoodTraining2021,kingmaUnderstandingDiffusionObjectives2023}, we weigh the objective in~\eqref{eq:score_kl} by the inverse signal-to-noise ratio $W(t)=\sigma_s^2/\alpha_s$~\citep{mardanivariationalperspectivesolving2023,wangUniinstructOnestepDiffusion2025,choiRethinkingDesignSpace2026}.
\begin{table}[htbp]
\centering
\resizebox{0.5\textwidth}{!}{%
\small
\setlength{\tabcolsep}{5pt}
\renewcommand{\arraystretch}{1.12}
\begin{tabular}{l c c c c c}
\toprule
\textbf{Problem} 
& $\boldsymbol{\sigma_{\mathrm{dec}}}$ 
& $\boldsymbol{\eta_z}$ 
& $\boldsymbol{\eta_c}$ 
& $\boldsymbol{r_c}$ 
& $\boldsymbol{w_t}$ \\
\midrule
SR $\times 8$
& 0.25
& 1.0
& 0.03
& 15.0
& linear \\
Gaussian deblur
& 0.08
& 1.0
& 0.66
& 15.0
& linear \\
Motion deblur
& 0.05
& 1.0
& 0.26
& 15.0
& linear \\
\bottomrule
\end{tabular}
}
\vspace{0.1cm}
\caption{
Hyperparameters used for positive-prompt experiments. Common settings are $K=16$, $N=1$, $8$ LCM timesteps, cyclic timestep sampling, and $\sigma_\vy=0.01$.
}
\label{tab:hyperparams-ffhq-positive}
\end{table}

For completeness, we also provide in Table~\ref{tab:hyperparams-ffhq-negative} the template for the negative-prompt setting, in which the prompt is \texttt{A photo of a cat}.

\begin{table}[htbp]
\centering
\resizebox{0.5\textwidth}{!}{%
\small
\setlength{\tabcolsep}{5pt}
\renewcommand{\arraystretch}{1.12}
\begin{tabular}{l c c c c c}
\toprule
\textbf{Problem} 
& $\boldsymbol{\sigma_{\mathrm{dec}}}$ 
& $\boldsymbol{\eta_z}$ 
& $\boldsymbol{\eta_c}$ 
& $\boldsymbol{r_c}$ 
& $\boldsymbol{w_t}$ \\
\midrule
SR $\times 8$
& 0.25
& 1.0
& 3.0
& 15.0
& linear \\
Gaussian deblur
& 0.08
& 1.0
& 3.0
& 15.0
& linear \\
Motion deblur
& 0.05
& 1.0
& 3.0
& 15.0
& linear \\
\bottomrule
\end{tabular}
}
\vspace{0.1cm}
\caption{
Hyperparameters used for the FFHQ-512 negative-prompt experiments with prompt \texttt{A photo of a cat}. Common settings are $K=16$, $N=1$, $8$ LCM timesteps, cyclic timestep sampling,  and $\sigma_\vy=0.01$. Here, $r_c$ denotes the prompt projection radius.
}
\label{tab:hyperparams-ffhq-negative}
\end{table}

\input{sections/appendix-relatedworks}

\section{Baselines}\label{app:baselines}

We briefly recall the main latent-diffusion baselines considered in our comparisons. All methods combine a pretrained LDM prior with a data-consistency correction, but they differ substantially in how the likelihood term is incorporated and how text conditioning is handled.

\subsection{LDPS}
LDPS is the direct latent-space counterpart of image-domain DPS~\citep{chung2023diffusion}. Let $\hat{\vz}_0 := \mathbb{E}[\vz_0 \mid \vz_t]$ denote the clean latent estimate associated with the current noisy iterate $\vz_t$. The update rule is given by
\[
\vz_{t-1}
=
\mathrm{DDIM}(\vz_t)
-
\rho \nabla_{\vz_t}\| \vy - A\mathcal D(\hat{\vz}_0)\|_2^2,
\]
where $\rho$ denotes the step size and $\mathrm{DDIM}(\cdot)$ represents a single DDIM~\citep{Song2020DenoisingDI} sampling step. In practice, a constant step size $\rho=1$ is typically used. Thus, LDPS injects the data-fidelity term via a single latent gradient correction applied to the decoded clean estimate.

\subsection{PSLD}
PSLD~\citep{routsolvinglinearinverse2023} refines LDPS by adding a latent-manifold stabilization term. In the linear setting, it updates the latent variable by combining a standard DDIM step with two gradient corrections,
\[
\vz_{t-1}
=
\mathrm{DDIM}(\vz_t)
-
\rho \nabla_{\vz_t}\|\vy-A\mathcal D(\hat{\vz}_0)\|_2^2
-
\gamma \nabla_{\vz_t}
\Bigl\|
\hat{\vz}_0
-
\mathcal E\!\Bigl(
A^\top \vy + (I-A^\top A)\mathcal D(\hat{\vz}_0)
\Bigr)
\Bigr\|_2^2.
\]
The first term corresponds to the LDPS correction, while the second encourages the clean latent estimate to remain stable under the decode--project--re-encode cycle. In practice, the method uses a fixed step size $\rho=1$ together with a regularization weight $\gamma=0.1$. This additional constraint is designed to keep the iterates close to the natural latent autoencoding manifold.

\subsection{P2L}
P2L~\citep{Chung2023PrompttuningLD} alternates between prompt adaptation and latent refinement. At a given diffusion step $t$, let $\hat{\vz}_0(c):=\mathbb{E}[\vz_0 \mid \vz_t, c]$ denote the current estimate of the clean latent variable conditioned on the text embedding $c$. The first stage updates the text embedding by approximately maximizing the conditional posterior $p(c\mid \vz_t,\vy)$, leading to the gradient-based update
\[
\nabla_c \log p(c\mid \vz_t,\vy)
\approx
-\nabla_c \bigl\|A\mathcal D(\hat{\vz}_0(\vz_t,c))-\vy\bigr\|_2^2,
\]
which is typically optimized with Adam~\citep{kingma2017adammethodstochasticoptimization}. Once an updated embedding $c_t^\star$ is obtained, the second stage refines the latent variable by approximately maximizing $p(\vz_t\mid \vy,c_t^\star)$ as done by LDPS, i.e.
\[
\nabla_{\vz_t}\log p(\vz_t\mid \vy,c_t^\star)
\approx
s_\theta(\vz_t,c_t^\star)
+
\rho_t \nabla_{\vz_t}
\bigl\|A\mathcal D(\hat{\vz}_0(\vz_t,c_t^\star))-\vy\bigr\|_2^2,
\]
where $s_\theta(\cdot,c_t^\star)$ is the diffusion-model score and $\rho_t$ balances prior and likelihood contributions. Hence, P2L interleaves text-embedding optimization with posterior-guided latent sampling.

\subsection{TReg}
TReg~\citep{kim2025regularizationtextslatentdiffusion} formulates the reconstruction as an ADMM-style~\citep{Boyd2011DistributedOA} proximal optimization problem in latent space. Given the current clean-latent prediction $\hat{\vz}_0$, it considers
\[
\min_{\vx,\vz}\;
\ell_{\mathrm{MAP}}(\vz)+\gamma \ell_{\mathrm{TReg}}(\vz)
\qquad\text{s.t.}\qquad
\vx=\mathcal D(\vz),
\]
where
\[
\ell_{\mathrm{TReg}}(\vz)=\|\vz-\hat{\vz}_0\|_2^2
\]
and the MAP term is
\[
\ell_{\mathrm{MAP}}(\vz)
=
-\log p(\vz\mid \mathcal D(\vz),\vy)-\log p(\vy\mid \mathcal D(\vz))
=
\frac{\|\vz-\mathcal E(\mathcal D(\vz))\|_2^2}{2\sigma_E^2}
+
\frac{\|\vy-A(\mathcal D(\vz))\|_2^2}{2\sigma^2}.
\]
The method first solves the pixel-space subproblem
\[
\hat{\vx}_0(\vy)
=
\argmin_{\vx}\;
\frac{\|\vy-A(\vx)\|_2^2}{2\sigma^2}
+
\lambda \|\vx-\mathcal D(\hat{\vz}_0)\|_2^2,
\]
and then updates the latent representation through
\[
\hat{\vz}_0^{\mathrm{ema}}
=
\argmin_{\vz}\;
\zeta \|\vz-\mathcal E(\hat{\vx}_0(\vy))\|_2^2
+
\gamma \|\vz-\hat{\vz}_0\|_2^2
=
\alpha_{t-1}\mathcal E(\hat{\vx}_0(\vy))
+
(1-\alpha_{t-1})\hat{\vz}_0,
\]
with $\alpha_{t-1}=\zeta/(\zeta+\gamma)$. After this proximal correction, a DDIM step is performed. TReg can additionally adapt the null-text embedding via the so-called \emph{Adaptive Negation} step,
\[
c_{\emptyset}
\leftarrow
c_{\emptyset}
-
\eta \nabla_{\emptyset} T_{\mathrm{img}}(\hat{\vx}_0(\vy),c_{\emptyset}),
\]
where $T_{\mathrm{img}}$ denotes the CLIP image encoder~\citep{Radford2021LearningTV}. This allows the method to preserve a user-provided prompt while still adjusting the unconditional branch used in classifier-free guidance. 

\subsection{LATINO}\label{app:LATINO}

LATINO~\citep{Spagnoletti_2025_ICCV} is a plug-and-play inverse solver that combines a prompt-conditioned latent consistency prior with an exact likelihood correction in pixel space. Given a measurement $\vy$, a prompt embedding $c$, and a posterior of interest
\[
p(\vx\mid \vy,c)\propto p(\vy\mid \vx)\,p(\vx\mid c),
\]
LATINO may be interpreted as a split-step discretization of an overdamped Langevin dynamics targeting $p(\vx\mid \vy,c)$: a \emph{prior step} transports the current iterate toward the prompt-conditioned prior by means of the stochastic auto-encoder of Definition~\ref{def:sae-latino}, and a \emph{likelihood step} enforces data consistency by applying the proximal map associated with $-\log p(\vy\mid \vx)$.

\begin{definition}[Stochastic auto-encoder]\label{def:sae-latino}
For a diffusion time $t\in(0,T]$ and a prompt embedding $c$, we define the stochastic auto-encoder (SAE) associated with the latent consistency prior as the pair $(\mathfrak E_t,\mathfrak D_{t,c})$, where:
\begin{itemize}
    \item the stochastic encoder $\mathfrak E_t$ maps $\vx$ to a noisy latent $\vz_t$ according to
    \[
    \vz_t \mid \vx \sim \mathcal N\!\bigl(\sqrt{\alpha_t}\,\mathcal E_{\phi^-}(\vx),\,\sigma_t^2 I\bigr),
    \qquad \sigma_t^2=1-\alpha_t,
    \]
    \item the stochastic decoder $\mathfrak D_{t,c}$ maps $\vz_t$ back to image space through
    \[
    \mathfrak D_{t,c}(\vz_t)
    :=
    \mathcal D_{\phi^-}\!\bigl(\hat g_{\theta^-}(\vz_t,t,c)\bigr).
    \]
\end{itemize}
Equivalently, the associated prior transport on image space is the Markov kernel
\[
\vx \mapsto \mathfrak D_{t,c}\circ \mathfrak E_t(\vx).
\]
\end{definition}

More precisely, starting from the current image iterate $\vx^{(k-1)}$, LATINO first maps it to latent space and perturbs it at diffusion time $t_k$,
\[
\vz_{t_k}^{(k)}
=
\sqrt{\alpha_{t_k}}\,\mathcal E_{\phi^-}\!\bigl(\vx^{(k-1)}\bigr)
+
\sigma_{t_k}\,\epsilon^{(k)},
\qquad
\epsilon^{(k)}\sim\mathcal N(0,I).
\]
The latent consistency model then predicts a cleaned latent
\[
\hat{\vz}_0^{(k)}=\hat g_{\theta^-}\!\bigl(\vz_{t_k}^{(k)},t_k,c\bigr),
\]
which is decoded back to pixel space as
\[
\vu^{(k)}=\mathcal D_{\phi^-}\!\bigl(\hat{\vz}_0^{(k)}\bigr)
=\mathfrak D_{t_k,c}\!\bigl(\vz_{t_k}^{(k)}\bigr).
\]
Finally, the measurement model is enforced through the proximal correction
\[
\vx^{(k)}
=
\operatorname{prox}_{\delta_k g_y}\!\bigl(\vu^{(k)}\bigr),
\qquad
g_y(\vx):=-\log p(\vy\mid \vx),
\]
where $\delta_k>0$ is an iteration-dependent parameter balancing the prior and likelihood contributions. The full method is described in Algorithm~\ref{alg:latino_appendix}.

\begin{algorithm}[H]
\caption{LATINO}
\label{alg:latino_appendix}
\begin{algorithmic}[1]
\State \textbf{Input:} measurement \(\vy\), prompt embedding \(c\), initialisation \(\vx^{(0)}=\mathcal A^\dagger \vy\), number of steps \(K\), schedules \(\{t_k\}_{k=1}^K\) and \(\{\delta_k\}_{k=1}^K\)
\For{$k=1,\dots,K$}
    \State Sample \(\epsilon^{(k)}\sim\mathcal N(0,I)\)
    \State \(\vz_{t_k}^{(k)} \gets \sqrt{\alpha_{t_k}}\,\mathcal E_{\phi^-}(\vx^{(k-1)})+\sigma_{t_k}\epsilon^{(k)}\)
    \State \(\vu^{(k)} \gets \mathcal D_{\phi^-}\!\bigl(\hat g_{\theta^-}(\vz_{t_k}^{(k)},t_k,c)\bigr)\)
    \State \(\vx^{(k)} \gets \operatorname{prox}_{\delta_k g_y}(\vu^{(k)})\)
\EndFor
\State \textbf{return} \(\vx^{(K)}\)
\end{algorithmic}
\end{algorithm}

In practice, the algorithm is warm-started from \(\mathcal A^\dagger \vy\) and uses an annealed schedule \(\{(t_k,\delta_k)\}_{k=1}^K\), with decreasing diffusion times and task-dependent proximal strengths.

\subsection{LATINO-PRO}\label{app:latino-pro}

LATINO-PRO extends LATINO by treating the prompt embedding \(c\) as an unknown parameter to be estimated from the observation. Rather than fixing \(c\) a priori, it adopts a maximum marginal likelihood estimation (MMLE) strategy and seeks
\[
\hat c(\vy)\in\arg\max_{c} p(\vy\mid c),
\qquad
p(\vy\mid c)=\int p(\vy\mid \vx)\,p(\vx\mid c)\,\dd \vx.
\]
The optimized prompt is then used to sample from the empirical-Bayes posterior \(p(\vx\mid \vy,\hat c(\vy))\). Through Fisher's identity we get
\[
\nabla_c \log p(\vy\mid c)
=
\mathbb E_{p(\vx\mid \vy,c)}
\bigl[\nabla_c \log p(\vx\mid c)\bigr],
\]
which shows that the marginal-likelihood gradient can be approximated by posterior samples. LATINO-PRO therefore alternates between:

\begin{enumerate}
    \item running a short LATINO chain targeting \(p(\vx\mid \vy,c_m)\), starting from the current state and using the current prompt \(c_m\);
    \item updating the prompt embedding through a stochastic approximation proximal gradient (SAPG) step,
    \[
    c_{m+1}
    =
    \Pi_{\mathcal C}\!\left(c_m+\gamma_m h_m\right),
    \]
    where \(\gamma_m>0\) is a stepsize, \(\Pi_{\mathcal C}\) denotes projection onto an admissible set \(\mathcal C\), and \(h_m\) is a Monte Carlo approximation of \(\nabla_c \log p(\vy\mid c_m)\).
\end{enumerate}

In the LATINO-PRO implementation, the gradient estimator is computed in latent space by differentiating the transition densities induced by the stochastic latent auto-encoding recursion. Concretely, after generating latent states
\[
\bigl\{\vz_{t_1}^{(1)},\dots,\vz_{t_{N_m}}^{(N_m)}\bigr\}
\]
along a short LATINO trajectory (i.e. using $N_m \sim K/2$ steps), one uses the approximation
\[
h_m
\approx
\nabla_c \log p\!\left(\vz_{t_1}^{(1)},\dots,\vz_{t_{N_m}}^{(N_m)}\mid c_m\right),
\]
which can be evaluated using automatic differentiation via the latent consistency model (see~\citep {Spagnoletti_2025_ICCV}[Appendix A] for details). Similar to CWGF, the prompt update acts in latent space and avoids differentiating through the full pixel-space proximal correction.

Overall, LATINO-PRO may thus be viewed as an alternating scheme:
\begin{center}
\emph{short posterior sampling with LATINO} \(\longrightarrow\) \emph{prompt update by SAPG} \(\longrightarrow\) \emph{repeat}.
\end{center}
After a small number of such outer iterations, one obtains an optimized prompt embedding \(\hat c(\vy)\), and a final LATINO pass can then be run with this calibrated prompt to produce a higher-quality reconstruction.

\section{Enhanced sampling variability on MNIST}
\label{app:mnist}
While the CWGF algorithm has proven to be SOTA on multiple image restoration tasks, the ablation study in Appendix~\ref{sec:ablations} shows how a single particle is enough to approximate $\mu(\vz \mid \vy)$ for the sake of the restoration task (i.e. that the metrics do not seem to improve when $N>1$). This phenomenon seems counterintuitive, but it is actually explainable as the latent space of our SD1.5-based LCM is too big to appreciate significant interactions between particles when $N < 10$ as in our experiments. Indeed, as clarified in Section~\ref{sec:prior-subflow}, when $N>1$ in \eqref{eq:denoiser-discrepancy-update} we get an extra interaction term $\vz_{0}^{(i)}-\sum_{j=1}^N \pi_{ij}(s)\,\vz_0^{(j)}$, where $\pi_{ij}$ can be read as a similarity matrix between the different particles at level of noise $s$. Thus, when the particles are too similar (and we would like to enhance the differences), this extra term pushes away from the pondered mean, while a diagonal $\pi_{ij} \sim \Id$ annihilates the term, and the algorithm follows, i.e. as LATINO would do, on $N$ samples in parallel. Thus, observing $\pi_{ij} \sim \Id$, as in our experiments, indicates that the particles are already sufficiently distinct to require only limited reshuffling during sampling.

The situation may be different when the space in which the particles $\{\vz^{(i)}\}_i$ live is much smaller. To illustrate this different scenario, we trained a small LCM on the MNIST dataset~\citep{Deng2012TheMD}, and, to further reduce the dimensionality of the latent space, we embedded the samples in the latent space of a VAE with $d=10$ dimensions. The training of the VAE ($\mathcal{D}_{\phi^-}, \mathcal{E}_{\phi^-}$) followed~\citep{dubois2019disentangling,bickfordsmith2024making}, while the training of the LCM is inspired by~\citep{lu2025simplifyingstabilizingscalingcontinuoustime}, for which the timesteps are normalised to be $\in[0,\pi/2]$. The implementation uses CWGF (no prompt opt.) following Algorithm~\ref{algo:cwgf}, since the LCM is unconditional in this scenario. Concerning the inverse problem considered, we define a simple inpainting with a central mask of shape $16\times16$, while $\sigma_\vy=0.01$. Other hyperparameters are set: $K=8$, $s$ takes 4 equi-spaced values in the Karras~\citep{karrasElucidatingDesignSpace2022} time schedule (so we cycle through them, and we use only the first two for $k>4$), $\texttt{lr}_z = 0.5$, $w_t=0.5$, $\sigma_{dec}=0.6$.

We then proceed in solving $\vy = \mathcal{A}\vx + \vn$ on a sample $\vx$ from the MNIST test set, and we compare two ways of doing so: (i) sampling $M = 1024$ particles from $p(\vx \mid \vy)$ with CWGF (no prompt opt.) in parallel, meaning that $N = M = 1024$, or (ii) repeating $M$ times the CWGF (no prompt opt.) scheme for $N=1$.

\begin{figure*}[h]
    \centering
    \includegraphics[width=\textwidth]{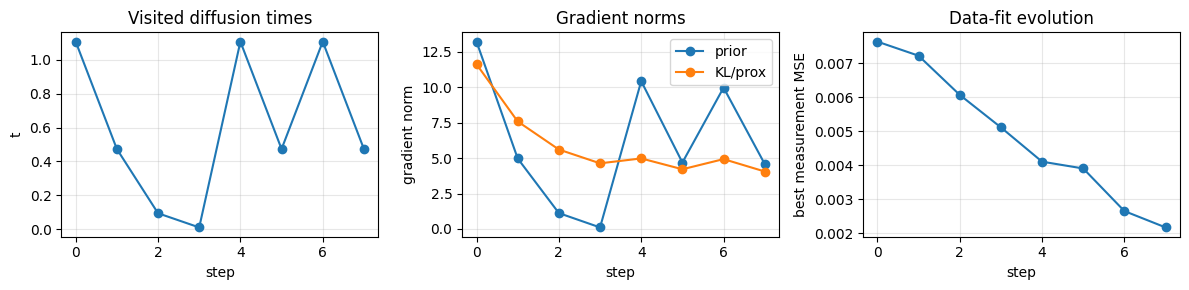}\\[-1mm]
    \includegraphics[width=\textwidth]{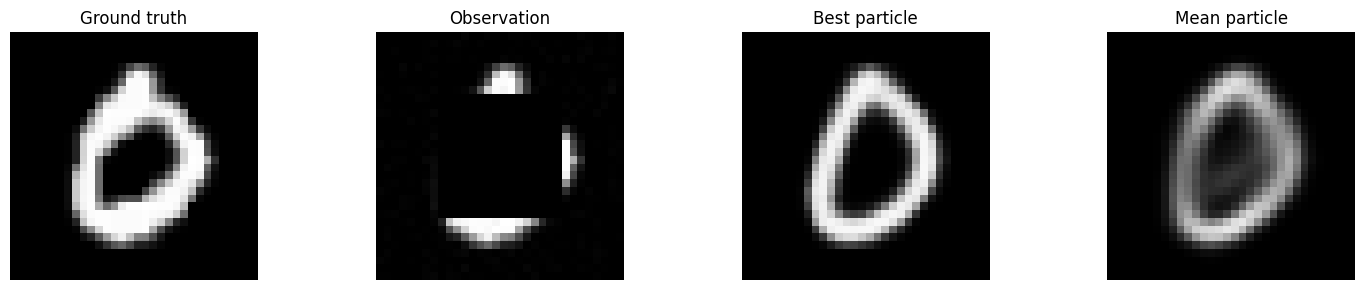}
    \caption{\textbf{MNIST box inpainting with CWGF (no prompt opt.).}
    \emph{Top:} visited diffusion times $t(k)\in[0,\pi/2]$, gradient norms (prior vs.\ likelihood/prox), and evolution of the best data-fit over iterations.
    \emph{Bottom:} ground truth, blurred measurement, best particle, and particle mean at the final iteration.}
    \label{fig:mnist_single_example}
    \vspace{-2mm}
\end{figure*}

We report the sampling dynamics and a representative reconstruction in Figure~\ref{fig:mnist_single_example}, confirming that the reconstruction runs successfully. To isolate the role of particle interactions, Figure~\ref{fig:mnist_1024_grids} compares joint evolution ($N=M$) against $M$ independent runs ($N=1$). A visual analysis of the samples shows greater variability in the $M=N$ case (i.e. ``all at once''), as expected from the discussion of the repulsive term. Figure~\ref{fig:mnist_1024_stats} summarizes the resulting diversity in latent space and reconstruction quality, confirming the visual insights. Indeed, the first two Principal Components in the latent space show a more extensive coverage for the $M=N$ setting, proving a \emph{repulsion off the mean} phenomenon. Similarly, the PSNR histogram for this case shows a larger support and reaches a higher maximum PSNR. Finally, we trained a small classifier on the MNIST training dataset (achieving $\sim 98\%$ accuracy on the test set) to detect the digit distribution in the two cases. In Figure~\ref{fig:mnist_digit_dist_a}, we observe a wider number of classes retrieved when the algorithm samples in parallel the $1024$ particles. Indeed, in the ``batched'' case, we do not observe any digit labelled as $5$, while, as observed in Figure~\ref{fig:mnist_digit_dist_b}, it is a plausible solution to the inpainting problem, and we observe these cases when $M=N$.

\begin{figure*}[h]
    \centering
    \includegraphics[width=\textwidth]{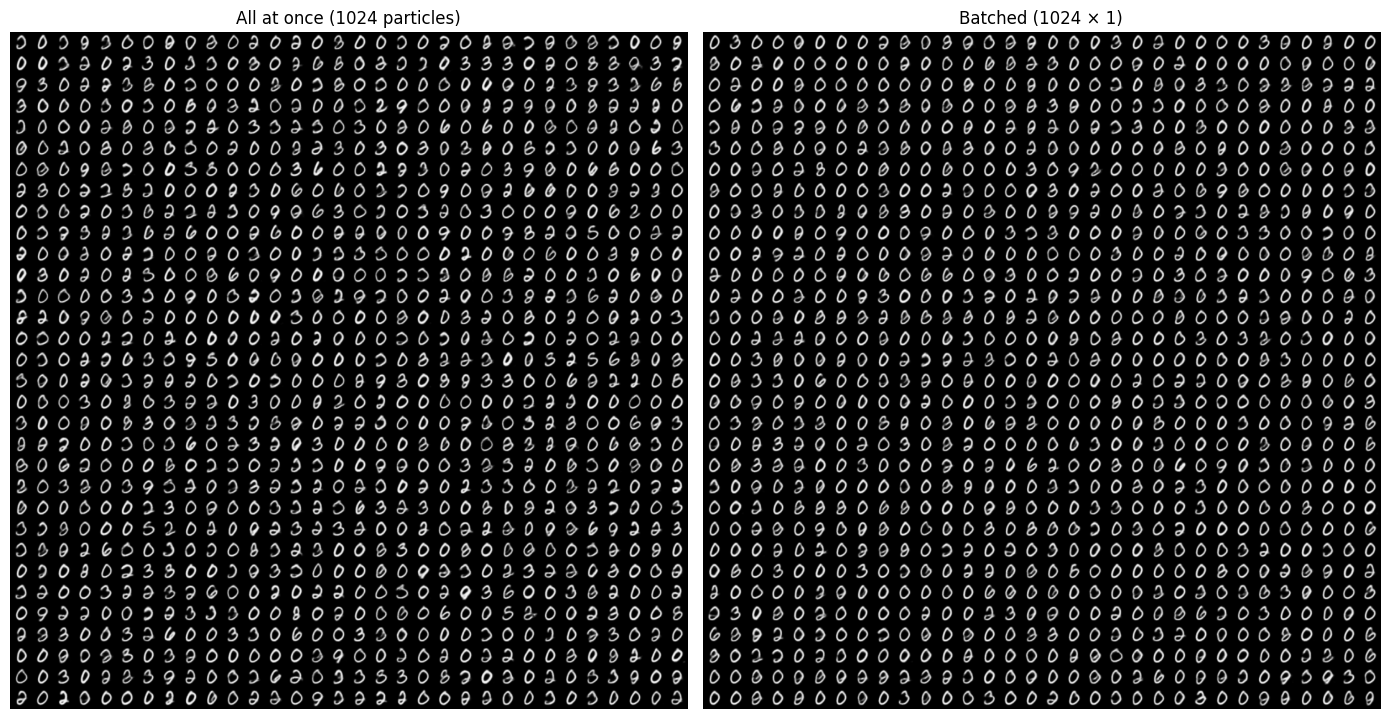}
    \caption{\textbf{Effect of particle interactions.}
    Samples obtained with $M=1024$ posterior draws using (left) CWGF (no prompt opt.) with $N=M$ particles evolved jointly, versus (right) $M$ independent runs with $N=1$ (no interaction term).}
    \label{fig:mnist_1024_grids}
    \vspace{-2mm}
\end{figure*}

\begin{figure*}[h]
    \centering
    \includegraphics[width=0.9\textwidth]{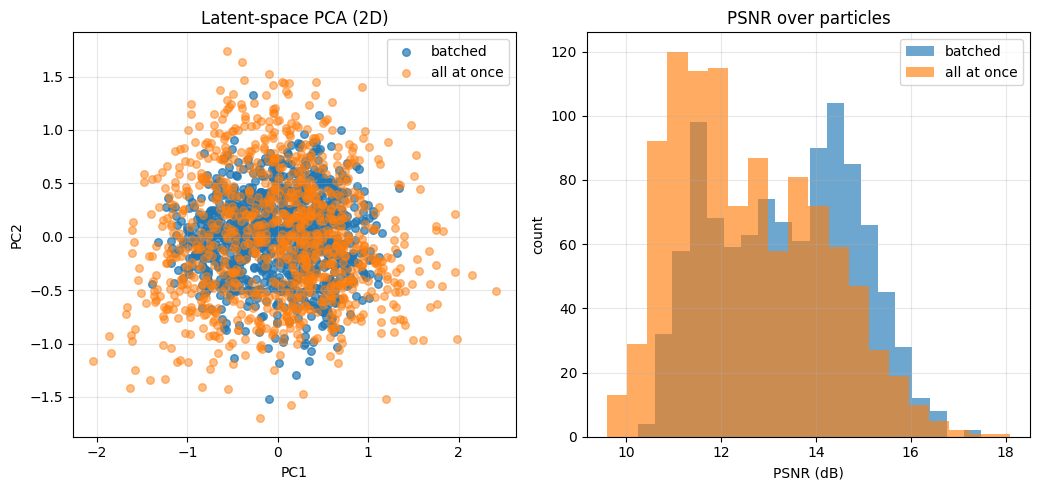}
    \caption{\textbf{Distributional comparison of posterior samples.}
    \emph{Left:} 2D PCA of latent samples, contrasting joint evolution ($N=M$) against independent runs ($N=1$ repeated).
    \emph{Right:} histogram of PSNR across the $M=1024$ samples for the two regimes.}
    \label{fig:mnist_1024_stats}
    \vspace{-2mm}
\end{figure*}

\begin{figure*}[h]
    \centering
    \begin{subfigure}[b]{0.45\textwidth}
        \centering
        \begin{minipage}[c][0.28\textheight][c]{\linewidth}
            \centering
            \includegraphics[width=\linewidth]{Experiments/MNIST/digits-dist.png}
        \end{minipage}
        \vspace{-1cm}
        \caption{Histogram of digit distribution.}
        \label{fig:mnist_digit_dist_a}
    \end{subfigure}\hfill
    \begin{subfigure}[b]{0.45\textwidth}
        \centering
        \begin{minipage}[c][0.28\textheight][c]{\linewidth}
            \centering
            \includegraphics[width=\linewidth]{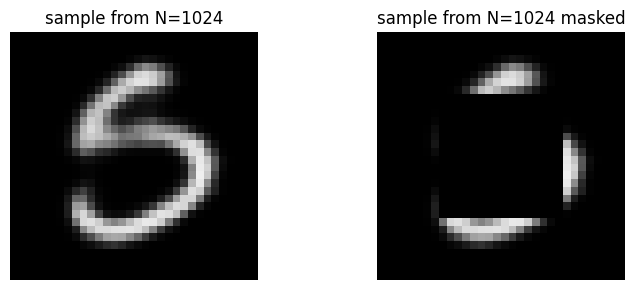}
        \end{minipage}
        \vspace{-1cm}
        \caption{Sample from class $5$ for $N=M=1024$.}
        \label{fig:mnist_digit_dist_b}
    \end{subfigure}
    \caption{\textbf{Classification of posterior samples.}}
    \label{fig:mnist_digit_dist}
    \vspace{-2mm}
\end{figure*}

Most importantly, we have experimental evidence that the matrix $\pi_{ij}$ does not collapse to $\Id$ except when $s$ is close to $0$, as shown in Figure~\ref{fig:mnist_pi_row}, where we plot the first $4$ iterations within CWGF (no prompt opt.) for $N=20$ particles.

\begin{figure*}[h]
    \centering
    \begin{minipage}{0.24\textwidth}
        \centering
        \includegraphics[width=\linewidth]{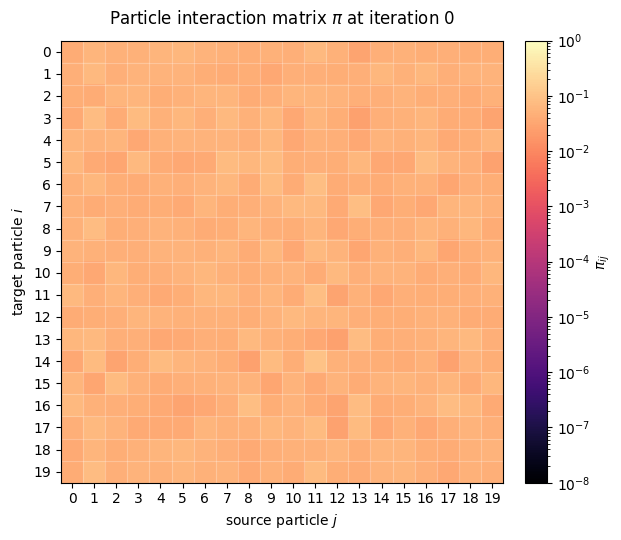}
    \end{minipage}\hfill
    \begin{minipage}{0.24\textwidth}
        \centering
        \includegraphics[width=\linewidth]{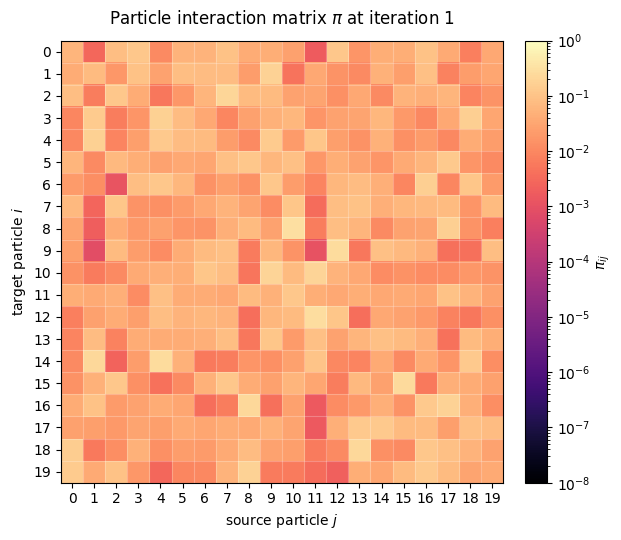}
    \end{minipage}\hfill
    \begin{minipage}{0.24\textwidth}
        \centering
        \includegraphics[width=\linewidth]{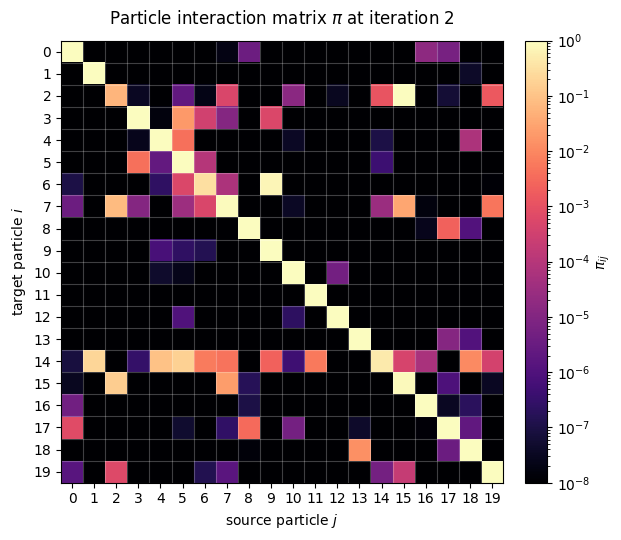}
    \end{minipage}\hfill
    \begin{minipage}{0.24\textwidth}
        \centering
        \includegraphics[width=\linewidth]{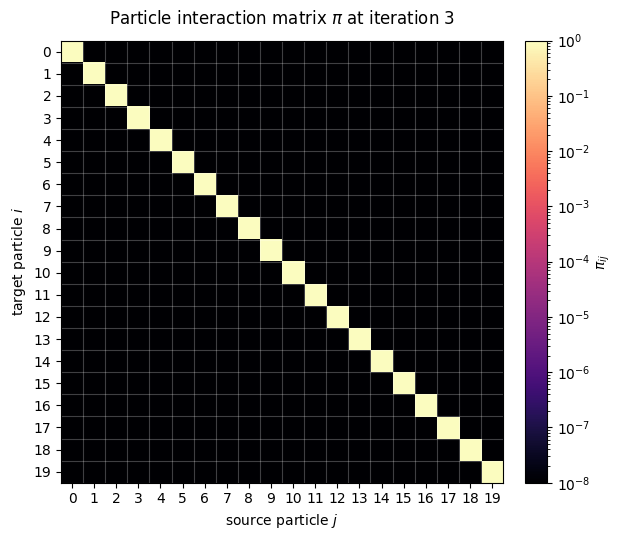}
    \end{minipage}
    \caption{\textbf{Evolution of the particle interaction matrix $\pi$.} Heatmaps of $\pi_{ij}$ at iterations $k=0,1,2,3$ (left to right).}
    \label{fig:mnist_pi_row}
    \vspace{-2mm}
\end{figure*}

We can then conclude that, in smaller latent spaces where the particles actually interact, the posterior reconstructions show a larger variability, allowing for non-obvious solutions that lie on the tails of the posterior distribution $p(\vx | \vy)$. We encourage future research to exploit this feature of the CWGF algorithm in contexts and applications where it is important to correctly sample the tail of the considered conditional distribution.

\section{Non-linear problems}\label{app:non-linear}

In addition to the linear Gaussian inverse problems considered in our theory, we also evaluate a more challenging \emph{nonlinear} restoration task based on JPEG compression. Given a clean image $\vx_0\in\mathsf{X}$, we generate the observation as
\begin{equation}
    \vy = \mathcal{J}_q(\vx_0 + \eta_0) + \vn,
    \qquad
    \eta_0 \sim \mathcal{N}(0,\sigma_{\vx_0}^2 I),
    \qquad
    \vn \sim \mathcal{N}(0,\sigma_\vy^2 I),
\end{equation}
where $\mathcal{J}_q$ denotes JPEG compression--decompression at quality factor $q$. This setting departs from the assumptions of \eqref{eq:simple_inv}, since $\mathcal{J}_q$ is nonlinear and non-differentiable due to quantization, so the closed-form Gaussian posterior and the associated linear proximal operator are no longer available. In practice, we keep the same posterior-guided sampling scheme and replace the exact data-consistency step by an approximate proximal update based on the deterministic surrogate $\mathcal{A}_{\mathrm{det}}(\vx)=\mathcal{J}_q(\vx)$, while treating the stochastic perturbations $\eta_0$ and $\vn$ as part of the observed measurement. More precisely, at each iteration we approximately solve
\begin{equation}
    \mu(\vz)
    \approx
    \argmin_{\vu\in\mathsf{X}}
    \frac{1}{2}\|\vu-\vx\|_2^2
    + \frac{\sigma_{dec}^2}{2\sigma_\vy^2}\|\mathcal A_{\mathrm{det}}(\vu)-\vy\|_2^2.
\end{equation}
by a small number of projected gradient steps, using the identity as a practical approximation of the adjoint of $\mathcal{A}_{\mathrm{det}}$. This provides a simple and effective way to assess the robustness of our method beyond the strictly linear regime.

Table~\ref{tab:comparison_FFHQ_nonlinear} and Figure~\ref{fig:qualitative_FFHQ_nonlinear} show results on the FFHQ dataset when JPEG quality level is set to $q=20\%$, $\sigma_{\vx_0}=0.05$, and $\sigma_\vy = 0.01$.

\begin{table*}[h]
\centering
\footnotesize
\setlength{\tabcolsep}{4pt}      
\renewcommand{\arraystretch}{1.1}
\begin{tabular}{l c ccc ccc ccc}
\toprule
 &  & \multicolumn{3}{c}{\textbf{Noisy JPEG}} \\
\cmidrule(lr){3-5}
\textbf{Method} & \textbf{NFE$\downarrow$}
& \textbf{FID$\downarrow$} & \textbf{PSNR$\uparrow$} & \textbf{LPIPS$\downarrow$} \\
\midrule
LATINO & \textbf{8} & 34.91 & 25.87 & 0.448 \\
LATINO-PRO & 65 & \underline{33.45} & \underline{26.28} & \underline{0.433} \\
P2L & 400 & 75.57 & 18.28 & 0.529\\ 
TReg & 200 & 94.42 & 20.04 & 0.703 \\
\midrule
\rowcolor{orange!20}
CWGF & \underline{16} & \textbf{30.71} & \textbf{26.33} & \textbf{0.355} \\
\bottomrule
\end{tabular}
\caption{Results on Noisy JPEG, all with noise $\sigma_\vy = 0.01$ on the FFHQ-512 val dataset. Prompt: \texttt{A photo of a face}. \textbf{Bold}:  best, \underline{underline}: second best.}
\label{tab:comparison_FFHQ_nonlinear}
\end{table*}
\begin{figure*}[h]
\centering

\begin{minipage}{0.03\textwidth}
  \centering
  \begin{tabular}{c}
    \\[-2mm]
    \rotatebox{90}{\scalebox{.6}{\textbf{Noisy JPEG}}} \\[1mm]
  \end{tabular}
\end{minipage}%
\hfill
\begin{minipage}{0.15\textwidth}
    \centering \tiny\textbf{GT} \\[-1mm]
    \zoomedImage[width=\linewidth]{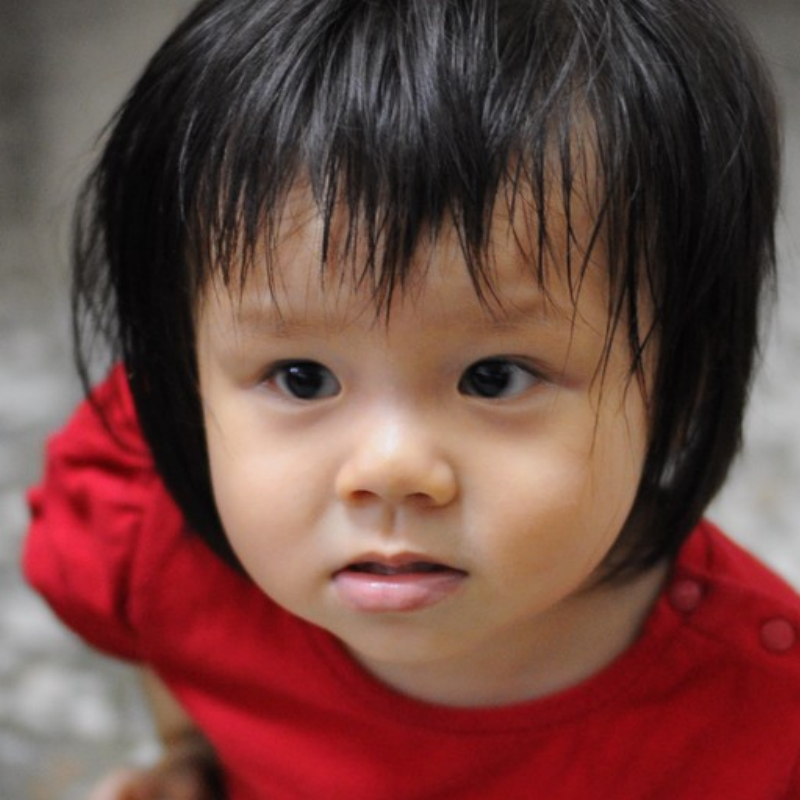}{-0.1,-0.4}{1.0,0.6} \\[-1mm]
\end{minipage}%
\hfill
\begin{minipage}{0.15\textwidth}
  \centering \tiny\textbf{Measurement} \\[-1mm]
    \zoomedImage[width=\linewidth]{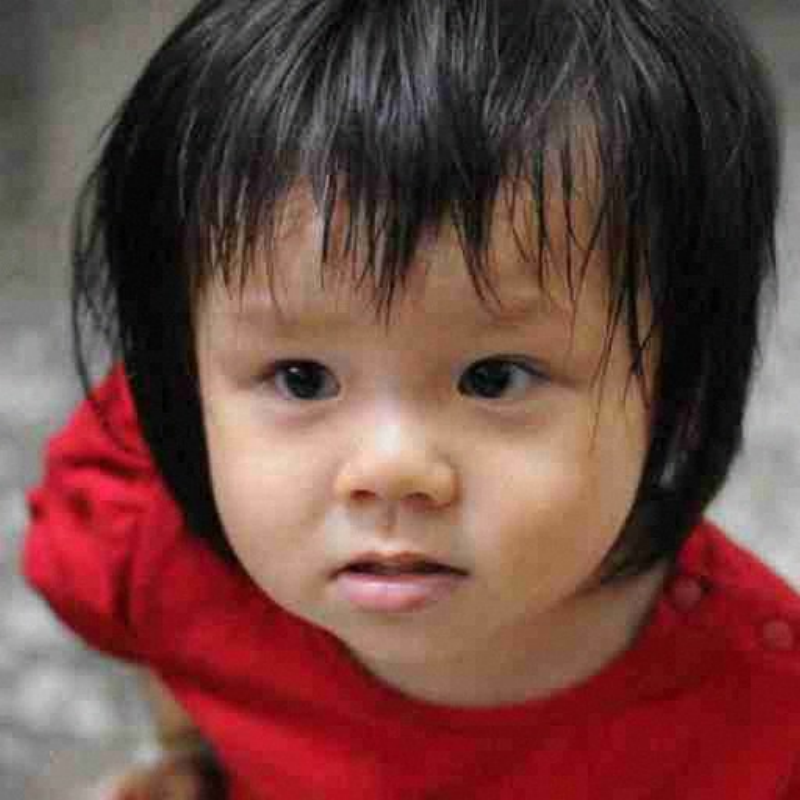}{-0.1,-0.4}{1.0,0.6}\\[-1mm]
\end{minipage}%
\hfill
\begin{minipage}{0.15\textwidth}
    \centering \tiny\textbf{CWGF} \\[-1mm]
    \zoomedImage[width=\linewidth]{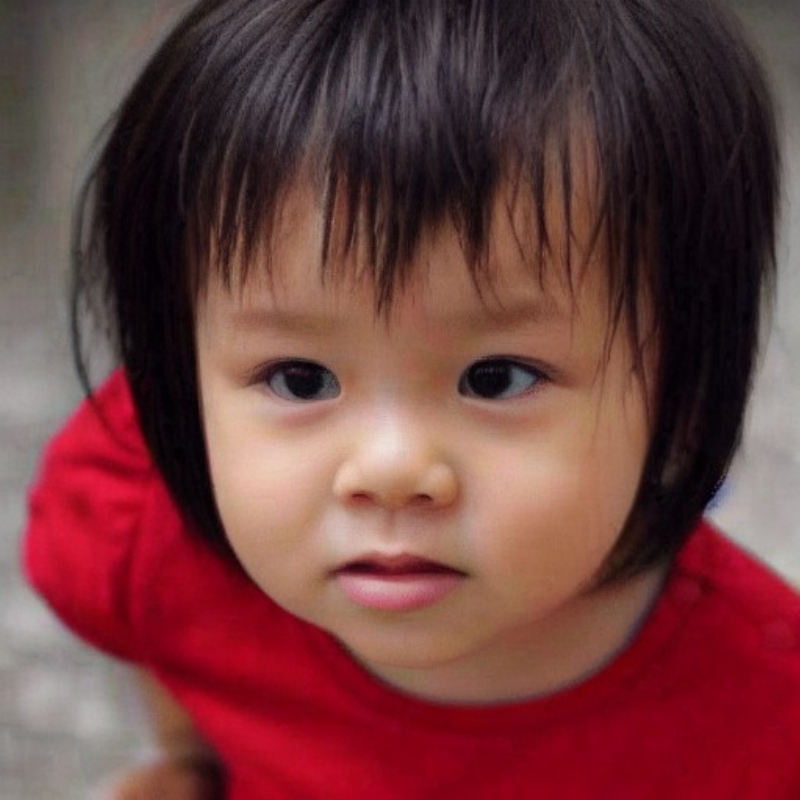}{-0.1,-0.4}{1.0,0.6}\\[-1mm]
\end{minipage}%
\hfill
\begin{minipage}{0.15\textwidth}
    \centering \tiny\textbf{LATINO} \\[-1mm]
    \zoomedImage[width=\linewidth]{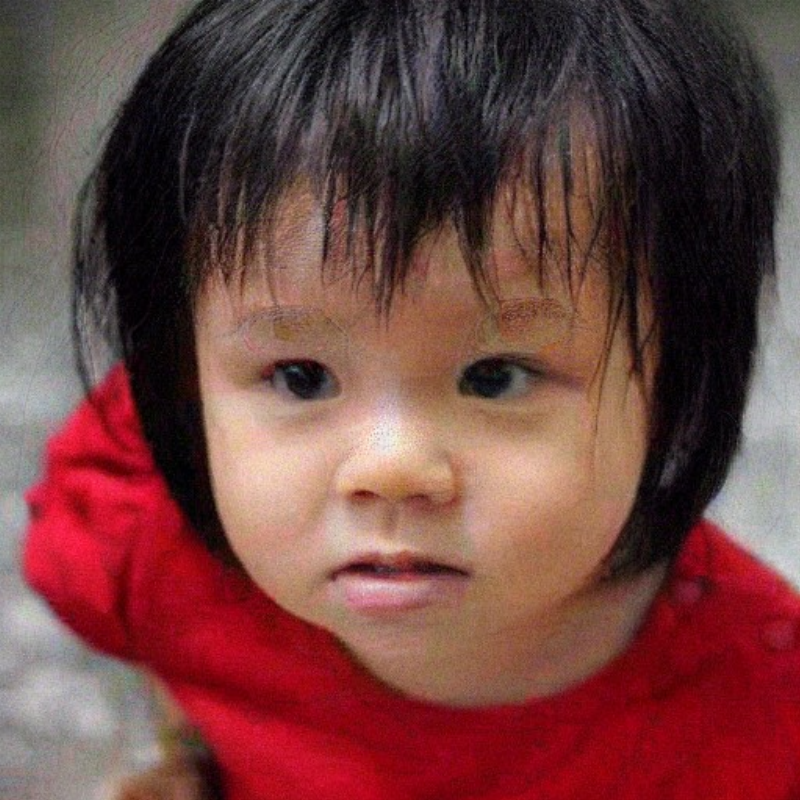}{-0.1,-0.4}{1.0,0.6} \\[-1mm]
\end{minipage}%
\hfill
\begin{minipage}{0.15\textwidth}
    \centering \tiny\textbf{LATINO-PRO} \\[-1mm]
    \zoomedImage[width=\linewidth]{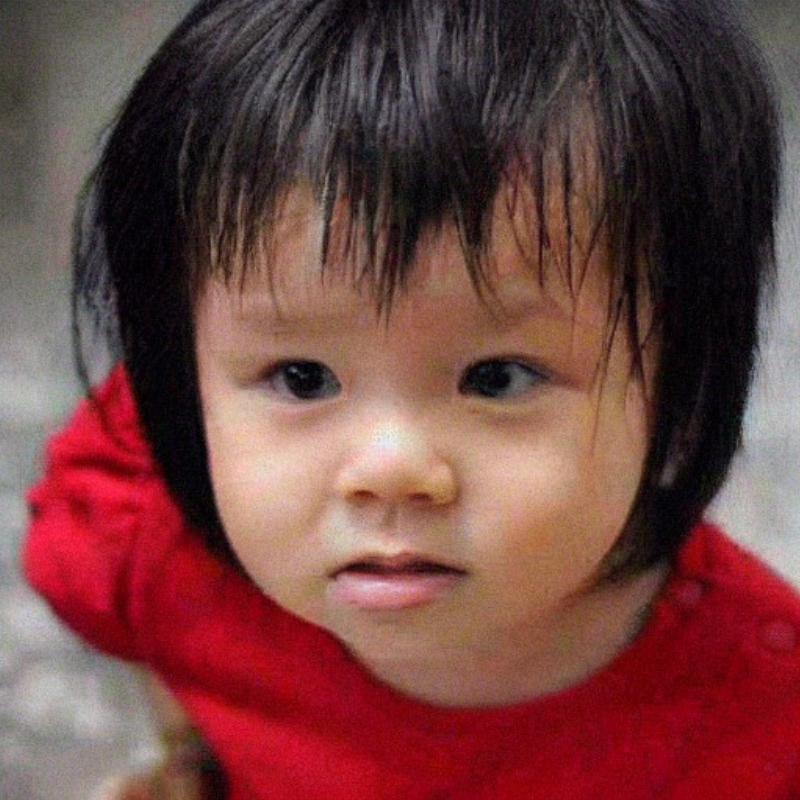}{-0.1,-0.4}{1.0,0.6} \\[-1mm]
\end{minipage}%
\hfill
\begin{minipage}{0.15\textwidth}
    \centering \tiny\textbf{TREG} \\[-1mm]
    \zoomedImage[width=\linewidth]{Experiments/FFHQ/JPEG/60154/TREG.pdf}{-0.1,-0.4}{1.0,0.6} \\[-1mm]
\end{minipage}%
\vspace{-2mm}
\caption{Qualitative comparison of image restoration results for the non-linear problems considered. Samples taken from FFHQ-512. Prompt: \texttt{A photo of a face}.}
\label{fig:qualitative_FFHQ_nonlinear}
\end{figure*}

\section{Ablations}\label{sec:ablations}
\subsection{Ablation on Prompt Optimisation}
We perform ablations on the effect of prompt optimisation under the same hyperparameters as in Appendix~\ref{app:hyperparameters}. The results are shown in Tables~\ref{tab:prompt-ablation-matched-cfg-gauss}, \ref{tab:prompt-ablation-matched-cfg-motion}, and \ref{tab:prompt-ablation-matched-cfg-sr8} for Gaussian deblurring, motion deblurring, and SR$\times8$, respectively. We observe that prompt optimisation consistently improves the performance of the adversarial prompt, while it has less of an effect on the positive prompt, possibly due to the fact that the prompt is already well-aligned with the observations.
\begin{table}[htbp]
\centering
\small
\begin{tabular}{lcccc}
\toprule
Setting & Prompt opt. & {FID $\downarrow$} & {LPIPS $\downarrow$} & {PSNR $\uparrow$} \\
\midrule
Positive prompt        & \xmark   & \textbf{21.28} & \textbf{0.306} & {29.37} \\
Positive prompt        & \cmark  & {21.95} & \textbf{{0.306}} & {29.23} \\
Adversarial prompt     & \xmark   & {32.91} & {0.354} & {28.64} \\
Adversarial prompt     & \cmark  & {27.58} & {0.322} & \textbf{29.59} \\
\bottomrule
\end{tabular}
\caption{
Prompt ablation study on Gaussian deblurring.
}
\label{tab:prompt-ablation-matched-cfg-gauss}
\end{table}

\begin{table}[htbp]
\centering
\small
\begin{tabular}{lcccc}
\toprule
Setting & Prompt opt. & {FID $\downarrow$} & {LPIPS $\downarrow$} & {PSNR $\uparrow$} \\
\midrule
Positive prompt        & \xmark   & {23.12} & \textbf{0.338} & {27.83} \\
Positive prompt        & \cmark  & \textbf{23.03} & \textbf{0.338} & {27.82} \\
Adversarial prompt     & \xmark   & {32.20} & {0.357} & {27.86} \\
Adversarial prompt     & \cmark  & {25.99} & {0.339} & \textbf{{28.16}} \\
\bottomrule
\end{tabular}
\caption{
Prompt ablation study on motion deblurring.
}\label{tab:prompt-ablation-matched-cfg-motion}
\end{table}
\begin{table}[htbp]
\centering
\small
\begin{tabular}{lcccc}
\toprule
Setting & Prompt opt. & {FID $\downarrow$} & {LPIPS $\downarrow$} & {PSNR $\uparrow$} \\
\midrule
Positive prompt        & \xmark   & {35.67} & {0.373} & 26.97 \\
Positive prompt        & \cmark  & \textbf{35.52} & \textbf{{0.372}} & {26.95} \\
Adversarial prompt     & \xmark   & {92.43} & {0.456} & {26.24} \\
Adversarial prompt     & \cmark  & {50.07} & {0.415} & \textbf{27.39} \\
\bottomrule
\end{tabular}
\caption{
Prompt ablation study on SR$\times8$.
}\label{tab:prompt-ablation-matched-cfg-sr8}
\end{table}

\subsection{Ablation on Weights \texorpdfstring{$w_t$}{wt}}\label{sec:ablate-wt}
We provide a sensitivity analysis of CWGF to the step weighting $w(t)$ used in implementation. The weight $w(t)$ controls the strength of the prior-flow correction at each diffusion timestep. Specifically, we set $w(t)$ to be constant in $t$ and vary it in $[0.1,1.0]$ (\Cref{fig:sensitivity-wt}) to isolate the effect of its overall scale. The other hyperparameters are the same as in Appendix~\ref{app:hyperparameters}.
\begin{figure}[htbp]
    \centering
    \includegraphics[width=0.7\linewidth]{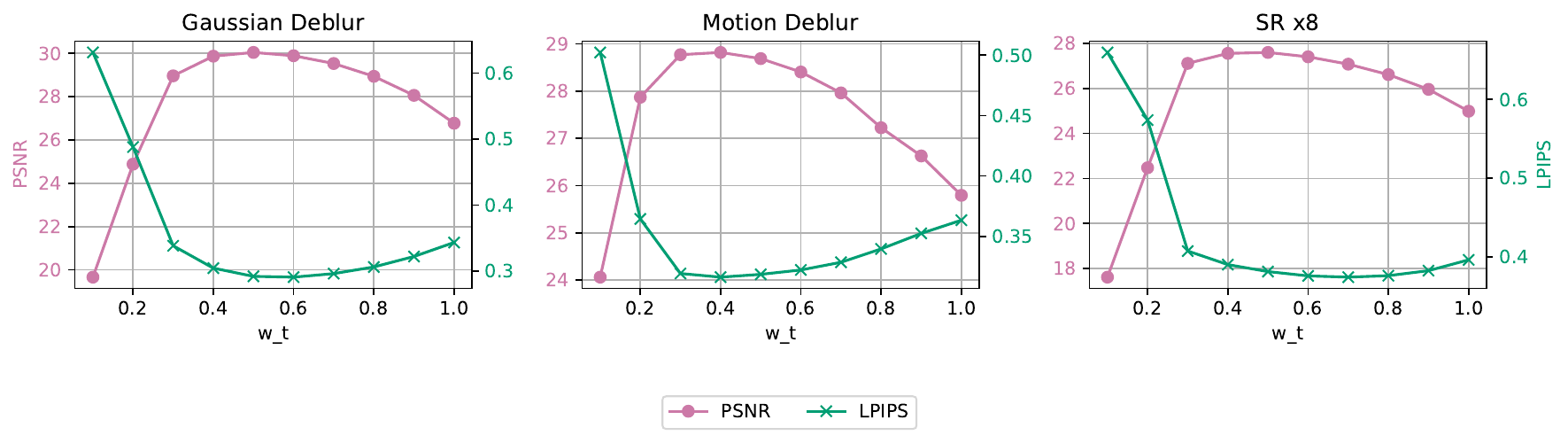}
    \caption{Sensitivity of CWGF to $w_t$ measured in PSNR and LPIPS averaged over 50 held-out images on FFHQ512.}\label{fig:sensitivity-wt}
\end{figure}

\subsection{Ablation on Prompt Learning Rate \texorpdfstring{$\eta_c$}{eta_c}}\label{sec:ablate-prompt-lr}
Similarly we provide a sensitivity analysis of CWGF to the prompt learning rate in the positive and adversarial prompt settings. The other hyperparameters are the same as in Appendix~\ref{app:hyperparameters}. We vary $\eta_c$ in $[0.01,1.0]$ for the positive prompt (\Cref{fig:sensitivity-prompt}) and in $[0.01, 10.0]$ for the adversarial prompt (\Cref{fig:sensitivity-prompt-adv}). As shown in the figures, CWGF has a preference for smaller learning rates in the positive prompt setting.
\begin{figure}[htbp]
    \centering
    \includegraphics[width=0.7\linewidth]{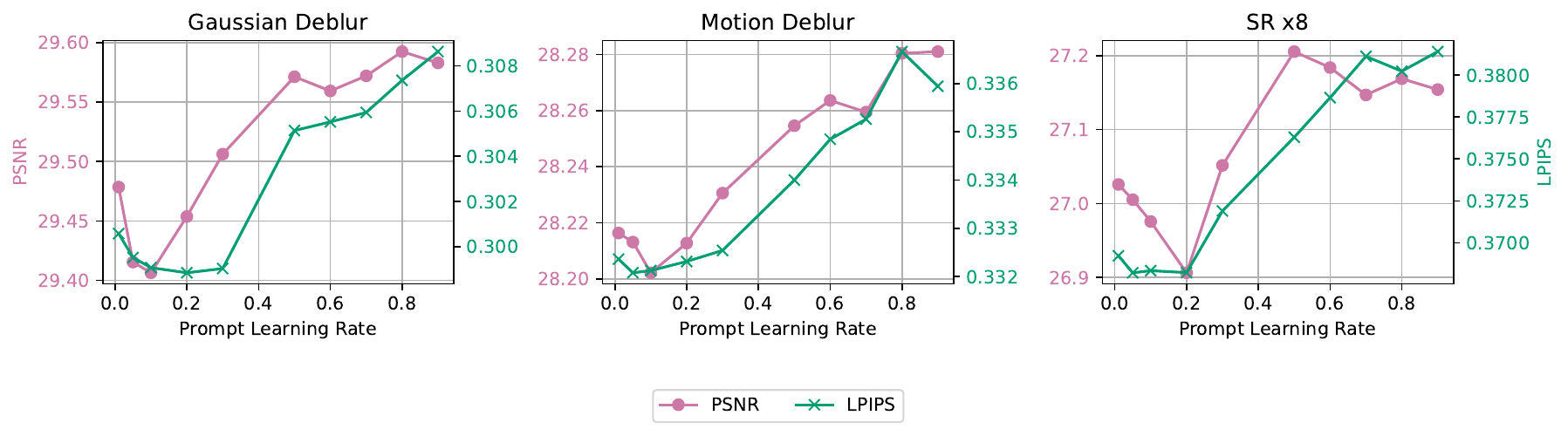}
    \caption{Sensitivity of CWGF to $\eta_c$ measured in PSNR and LPIPS averaged over 50 held-out images on FFHQ512. Using positive prompt: \texttt{A photo of a face.}}\label{fig:sensitivity-prompt}
\end{figure}
\begin{figure}[htbp]
    \centering
    \includegraphics[width=0.7\linewidth]{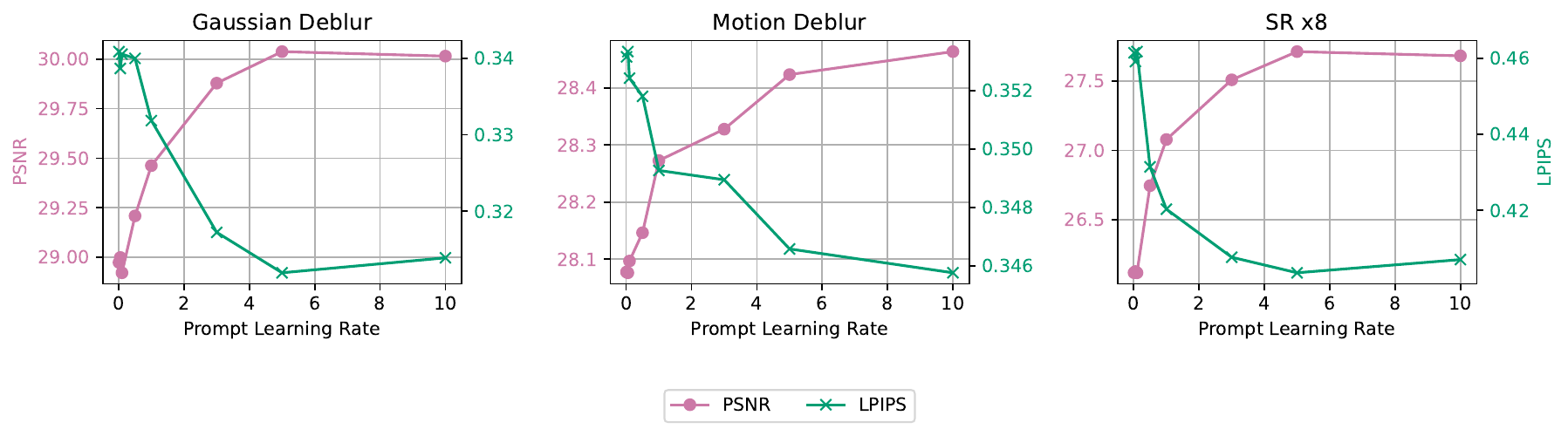}
    \caption{Sensitivity of CWGF to $\eta_c$ measured in PSNR and LPIPS averaged over 50 held-out images on FFHQ512. Using adversarial prompt: \texttt{A photo of a cat.}}\label{fig:sensitivity-prompt-adv}
\end{figure}

\subsection{Ablation on Timestep Sampling}\label{sec:ablate-timestep}
We now provide an ablation study on different timestep sampling strategies used for our score approximation in \Cref{sec:prior-subflow} to isolate the effect of the timestep ordering.

We compare our cyclic timestep sampling with uniform sampling $t(k)\sim \mathrm{Unif}(\mathcal{T})$ for $\mathcal{T}=\{\texttt{999,879,759,639,499,379,259,139}\}$ and a decreasing schedule that repeats each element of $\mathcal{T}$ consecutively \textit{i.e.} \texttt{\{999,999,...,259,259,139,139\}}. We present the results for the positive prompt in~\Cref{tab:ablate-t-pos} and the adversarial prompt in~\Cref{tab:ablate-t-adv}. We note that uniform sampling performs worse than the other two strategies, aligning with the fact that the $t(k)$ used in approximating the score acts as an annealing parameter for coarse-to-fine refinement.

\begin{table*}[htbp]
\centering
\setlength{\tabcolsep}{2.5pt}
\renewcommand{\arraystretch}{1.08}
\resizebox{0.8\textwidth}{!}{%
\begin{tabular}{l ccc ccc ccc}
\toprule
\cmidrule(lr){2-10}
 & \multicolumn{3}{c}{\textbf{Gaussian Deblur}} & \multicolumn{3}{c}{\textbf{Motion Deblur}} & \multicolumn{3}{c}{\textbf{SR $\times8$}} \\
\cmidrule(lr){2-4}\cmidrule(lr){5-7}\cmidrule(lr){8-10}
\textbf{Method}  & \textbf{FID$\downarrow$} & \textbf{PSNR$\uparrow$} & \textbf{LPIPS$\downarrow$}
& \textbf{FID$\downarrow$} & \textbf{PSNR$\uparrow$} & \textbf{LPIPS$\downarrow$}
& \textbf{FID$\downarrow$} & \textbf{PSNR$\uparrow$} & \textbf{LPIPS$\downarrow$} \\
\midrule
Uniform
& 24.77 & 29.47 & 0.323
& 27.33 & 27.98 & 0.348
& 48.44 & 27.29 & 0.403 \\
Decreasing
& 22.63  & 29.57 & 0.315  
& 22.93 & 28.13 & 0.338
& 34.50 & 27.00 & 0.367 \\
Cyclic
& {21.95} & {29.42} & {0.309}
& {23.18} & {27.98} & {0.338}
& {35.52} & {26.95} & {0.372} \\
\bottomrule
\end{tabular}%
}
\caption{Ablation on the timestep sampling on FFHQ 512 using positive prompt: \texttt{A photo of a face}.}
\label{tab:ablate-t-pos}
\end{table*}
\begin{table*}[htbp]
\centering
\setlength{\tabcolsep}{2.5pt}
\renewcommand{\arraystretch}{1.08}
\resizebox{0.8\textwidth}{!}{%
\begin{tabular}{l ccc ccc ccc}
\toprule
\cmidrule(lr){2-10}
 & \multicolumn{3}{c}{\textbf{Gaussian Deblur}} & \multicolumn{3}{c}{\textbf{Motion Deblur}} & \multicolumn{3}{c}{\textbf{SR $\times8$}} \\
\cmidrule(lr){2-4}\cmidrule(lr){5-7}\cmidrule(lr){8-10}
\textbf{Method}  & \textbf{FID$\downarrow$} & \textbf{PSNR$\uparrow$} & \textbf{LPIPS$\downarrow$}
& \textbf{FID$\downarrow$} & \textbf{PSNR$\uparrow$} & \textbf{LPIPS$\downarrow$}
& \textbf{FID$\downarrow$} & \textbf{PSNR$\uparrow$} & \textbf{LPIPS$\downarrow$} \\
\midrule
Uniform
& 26.09 & 29.49 & 0.338
& 29.91 & 27.93 & 0.361
& 55.82 & 27.31 & 0.432 \\
Decreasing
& 25.46  & 29.79 & 0.320
& 25.60 & 28.27 & 0.349
& 50.39 & 27.53 & 0.410 \\
Cyclic
& {27.58} & {29.59} & {0.322} 
& {25.99} & {28.16} & {0.339} 
& {50.07} & {27.39} & {0.415} \\
\bottomrule
\end{tabular}%
}
\caption{Ablation on the timestep sampling on FFHQ 512 using adversarial prompt: \texttt{A photo of a cat}.}\label{tab:ablate-t-adv}
\end{table*}

\subsection{Ablation on Particle Count \texorpdfstring{$N$}{N}}\label{sec:ablate-particle-count}
We now provide an ablation study on the number of particles $N$ used in CWGF in the high dimensional setting on the FFHQ 512 dataset. We vary $N$ in $\{1,4,8\}$ and present the results for the positive and adversarial prompts in~\Cref{tab:ablate-particle-count-pos} and~\Cref{tab:ablate-particle-count-adv}, respectively. We evaluate the performance of multi-particle versions of CWGF by randomly selecting one of the $N$ particles at the end of the algorithm to compute the metrics. While we note that the improvements are only marginal, this may be due to the high dimensional latent space. We refer to Appendix~\ref{app:mnist} for a more detailed analysis on the effect of $N$ in a lower dimensional setting.
\begin{table*}[htbp]
\centering
\setlength{\tabcolsep}{2.5pt}
\renewcommand{\arraystretch}{1.08}
\resizebox{0.8\textwidth}{!}{%
\begin{tabular}{l ccc ccc ccc}
\toprule
\cmidrule(lr){2-10}
 & \multicolumn{3}{c}{\textbf{Gaussian Deblur}} & \multicolumn{3}{c}{\textbf{Motion Deblur}} & \multicolumn{3}{c}{\textbf{SR $\times8$}} \\
\cmidrule(lr){2-4}\cmidrule(lr){5-7}\cmidrule(lr){8-10}
\textbf{Method}  & \textbf{FID$\downarrow$} & \textbf{PSNR$\uparrow$} & \textbf{LPIPS$\downarrow$}
& \textbf{FID$\downarrow$} & \textbf{PSNR$\uparrow$} & \textbf{LPIPS$\downarrow$}
& \textbf{FID$\downarrow$} & \textbf{PSNR$\uparrow$} & \textbf{LPIPS$\downarrow$} \\
\midrule
N=1
& {21.95} & {29.42} & {0.309}
& {23.18} & {27.98} & {0.338}
& {35.52} & {26.95} & {0.372} \\
N=4
& {21.53} & {29.31} & {0.307}
& {23.22} & {27.93} & {0.337}
& {35.28} & {26.94} & {0.371} \\
N=8
& 21.45 & 29.27 & 0.306
& 23.10 & 27.93 & 0.336
& 35.16 & 26.94 &  0.370 \\
\bottomrule
\end{tabular}%
}
\caption{Ablation on the number of particles $N$ on FFHQ 512 using positive prompt: \texttt{A photo of a face}.}
\label{tab:ablate-particle-count-pos}
\end{table*}

\begin{table*}[htbp]
\centering
\setlength{\tabcolsep}{2.5pt}
\renewcommand{\arraystretch}{1.08}
\resizebox{0.8\textwidth}{!}{%
\begin{tabular}{l ccc ccc ccc}
\toprule
\cmidrule(lr){2-10}
 & \multicolumn{3}{c}{\textbf{Gaussian Deblur}} & \multicolumn{3}{c}{\textbf{Motion Deblur}} & \multicolumn{3}{c}{\textbf{SR $\times8$}} \\
\cmidrule(lr){2-4}\cmidrule(lr){5-7}\cmidrule(lr){8-10}
\textbf{Method}  & \textbf{FID$\downarrow$} & \textbf{PSNR$\uparrow$} & \textbf{LPIPS$\downarrow$}
& \textbf{FID$\downarrow$} & \textbf{PSNR$\uparrow$} & \textbf{LPIPS$\downarrow$}
& \textbf{FID$\downarrow$} & \textbf{PSNR$\uparrow$} & \textbf{LPIPS$\downarrow$} \\
\midrule
N=1
& {27.58} & {29.59} & {0.322} 
& {25.99} & {28.16} & {0.339} 
& {50.07} & {27.39} & {0.415} \\
N=4
& 25.68 & 29.66 & 0.319
& 26.45 & 28.18 & 0.351
& 52.02 & 27.44 & 0.411 \\
N=8
& 26.35 & 29.66 & 0.317
& 26.52 & 28.19 & 0.351
& 52.55 & 27.42 & 0.411 \\
\bottomrule
\end{tabular}%
}
\caption{Ablation on the number of particles $N$ on FFHQ 512 using adversarial prompt: \texttt{A photo of a cat}.}\label{tab:ablate-particle-count-adv}
\end{table*}

\subsection{Ablation on Prompt Optimisation Score}\label{sec:ablate-prompt-score}
We now discuss whether the LCM score is a good proxy for the teacher score in prompt optimisation via~\eqref{eq:prompt_gradient}. We compare the performance of CWGF when using the LCM score and the teacher score in prompt optimisation; when using the teacher score, we scale the prompt gradient to ensure it has the same norm as the LCM's prompt gradient. We present the results for the positive and adversarial prompts in~\Cref{tab:ablate-prompt-score-pos} and~\Cref{tab:ablate-prompt-score-adv}, respectively. Additionally, we plot in~\Cref{fig:cosine-sim-score} the cosine similarity values between the LCM score and the teacher score across diffusion times during CWGF computed over 50 held-out images. We note that the LCM score aligns well with the teacher score, suggesting the scale mismatch between the LCM and teacher parametrisations, likely due to differences in classifier-free guidance parametrisation during LCM training~\citep{Luo2023LatentCM}; we leave a more detailed analysis of this for future work.
\begin{table*}[htbp]
\centering
\setlength{\tabcolsep}{2.5pt}
\renewcommand{\arraystretch}{1.08}
\resizebox{0.8\textwidth}{!}{%
\begin{tabular}{l ccc ccc ccc}
\toprule
\cmidrule(lr){2-10}
 & \multicolumn{3}{c}{\textbf{Gaussian Deblur}} & \multicolumn{3}{c}{\textbf{Motion Deblur}} & \multicolumn{3}{c}{\textbf{SR $\times8$}} \\
\cmidrule(lr){2-4}\cmidrule(lr){5-7}\cmidrule(lr){8-10}
\textbf{Method}  & \textbf{FID$\downarrow$} & \textbf{PSNR$\uparrow$} & \textbf{LPIPS$\downarrow$}
& \textbf{FID$\downarrow$} & \textbf{PSNR$\uparrow$} & \textbf{LPIPS$\downarrow$}
& \textbf{FID$\downarrow$} & \textbf{PSNR$\uparrow$} & \textbf{LPIPS$\downarrow$} \\
\midrule
Teacher score (SD1.5)
& 21.97 & 29.43 & 0.319
& 23.36 & 28.00 & 0.340
& 36.19 & 27.04 & 0.371 \\
LCM score
& {21.95} & {29.42} & {0.309}
& {23.18} & {27.98} & {0.338}
& {35.52} & {26.95} & {0.372} \\
\bottomrule
\end{tabular}%
}
\caption{Ablation on the use of teacher score in prompt optimisation on FFHQ 512 using positive prompt: \texttt{A photo of a face}.}\label{tab:ablate-prompt-score-pos}
\end{table*}

\begin{table*}[htbp]
\centering
\setlength{\tabcolsep}{2.5pt}
\renewcommand{\arraystretch}{1.08}
\resizebox{0.8\textwidth}{!}{%
\begin{tabular}{l ccc ccc ccc}
\toprule
\cmidrule(lr){2-10}
 & \multicolumn{3}{c}{\textbf{Gaussian Deblur}} & \multicolumn{3}{c}{\textbf{Motion Deblur}} & \multicolumn{3}{c}{\textbf{SR $\times8$}} \\
\cmidrule(lr){2-4}\cmidrule(lr){5-7}\cmidrule(lr){8-10}
\textbf{Method}  & \textbf{FID$\downarrow$} & \textbf{PSNR$\uparrow$} & \textbf{LPIPS$\downarrow$}
& \textbf{FID$\downarrow$} & \textbf{PSNR$\uparrow$} & \textbf{LPIPS$\downarrow$}
& \textbf{FID$\downarrow$} & \textbf{PSNR$\uparrow$} & \textbf{LPIPS$\downarrow$} \\
\midrule
Teacher score (SD1.5)
& 23.00  & 29.36 & 0.328
& 24.87 & 27.85 & 0.350
& 48.28 & 26.94 & 0.412 \\
LCM score 
& {27.58} & {29.59} & {0.322} 
& {25.99} & {28.16} & {0.339} 
& {50.07} & {27.39} & {0.415} \\
\bottomrule
\end{tabular}%
}
\caption{Ablation on the use of teacher score in prompt optimisation on FFHQ 512 using adversarial prompt: \texttt{A photo of a cat}.}\label{tab:ablate-prompt-score-adv}
\end{table*}

\begin{figure}[htbp]
    \centering
    \includegraphics[width=0.7\linewidth]{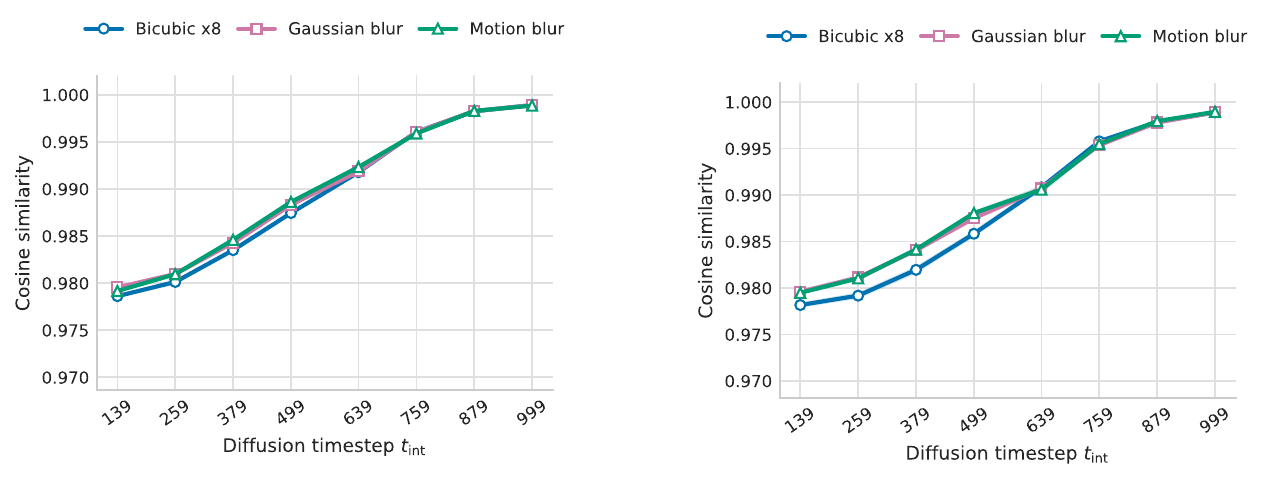}
    \caption{Cosine similarity between the teacher and LCM score estimate. Left: positive prompt. Right: adversarial prompt.}
    \label{fig:cosine-sim-score}
\end{figure}

\section{Computational Resources}\label{sec:compute-resources}

\begin{wraptable}{r}{0.52\textwidth}
\vspace{-1.0em}
\centering
\small
\resizebox{\linewidth}{!}{%
\begin{tabular}{lccc}
\toprule
\textbf{Method} & \textbf{VRAM (GB)} & \textbf{Time (s)} & \textbf{Resolution} \\
\midrule
\textbf{CWGF} & 8.66 & 8.72 & $512^2$ \\
\midrule
\textbf{LATINO} & 4.16 & 2.89 & $512^2$ \\
\textbf{LATINO-PRO} & 9.27 & 32.3 & $512^2$ \\
\textbf{TReg} & $\sim$6.40 & 40.5 & $512^2$ \\
\textbf{P2L} & $\sim$10.6 & 600 & $512^2$ \\
\textbf{LDPS} & 9.51 & 176 & $512^2$ \\
\textbf{PSLD} & 10.3 & 185 & $512^2$ \\
\bottomrule
\end{tabular}
}
\caption{Runtime and memory comparisons.}
\label{tab:runtime}
\vspace{-1.0em}
\end{wraptable}

For all experiments, we use NVIDIA GPUs on internal HPCs, which include a mixture of A40, L40S, RTX A6000, and GH200s. However, we also point out that our method can run on consumer GPUs. Table~\ref{tab:runtime} shows the memory footprint and runtime of our method, compared to those of the baselines, computed on the same machine.

\section{Additional samples}\label{sec:additional_samples}

\begin{figure*}[!h]
    \centering
    \begin{center} 
        \begin{tabular}{c c c} 
            \centering
             \sidecap{Measurement}&\includegraphics[width=0.48\textwidth]{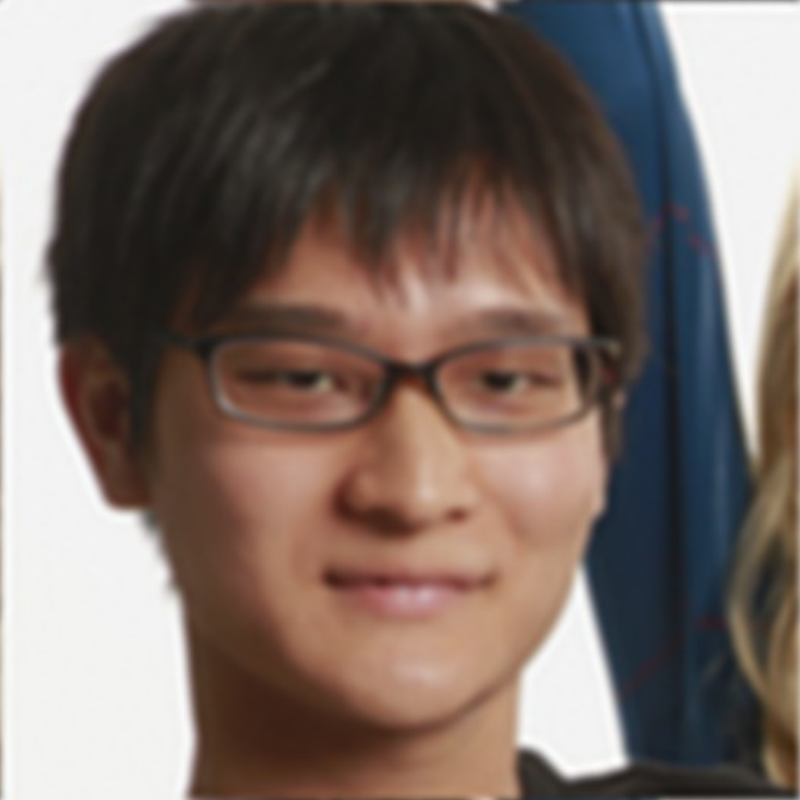} & 
            \includegraphics[width=0.48\textwidth]{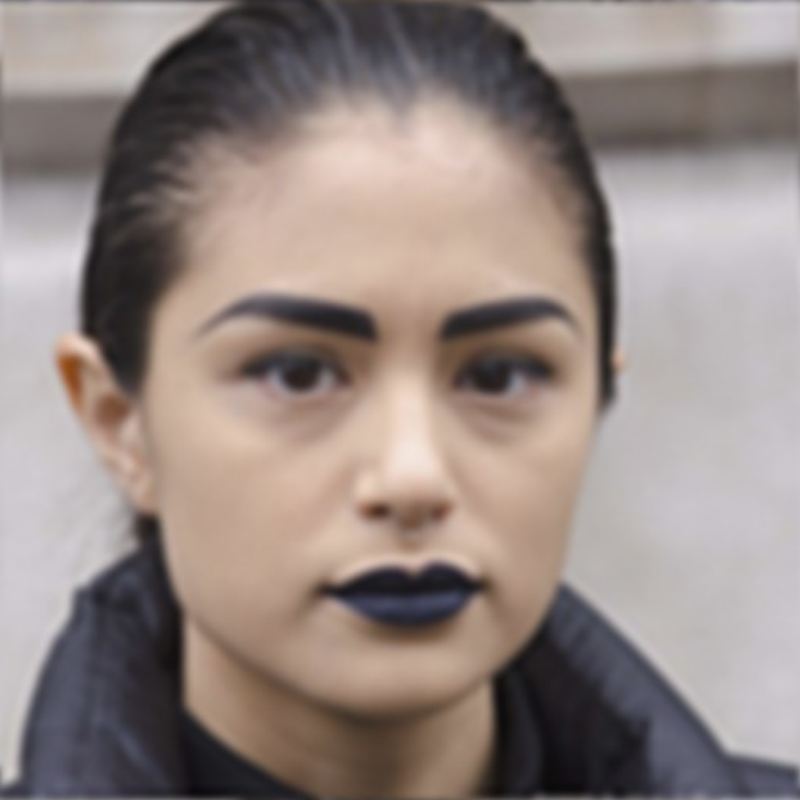} \\

            \sidecap{Ground truth}&\includegraphics[width=0.48\textwidth]{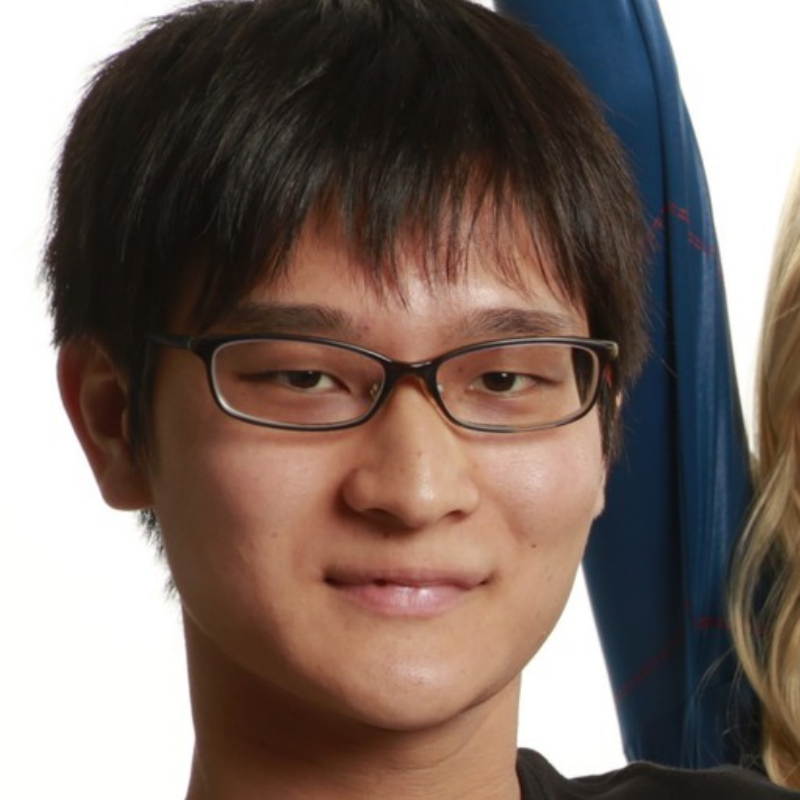} & 
            \includegraphics[width=0.48\textwidth]{Experiments/FFHQ/Gaussian-Blur/60007/GT.pdf} \\

            \sidecap{Restored}&\includegraphics[width=0.48\textwidth]{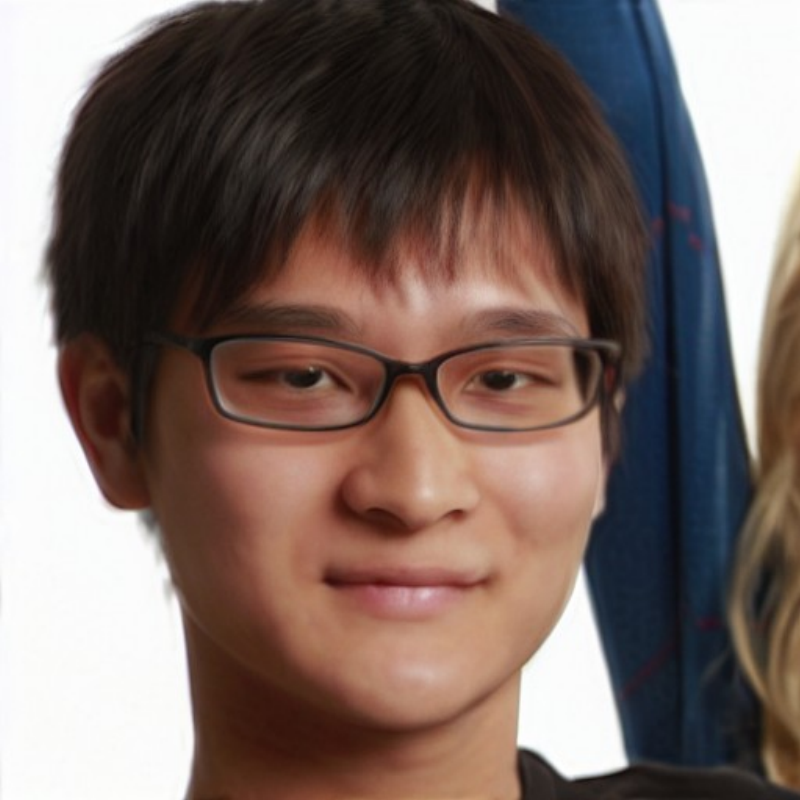} & 
            \includegraphics[width=0.48\textwidth]{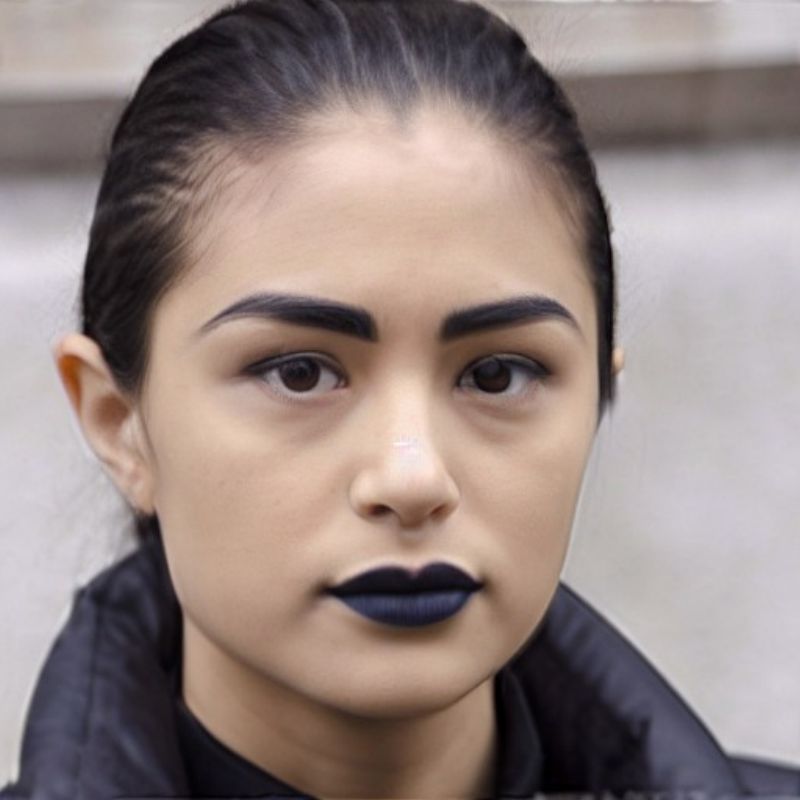} \\
        \end{tabular}
    \end{center}

    \caption{Gaussian deblur FFHQ-512 CWGF.}
    \label{fig:FFHQ_SR_sx16}
\end{figure*}

\begin{figure*}[!h]
    \centering
    \begin{center} 
        \begin{tabular}{c c c} 
            \centering
             \sidecap{Measurement}&\includegraphics[width=0.48\textwidth]{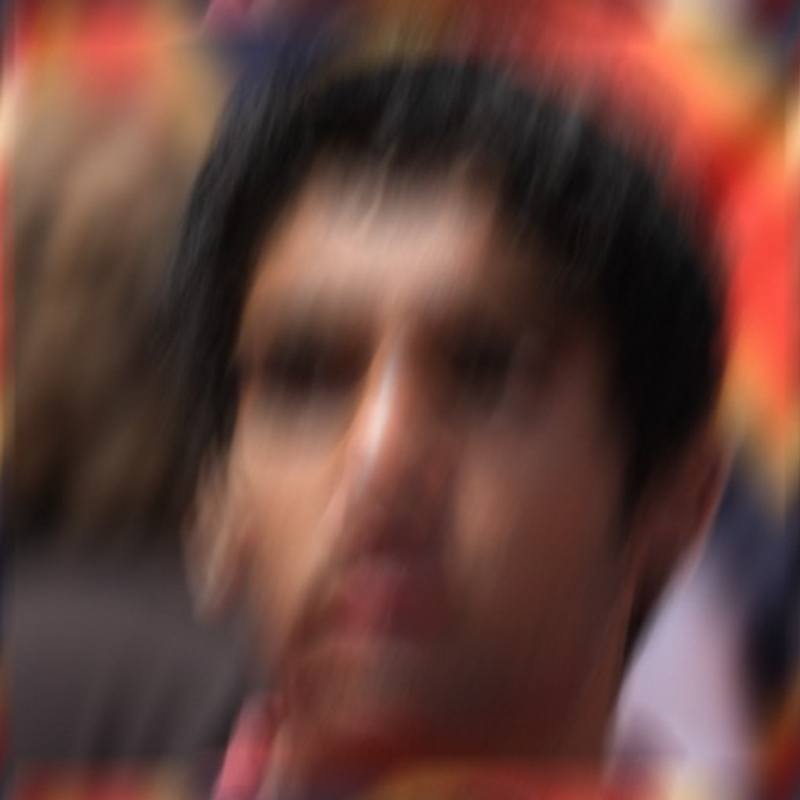} & 
            \includegraphics[width=0.48\textwidth]{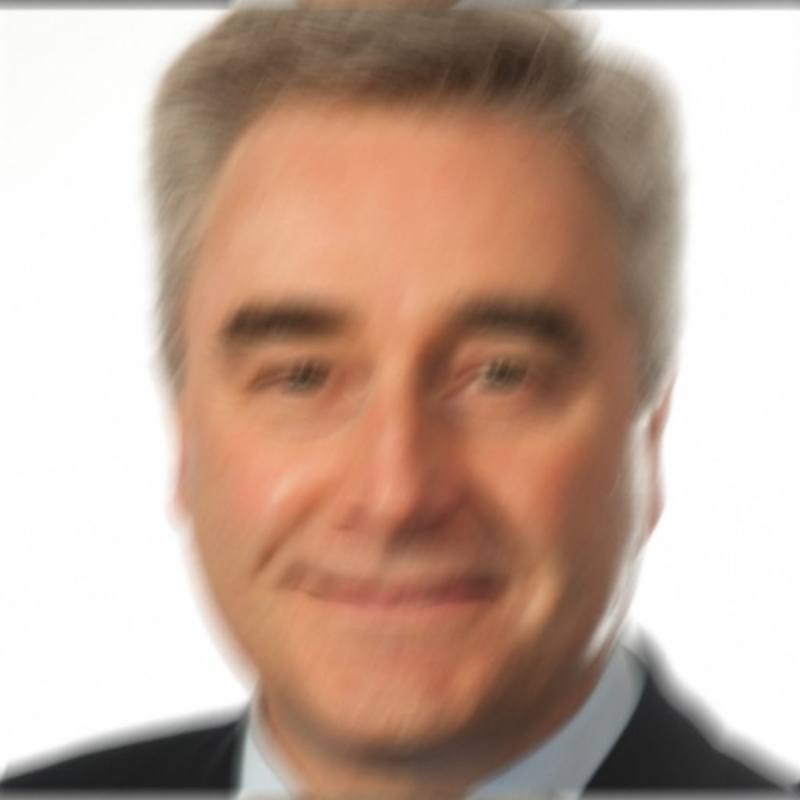} \\

            \sidecap{Ground truth}&\includegraphics[width=0.48\textwidth]{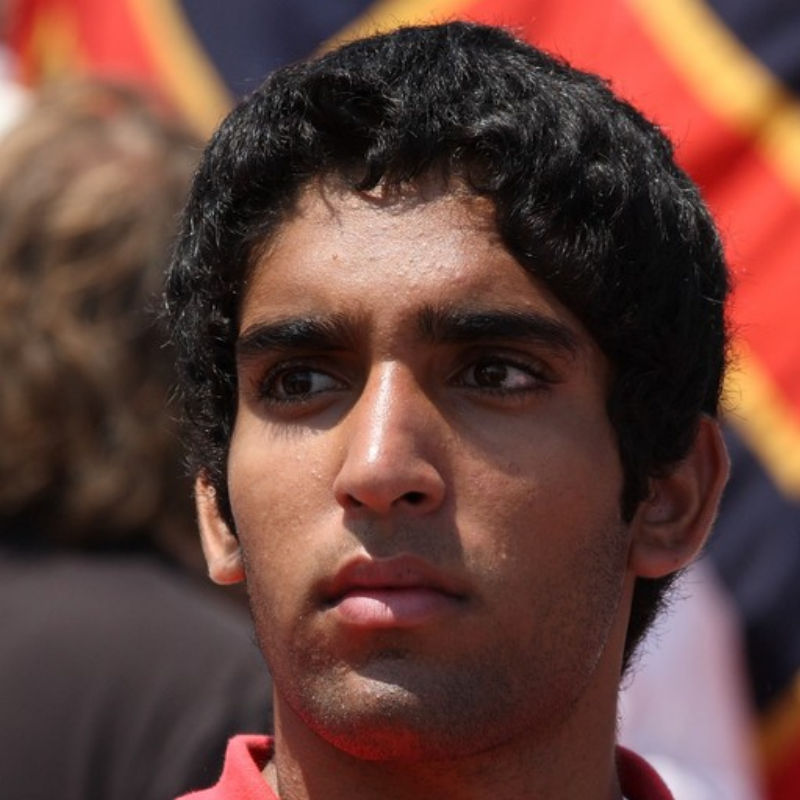} & 
            \includegraphics[width=0.48\textwidth]{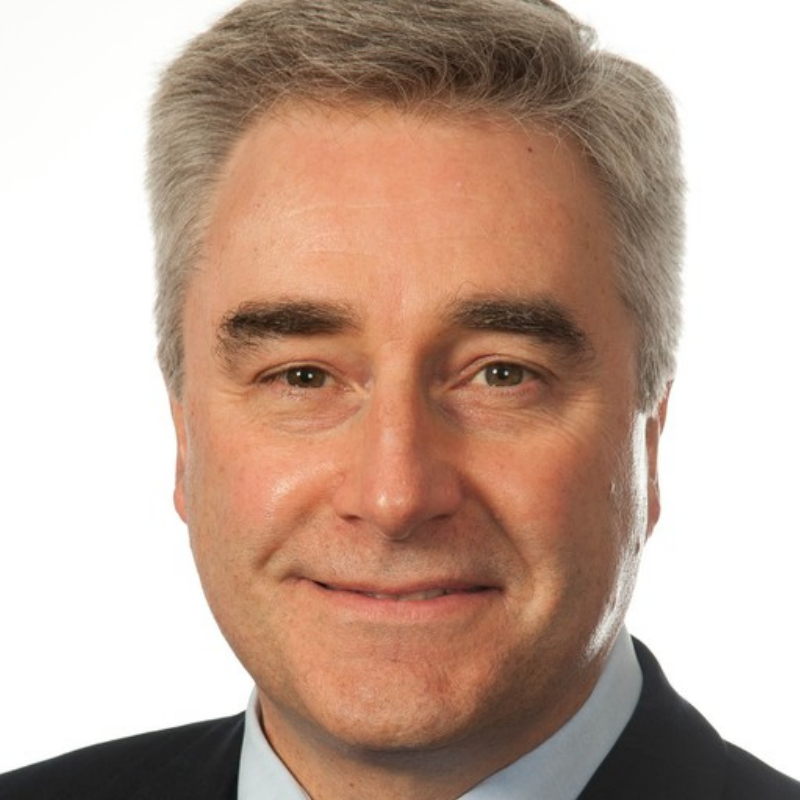} \\

            \sidecap{Restored}&\includegraphics[width=0.48\textwidth]{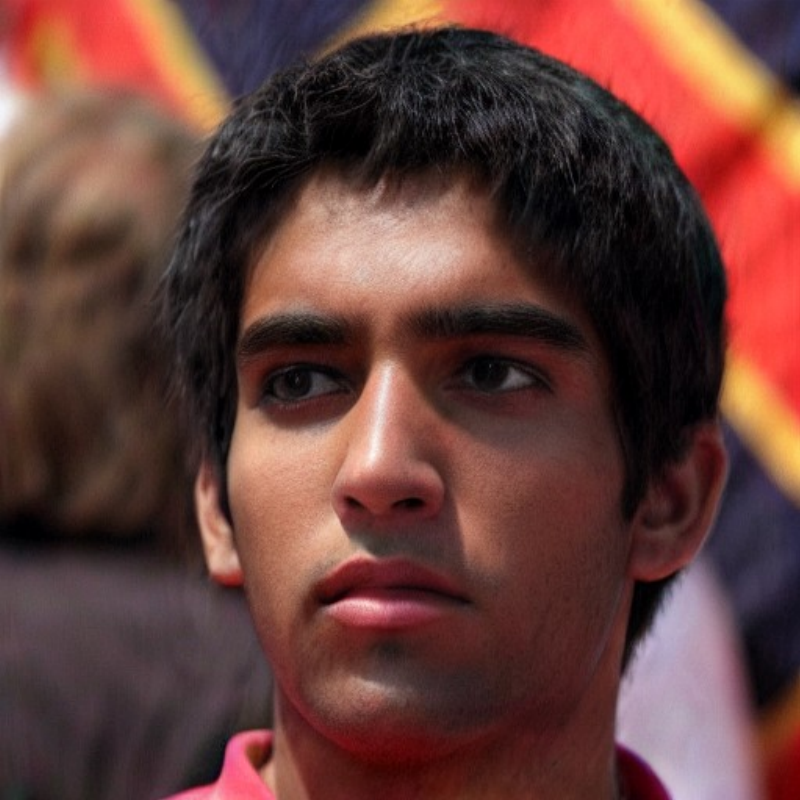} & 
            \includegraphics[width=0.48\textwidth]{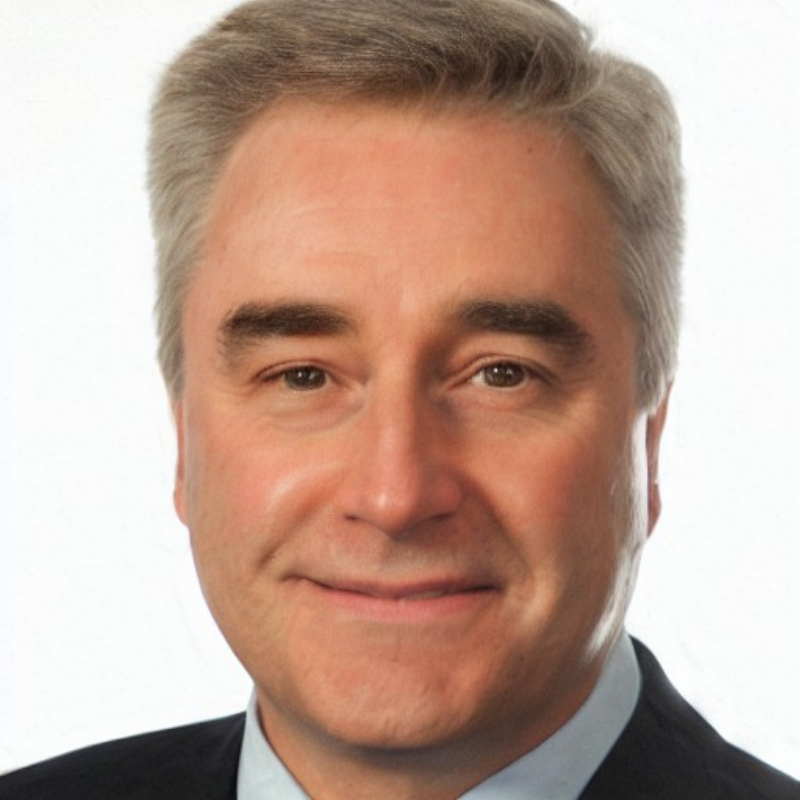} \\
        \end{tabular}
    \end{center}

    \caption{Motion deblur FFHQ-512 CWGF.}
    \label{fig:FFHQ_Deblur}
\end{figure*}

\begin{figure*}[!h]
    \centering
    \begin{center} 
        \begin{tabular}{c c c} 
            \centering
             \sidecap{Measurement}&\includegraphics[width=0.48\textwidth]{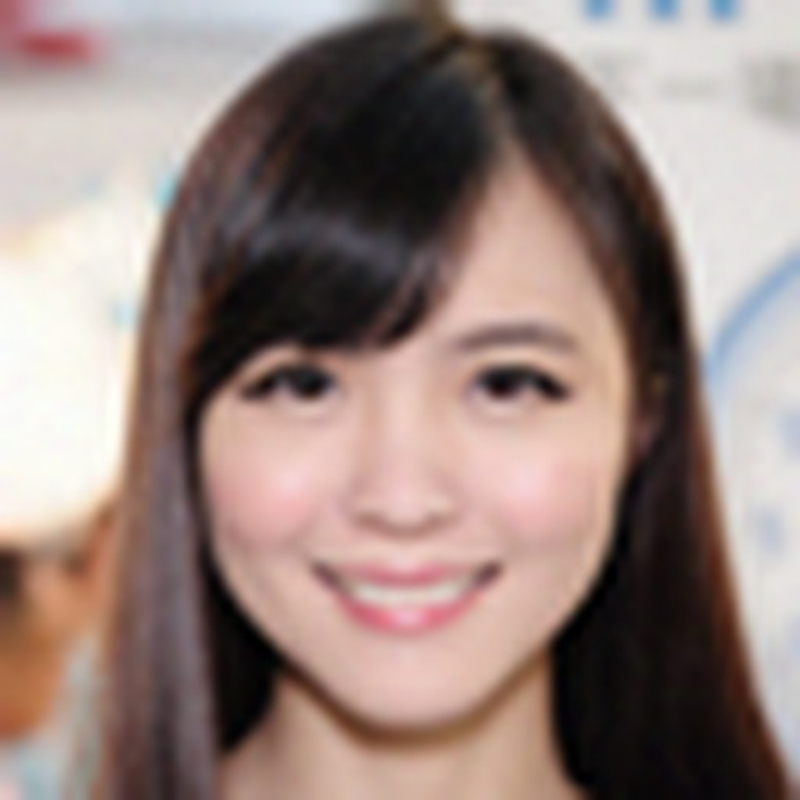} & 
            \includegraphics[width=0.48\textwidth]{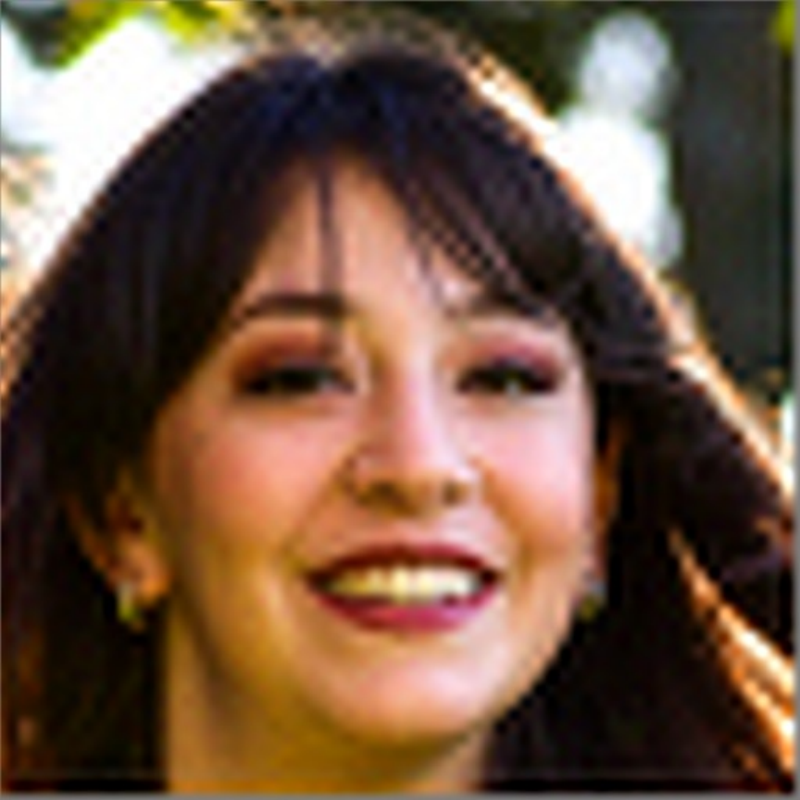} \\

            \sidecap{Ground truth}&\includegraphics[width=0.48\textwidth]{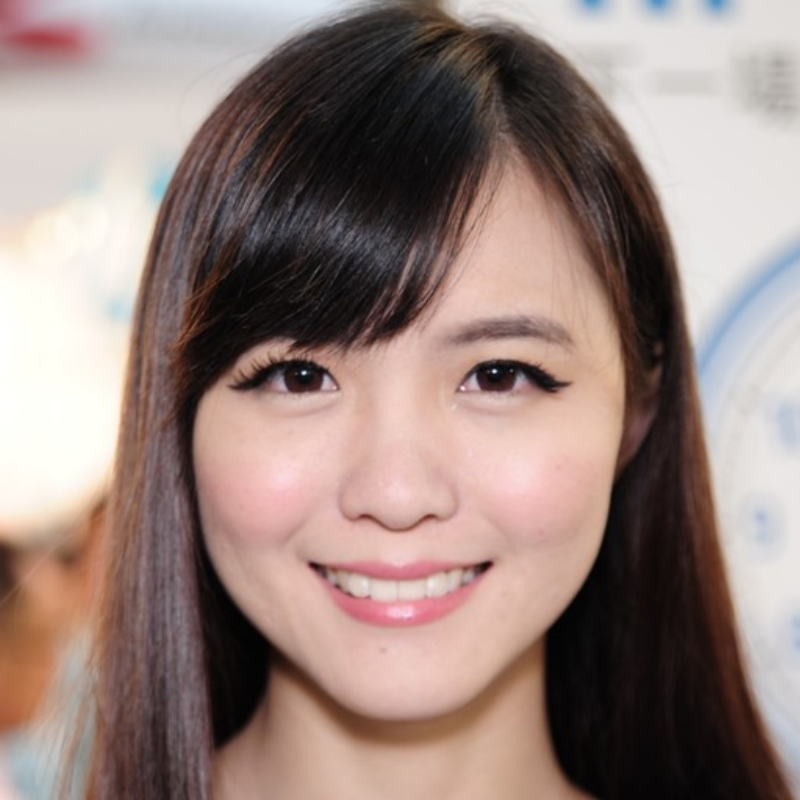} & 
            \includegraphics[width=0.48\textwidth]{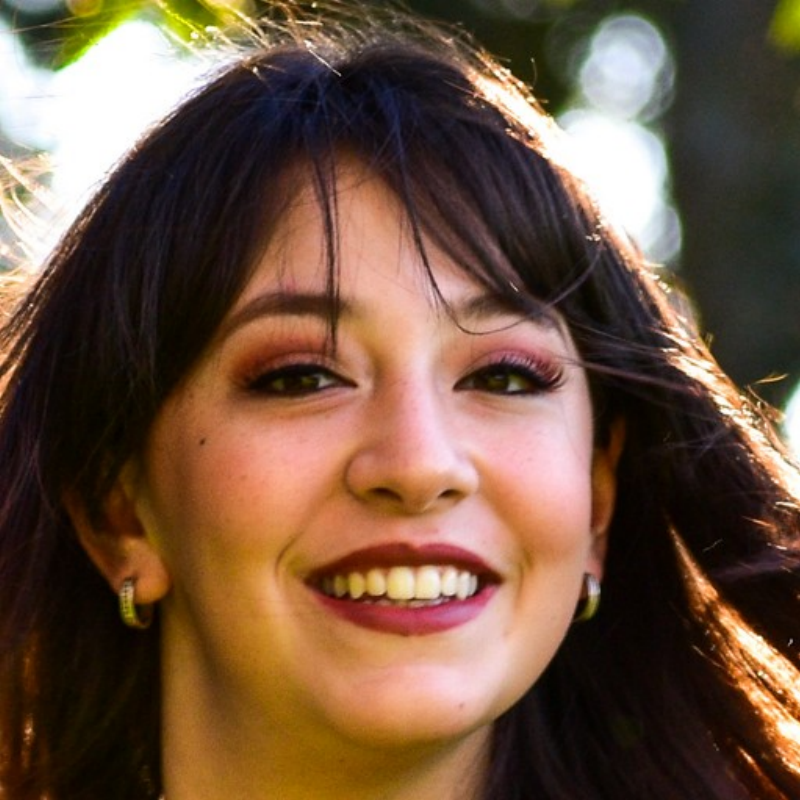} \\

            \sidecap{Restored}&\includegraphics[width=0.48\textwidth]{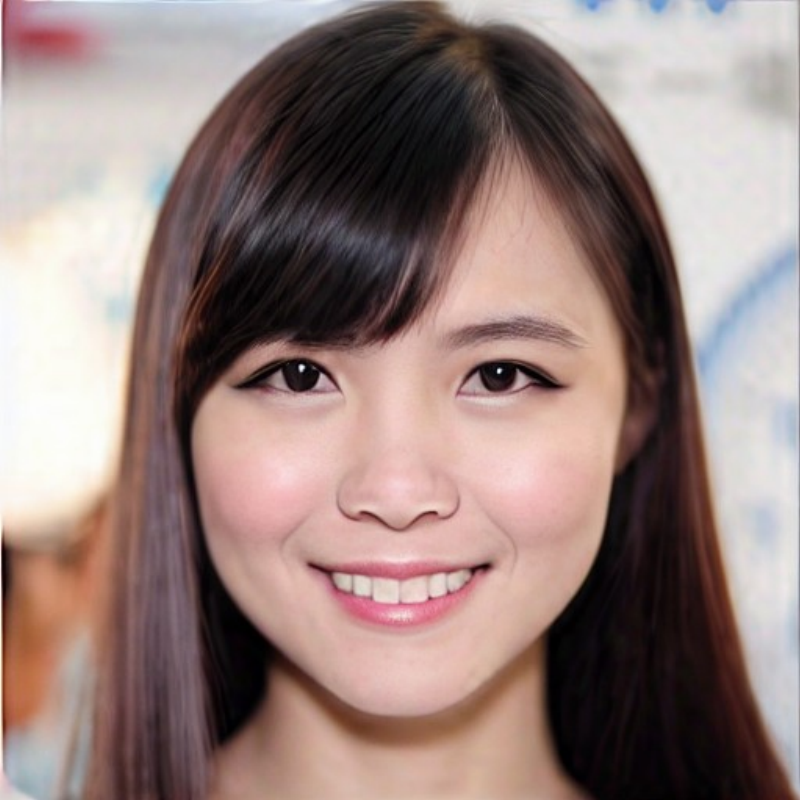} & 
            \includegraphics[width=0.48\textwidth]{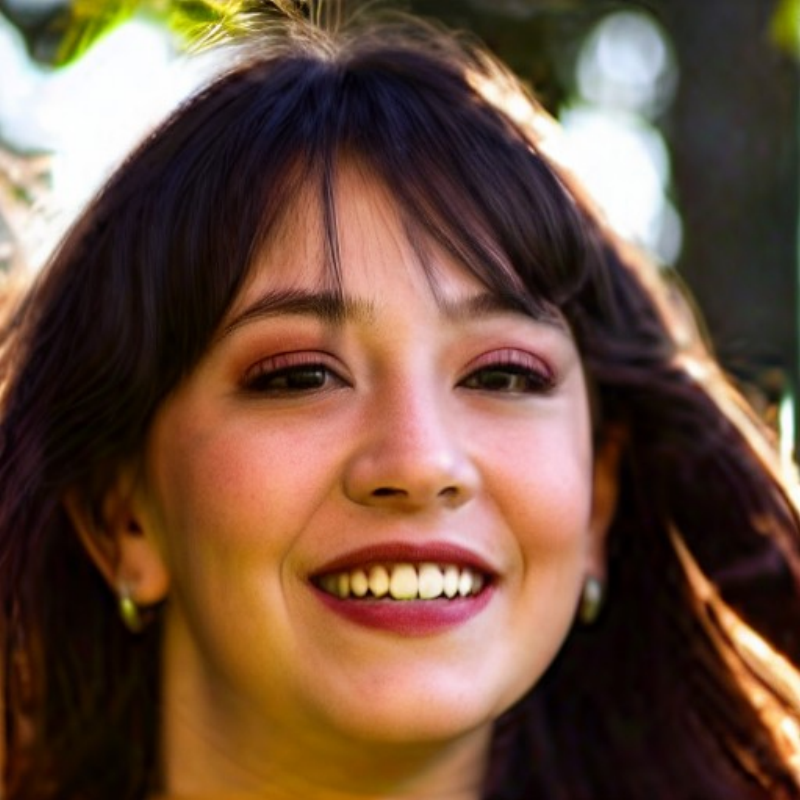} \\
        \end{tabular}
    \end{center}

    \caption{SR$\times8$ FFHQ-512 CWGF.}
    \label{fig:FFHQ_motion}
\end{figure*}

%% file: sections/appendix-relatedworks.tex
\subsection{Discussion on Related Works}
\label{sec:related-works-appendix}
\paragraph{Variational inference for inverse problems.} 
Several works cast posterior sampling with diffusion priors as variational inference, optimizing a tractable family to approximate $p(\vx\mid\vy)$. A common choice is a Gaussian approximate posterior in pixel space~\citep{mardanivariationalperspectivesolving2023}, later adapted to latent diffusion priors~\citep{zilbersteinrepulsivelatentscore2024}. Other approaches parametrise the variational family through denoising distributions or intermediate-time marginals along the diffusion trajectory~\citep{guthaModeseekingInverseProblems2025,janati2024divide,moufad2025variational,zhengFastRobustLikelihoodGuided2026}. Our method is most closely related to~\citet{wang2025gradientflowapproachsolving}, who employ a diffusion model for the prior flow; however, their approach relies on a high NFEs and requires backpropagation through the decoder.

\paragraph{LDMs for inverse problems.}
Applying LDMs to inverse problems is complicated by the mismatch between latent generation and pixel-space measurements. Many methods enforce data consistency by differentiating through the decoder~\citep{routsolvinglinearinverse2023,songsolvinginverseproblems2024,achituveInverseProblemSampling2025,wang2025gradientflowapproachsolving}, while gradient-free alternatives perform pixel-space correction steps (\textit{e.g.}\ minimising $\|\vy-A(\vx)\|$) followed by re-encoding~\citep{kim2025regularizationtextslatentdiffusion,Spagnoletti_2025_ICCV,shtanchaev2026gradientfree}. A complementary direction is to learn a surrogate forward operator directly in latent space~\citep{wangEMControlAddingConditional2025,raphaeliSILOSolvingInverse2025}. Our method provides a principled way to incorporate observation information into reconstruction using the encoder, circumventing the need for backpropagation through LDM components

\paragraph{Prompt optimisation in (L)DMs.}
Prompt or embedding optimisation has been used both to personalise concepts~\citep{gal2022imageworthwordpersonalizing,mokady2022nulltextinversioneditingreal,kim2025directional} and to improve test-time generation via text/null-embedding calibration~\citep{wenHardPromptsMade2023,mahajanPromptingHardHardly2024,wangDiscretePromptOptimization2024,kim2025testtimealignmenttexttoimagediffusion}. For inverse problems, P2L~\citep{Chung2023PrompttuningLD} and TReg~\citep{kim2025regularizationtextslatentdiffusion} optimize (positive/negative) text embeddings to better match the observation, while LATINO-PRO~\citep{Spagnoletti_2025_ICCV} adopts an MMLE perspective and performs online prompt optimisation via SOUL~\citep{debortoli2020efficientstochasticoptimisationunadjusted}. Our Euclidean-Wasserstein gradient flow shares a similar MMLE interpretation~\citep{kuntz23a}, but achieves this without requiring nested Langevin steps for sampling.